\documentclass[11pt]{article}
\pdfoutput=1
\usepackage[round,compress]{natbib}
\usepackage{ss}

\usepackage[utf8]{inputenc}
\usepackage[T1]{fontenc}

\usepackage[verbose=true,
letterpaper,
textheight=8.7in,
textwidth=6.4in,
headheight=12pt,
headsep=25pt,
footskip=30pt]{geometry}

\usepackage{mathtools}
\usepackage{booktabs}
\usepackage{graphicx}
\usepackage{amsmath,amssymb,amsfonts,amsthm,amsxtra,bm}
\usepackage{amsthm}
\usepackage{xspace}
\usepackage{enumerate}
\usepackage{hyperref}
\usepackage{url}
\usepackage{colortbl}
\usepackage{makecell,multirow}       
\usepackage{nicefrac}       
\usepackage{thmtools}  
\usepackage{xspace}
\usepackage{footnote, tablefootnote}
\usepackage{float}
\floatstyle{plaintop}
\restylefloat{table}
\usepackage{epstopdf}
\usepackage{latexsym}
\usepackage{algorithm, algorithmicx, algpseudocode}
\algnewcommand{\LineComment}[1]{\Statex \texttt{// #1}}
\usepackage{todonotes}
\usepackage{thmtools, thm-restate}
\usepackage{bbm}

\numberwithin{equation}{section}
\setcounter{section}{0} 


\newtheorem{theorem}{Theorem}
\newtheorem{lemma}{Lemma}

\newtheorem{proposition}{Proposition}

\newtheorem{assumption}{Assumption}
\newtheorem{claim}{Claim}

\def\endproof{\hfill\qedsymbol}

\newcommand{\corel}{\textsc{CoReL}}
\newcommand{\svt}{\textsc{TruncSV}}
\newcommand{\raw}{\textnormal{raw}}
\newcommand{\state}{\textnormal{state}}

\newcommand{\const}{\varrho}

\newcommand{\eps}{\varepsilon}

\newcommand{\E}{\mathbb{E}}
\newcommand{\I}{\mathbb{I}}

\newcommand{\R}{\mathbb{R}}
\newcommand{\s}{\mathbb{S}}

\newcommand{\tr}{\mathrm{tr}}

\newcommand{\Cov}{\mathrm{Cov}}


\DeclareMathOperator*{\argmin}{argmin}

\newcommand\ip[2]{\left\langle #1, #2 \right\rangle}
\newcommand{\ipc}[2]{\bigl\langle #1, #2 \bigr\rangle}

\newcommand{\given}{\;\vert\;}
\newcommand{\vectorize}{\mathrm{vec}}
\newcommand{\svec}{\mathrm{svec}}
\newcommand{\rank}{\mathrm{rank}}
\newcommand{\poly}{\mathrm{poly}}

\newcommand{\nlsum}{\sum\nolimits}

\newcommand{\bigo}{\mathcal{O}}

\newcommand{\gauss}{\mathcal{N}}
\newcommand{\diag}{\textnormal{diag}}

\newcommand{\cD}{\mathcal{D}}

\newcommand{\oSigma}{\overline{\Sigma}}
\newcommand{\ob}{\overline{b}}
\newcommand{\oc}{\overline{c}}
\newcommand{\ooe}{\overline{e}}
\newcommand{\of}{\overline{f}}
\newcommand{\oA}{\overline{A}}
\newcommand{\oB}{\overline{B}}

\newcommand{\oF}{\overline{F}}
\newcommand{\oQ}{\overline{Q}}

\newcommand{\spi}{\pi^{\ast}}
\newcommand{\sSigma}{\Sigma^{\ast}}
\newcommand{\sPhi}{\Phi^{\ast}}
\newcommand{\sA}{A^{\ast}}
\newcommand{\sB}{B^{\ast}}
\newcommand{\sC}{C^{\ast}}

\newcommand{\sF}{F^{\ast}}

\newcommand{\sK}{K^{\ast}}
\newcommand{\sL}{L^{\ast}}
\newcommand{\sM}{M^{\ast}}
\newcommand{\sN}{N^{\ast}}
\newcommand{\sP}{P^{\ast}}
\newcommand{\sQ}{Q^{\ast}}
\newcommand{\sR}{R^{\ast}}
\newcommand{\sS}{S^{\ast}}

\newcommand{\sX}{X^{\ast}}

\newcommand{\soA}{\oA^{\ast}}
\newcommand{\soB}{\oB^{\ast}}
\newcommand{\soF}{\oF^{\ast}}
\newcommand{\soQ}{\oQ^{\ast}}
\newcommand{\starb}{b^{\ast}}

\newcommand{\starf}{f^{\ast}}
\newcommand{\sof}{\of^{\ast}}
\newcommand{\sh}{h^{\ast}}
\newcommand{\su}{u^{\ast}}

\newcommand{\sx}{x^{\ast}}
\newcommand{\sy}{y^{\ast}}
\newcommand{\sz}{z^{\ast}}

\newcommand{\epi}{\hat{\pi}}
\newcommand{\eDelta}{\hat{\Delta}}
\newcommand{\ePhi}{\hat{\Phi}}

\newcommand{\eXi}{\hat{\Xi}}
\newcommand{\eb}{\hat{b}}

\newcommand{\ez}{\hat{z}}
\newcommand{\eA}{\hat{A}}
\newcommand{\eB}{\hat{B}}
\newcommand{\eJ}{\hat{J}}
\newcommand{\eK}{\hat{K}}

\newcommand{\eM}{\hat{M}}
\newcommand{\eN}{\hat{N}}
\newcommand{\eP}{\hat{P}}
\newcommand{\eQ}{\hat{Q}}

\newcommand{\tM}{\widetilde{M}}
\newcommand{\tN}{\widetilde{N}}
\newcommand{\tQ}{\widetilde{Q}}

\newcommand{\spA}{A^{\ast\prime}}
\newcommand{\spB}{B^{\ast\prime}}
\newcommand{\spC}{C^{\ast\prime}}
\newcommand{\spL}{L^{\ast\prime}}
\newcommand{\spM}{M^{\ast\prime}}

\newcommand{\spQ}{Q^{\ast\prime}}

\newcommand{\spz}{z^{\ast\prime}}

\makeatletter
\newcommand{\vast}{\bBigg@{3}}
\newcommand{\Vast}{\bBigg@{3.5}}
\makeatother

\sloppy

\setlength{\parindent}{0pt}
\setlength{\parskip}{5.5pt}

\begin{document}

\title{Cost-Driven Representation Learning for \\ Linear Quadratic Gaussian Control: Part I}
\author{%
    \name Yi Tian \email{yitian@mit.edu}\\
    \addr{Massachusetts Institute of Technology}\\
    \name Kaiqing Zhang \email{kaiqing@umd.edu}\\
    \addr{University of Maryland, College Park}\\
    \name Russ Tedrake \email{russt@mit.edu}\\
    \addr{Massachusetts Institute of Technology}\\
    \name Suvrit Sra \email{s.sra@tum.de}\\
    \addr{Technical University Munich}
}
\maketitle


\begin{abstract}
    We study the task of learning state representations from potentially high-dimensional observations, with the goal of controlling an unknown partially observable system. 
    We pursue a \emph{cost-driven} approach, where a dynamic model in some latent state space is learned by predicting the costs without predicting the observations or actions.   
    In particular, we focus on an intuitive cost-driven state representation learning method for solving Linear Quadratic Gaussian (LQG) control, one of the most fundamental partially observable control problems.
    As our main results, we establish finite-sample guarantees of finding a near-optimal state representation function and a near-optimal controller using the directly learned latent model, for finite-horizon time-varying LQG control problems. To the best of our knowledge, despite various empirical successes, finite-sample guarantees of such a cost-driven approach remain elusive.         
    Our result  underscores the value of predicting multi-step costs, an idea that is key to our theory, and notably also an idea that is known to be empirically valuable for learning state representations.  
    A second part of this work, that is to appear as Part II, addresses the infinite-horizon linear time-invariant setting; it also extends the results to an approach that \emph{implicitly} learns the latent dynamics, inspired by the recent empirical breakthrough of MuZero in model-based reinforcement learning.
\end{abstract}

\section{Introduction}

We consider state representation learning for control in partially observable systems, 
inspired by the recent successes of \emph{control from pixels}~\citep{hafner2019learning, hafner2019dream}. Control from pixels is an everyday task for human beings, but it remains challenging for learning agents. Methods to achieve it generally fall into two main categories: \emph{model-free} and \emph{model-based} ones. Model-free methods directly learn a visuomotor policy, also known as direct reinforcement learning (RL)~\citep{sutton2018reinforcement}.
On the other hand, model-based methods, also known as indirect RL~\citep{sutton2018reinforcement}, attempt to learn a \emph{latent model} that is a compact representation of the system, and to synthesize a policy in the latent model. Compared with model-free methods, model-based ones facilitate generalization across tasks and enable efficient planning~\citep{hafner2020mastering}, and are sometimes more sample efficient~\citep{tu2019gap, sun2019model, zhang2019solar} than the model-free ones.  

In latent-model-based control, the state of the latent model is also referred to as a \emph{state representation} in the deep RL literature, and the mapping from an observed history to a latent state is referred to as the (state) representation function. 
\emph{Reconstructing the observation} often serves as a supervision for representation learning for control in the empirical RL literature~\citep{hafner2019learning, hafner2019dream, hafner2020mastering, fu2021learning, wang2022denoised}. This is in sharp contrast to model-free methods, where the policy improvement step is completely \emph{cost-driven}. 
Reconstructing observations provides a rich supervision signal for learning a task-agnostic world model, but they are high-dimensional and noisy, so the reconstruction requires an expressive reconstruction function; latent states learned by reconstruction contain irrelevant information for control, which can distract RL algorithms~\citep{zhang2020learning, fu2021learning, wang2022denoised}. This is especially the case for practical visuomotor control tasks, e.g., robotic manipulation and self-driving cars, where the visual images contain predominately task-irrelevant objects and backgrounds. 

Various empirical attempts~\citep{schrittwieser2020mastering, zhang2020learning, okada2021dreaming, deng2021dreamerpro, yang2022discrete} have been made to bypass observation reconstruction. Apart from observation, the interaction involves two other variables: actions (control inputs) and costs. Inverse model methods~\citep{lamb2022guaranteed} reconstruct actions; while other methods rely on costs. We argue that since neither the reconstruction function nor the inverse model is used for policy learning, cost-driven state representation learning is the most \emph{direct} one, in that costs are directly relevant for control purposes. 
In this work, we aim to examine the soundness of this methodology in linear quadratic Gaussian (LQG) control, one of the most fundamental partially observable control models.

Parallel to the empirical advances of learning for control from pixels, partially observable linear systems has been extensively studied in the context of learning for dynamic control~\citep{oymak2019non, simchowitz2020improper, lale2020logarithmic, lale2021adaptive, zheng2021sample, minasyan2021online, umenberger2022globally}. 
In this context, the state representation function is more formally referred to as a \emph{filter}, the optimal one being the Kalman filter. Most existing \emph{model-based} learning approaches for LQG control focus on the linear time-invariant (LTI) case, and are based on the idea of \emph{learning Markov parameters}~\citep{ljung1998system}, the mapping from control inputs to observations. 
Hence, they need to {predict observations} by definition.
Motivated by the empirical successes in control from pixels, we take a different, cost-driven route, in hope of avoiding reconstructing observations or control inputs.

We focus on finite-horizon time-varying LQG control and address the following question:
\begin{center}
    \emph{Can cost-driven state representation learning provably solve LQG control?}
\end{center}
This work answers the question in the affirmative, by establishing finite-sample guarantees for a cost-driven state representation learning method.
We address the finite-horizon linear time-varying (LTV) setting in this Part I, and 
will move on to the infinite-horizon linear time-invariant (LTI) setting with additional technical challenges in Part II of the work.

\vspace{3pt}
\noindent \textbf{Challenges \& Our techniques.} 
To establish finite-sample guarantees, a major technical challenge is to deal with the \emph{quadratic regression} problem in cost prediction, arising from the inherent quadratic form of the LQG cost. Directly solving 
for the state representation function involves \emph{quartic} optimization; instead, we propose to solve a quadratic regression problem, followed by low-rank approximate factorization. The quadratic regression problem also appears in identifying the cost matrices, involving the concentration for random variables that are fourth powers of Gaussians. 
Our techniques to address these challenges may be of independent interest.

Moreover, the first $\ell$-step \emph{latent} states may not be adequately \emph{excited} (with full-rank covariance), which 
results in the identification of the latent model only in \emph{partial directions}. This poses a significant challenge for certifying the performance of the learned controller, as the learned latent model from which we synthesize the controller may be neither stable nor controllable. We overcome this challenge by analyzing state covariance mismatch using induction, 
showing that identifying only the \emph{relevant directions} suffices for learning a \mbox{near-optimal controller}.
This fact is reflected in the dependence on $\ell$ in the statement of Theorem~\ref{thm:main}. 

Lastly, the learned latent states and the errors in the learned latent states are \emph{correlated} as they are both functions of the same observed trajectory. This challenge arises both in analyzing latent model identification and in certifying the performance of the learned controller. We tackle this challenge by modeling the errors as general correlated perturbations whose magnitudes are controlled by the errors in the learned state representation function.

\vspace{3pt}
\noindent\textbf{Implications.} 
For practitioners, 
one takeaway 
is the benefit of predicting \emph{multi-step cumulative} costs in cost-driven state representation learning. Whereas the cost at a single time step may not be revealing enough of the latent state, the cumulative cost across multiple steps can be. This is an intuitive idea for the control community, given the multi-step nature in the classical definitions of controllability and observability. Its effectiveness has also been empirically observed in MuZero~\citep{schrittwieser2020mastering} in state representation learning for control, and our work can be viewed as a formal understanding of it in the LQG  setting.

\vspace{3pt}
\noindent\textbf{Notation.}  
We use $0$ (resp. $1$) to denote either the scalar or a matrix consisting of all zeros (resp. all ones); we use $I$ to denote an identity matrix. 
The dimension, when emphasized, is specified in subscripts, e.g., $0_{d_x \times d_x}, 1_{d_x}, I_{d_x}$. 
Let $\I_S$ denote the indicator function for set $S$ and $\I_S(A)$ apply to matrix $A$ elementwise. 
For some positive semidefinite $P$, we define $\|v\|_P := (v^{\top} P v)^{1/2}$. 
Semicolon ``$;$'' denotes stacking vectors or matrices vertically.
For a collection of $d$-dimensional vectors $(v_t)_{t=i}^{j}$, let $v_{i:j} := [v_i; v_{i+1}; \ldots; v_j] \in \R^{d(j-i+1)}$ denote the concatenation along the column.
For random variable $\eta$, let $\|\eta\|_{\psi_{\beta}}$ denote its $\beta$-sub-Weibull norm, a special case of Orlicz norms~\citep{zhang2022sharper}, with $\beta = 1, 2$ corresponding to subexponential and sub-Gaussian norms. 
For matrix $A$, let $\sigma_i(A), \sigma_{\min}(A), \sigma_{\min}^{+}(A), \sigma_{\max}(A)$ denote its $i$th largest, minimum, minimum positive, maximum singular values, respectively.
$\|A\|_{2}, \|A\|_{F}, \|A\|_{\ast}$ denote the operator (induced by vector $2$-norms), Frobenius, nuclear norms of matrix $A$, respectively. $\ip{\cdot}{\cdot}_F$ denotes the Frobenius inner product between matrices.
For square matrix $A$, let $\lambda_{\min}(A)$ be its minimum eigenvalue.
The Kronecker, symmetric Kronecker, and Hadamard products between matrices are denoted by ``$\otimes$'', ``$\otimes_s$'' and ``$\odot$'', respectively. 
$\vectorize(\cdot)$ and $\svec(\cdot)$ denote flattening a matrix and a symmetric matrix by stacking their columns; $\svec(\cdot)$ does not repeat the off-diagonal elements, but scales them by $\sqrt{2}$~\citep{schacke2004kronecker}. 
Let $\diag(\cdot)$ denote the block diagonal matrix formed by the matrices inside the parentheses.
For 
$\Sigma_{i = a}^{b}$, we define the sum to be zero if the lower index $a$ is greater than the upper index $b$.

\vspace*{-8pt}
\section{Problem setup}
\label{sec:prelim}

We study a partially observable linear 
dynamical system 
\begin{align}  \label{eqn:po-ltv}
    x_{t+1} = \sA_t x_t + \sB_t u_t + w_t, \quad 
    y_t = \sC_t x_t + v_t,
\end{align}
for $t = 0, 1, \ldots, T-1$ and $y_T = \sC_T x_T + v_T$.
For all $t \ge 0$, we have the notation of state $x_t \in \R^{d_x}$, observation $y_t \in \R^{d_y}$, and control input $u_t \in \R^{d_u}$. Process noises $(w_t)_{t=0}^{T-1}$ and observation noises $(v_t)_{t=0}^T$ are i.i.d. sampled from $\gauss(0, \Sigma_{w_t})$ and $\gauss(0, \Sigma_{v_t})$, respectively. Let initial state $x_0$ be sampled from $\gauss(0, \Sigma_0)$.

Let $\Phi_{t, t_0} = \sA_{t-1} \sA_{t-2} \cdots \sA_{t_0}$ for $t > t_0$ and $\Phi_{t, t } = I$. Then $x_t = \Phi_{t, t_0} x_{t_0} + \sum_{\tau = t_0}^{t-1} \Phi_{t, \tau + 1} w_{\tau}$ under zero control input. To ensure the state and the cumulative noise do not grow with time, we make the following uniform exponential stability assumption.
\begin{assumption}[Uniform exponential stability]  \label{asp:stability}
    The system is uniformly exponentially stable, i.e., there exists $\alpha > 0, \rho \in (0, 1)$ such that for any $0\le t_0 < t \le T$, $\|\Phi_{t, t_0}\|_2 \le \alpha \rho^{t - t_0}$.
\end{assumption} 

Assumption~\ref{asp:stability} is standard in controlling LTV systems~\citep{zhou2017asymptotic,hu2017exponential,minasyan2021online}, satisfied by a stable LTI system.
It essentially says that zero control is a stabilizing controller, and can be potentially relaxed to the assumption of \emph{being given a stabilizing controller} as in~\citep{lale2020logarithmic, simchowitz2020improper}, where one can excite the system using the stabilizing controller plus Gaussian random noises.

Define the $\ell$-step controllability matrix
\begin{align*}
    \Phi_{t, \ell}^{c} := [\sB_{t}, \sA_{t} \sB_{t-1}, \ldots, \sA_{t} \sA_{t-1} \cdots \sA_{t-\ell+2} \sB_{t-\ell+1}] \in \R^{d_x \times \ell d_u}
\end{align*}
for $\ell-1 \le t \le T-1$, which reduces to the standard controllability matrix $[B, \ldots, A^{\ell-1} B]$ in the LTI setting. 
We make the following controllability assumption.

\begin{assumption}[Controllability]  \label{asp:ctrl}
    For all $\ell-1\le t\le T-1$, $\rank(\Phi_{t, \ell}^c) = d_x$, $\sigma_{\min}(\Phi_{t, \ell}^c) \ge \nu>0$.
\end{assumption}

Under zero noise, we have 
\begin{align*}
    x_{t+\ell} = \Phi_{t+\ell, t} x_t + \Phi_{t+\ell-1, \ell}^c [u_{t+\ell-1}; \ldots; u_t],
\end{align*}
so Assumption~\ref{asp:ctrl} ensures that from any state $x$, there exist control inputs that drive the state to $0$ in $\ell$ steps, and $\nu$ ensures that the equation leading to them is well conditioned. We do not assume controllability for $0\le t < \ell-1$, since we do not want to impose the constraint that $d_u > d_x$. This turns out to present a significant challenge for analyzing state representation function learning, latent model identification, and the performance of the overall policy, resulting in the separation at step $\ell$ in the sample complexity guarantees (see Theorem \ref{thm:main}).

The quadratic cost functions are given by 
\begin{equation}  \label{eqn:q-cost}
    c_t(x, u) = \|x\|_{\sQ_t}^2 + \|u\|_{\sR_t}^2, \quad 0\le t\le T - 1, \quad
    c_T(x) = \|x\|_{\sQ_T}^2,    
\end{equation}
where $(\sQ_t)_{t=0}^{T}$ are positive semidefinite matrices and $(\sR_t)_{t=0}^{T-1}$ are positive definite matrices. Sometimes the cost is defined as a function of the  observation $y$. Since the quadratic form $y^{\top} \sQ_t y = x^{\top} (\sC_t)^{\top} \sQ_t \sC_t x$, our analysis still applies if the assumptions on $(\sQ_t)_{t=0}^{T}$ hold for $((\sC_t)^{\top} \sQ_t \sC_t)_{t=0}^{T}$ instead. 

The observability assumptions on 
$(A, C)$ and $(A, Q^{1/2})$ are standard  in controlling LTI systems. To differentiate from the former, we call the latter \emph{cost observability}, since it implies the states are observable through costs. Whereas Markov-parameter-based approaches need to assume $(A, C)$ observability to identify the system, our cost-driven  approach does not. 
Here we deal with the more difficult problem of having only the scalar cost as the supervision signal (instead of the concatenation of all observations, as in Markov-parameter-based ones). Nevertheless, the notion of cost observability is still important for our approach, formally defined as follows.
\begin{assumption}[Cost observability]  \label{asp:cost-obs}
    For all $0\le t \le \ell - 1$, $\sQ_t \succcurlyeq \mu^2 I$.
    For all $\ell \le t\le T$, there exists $m > 0$ such that the cost observability Gram matrix~\citep{kailath1980linear}
    \begin{align*}
        &\nlsum_{\tau = t}^{t+k-1} \Phi_{\tau, t}^{\top} \sQ_{\tau} \Phi_{\tau, t}
        \succcurlyeq \mu^2 I, 
    \end{align*}
    where $k = m \wedge (T - t + 1)$.
\end{assumption}

This assumption ensures that without noises, if we start with a nonzero state, the cumulative cost becomes positive in $m$ steps. The special requirement for $0\le t\le \ell-1$ results from the difficulty in lacking controllability in these time steps.
The following is a regularity assumption on system parameters.

\begin{assumption}  \label{asp:bounded}
    $(\lambda_{\min}(\Sigma_{v_t}))_{t=0}^{T}$ and $(\lambda_{\min}(\sR_t))_{t=0}^{T-1}$ are uniformly lower bounded; that is, they are all $\Omega(1)$. 
    The operator norms of all matrices in the problem definition and $(\sM_t)_{t=0}^{T}$ to be defined in~\S\ref{sec:latent-model} are uniformly upper bounded, including $(\sA_t, \sB_t, \sR_t, \Sigma_{w_t})_{t=0}^{T-1}$, $(\sC_t, \sQ_t, \Sigma_{v_t})_{t=0}^T$; that is, they are all $\bigo(1)$. 
\end{assumption}

Along with Assumption~\ref{asp:stability}, this assumption ensures that under control $u_t \sim \gauss(0, \sigma_u^2 I)$ for all $t \ge 0$ with $\sigma_u = \bigo(1)$, the covariance matrices $\Cov(x_t)$ and $\Cov(y_t)$ have operator norms bounded by $\bigo(1)$ for all $t \ge 0$. Moreover, $(\sS_t, \sL_t, \sP_t)_{t=0}^{T}$ and $(\sK_t)_{t=0}^{T-1}$, to be defined shortly in~\eqref{eqn:L-riccati} to~\eqref{eqn:p-riccati} in the optimal solution, have $\bigo(1)$ operator norms.

Let $h_t := [y_{0:t}; u_{0:(t-1)}] \in \R^{(t+1)d_y + t d_u}$ denote the available history before deciding control $u_t$ for $t\ge 1$ and define $h_0 := y_0$.
A policy $\pi = (\pi_t: h_t \mapsto u_t)_{t=0}^{T-1}$ determines at time $t$ a control input $u_t$ based on history $h_t$.
With a slight abuse of notation, let $c_t := c_t(x_t, u_t)$ for $0\le t\le T-1$ and $c_T := c_T(x_T)$ denote the cost at each time step. Then, $J(\pi) := \E^{\pi}[\sum_{t=0}^{T} c_t]$ is the expected cumulative cost under policy $\pi$, where the expectation is taken over the randomness in the process noises, observation noises, and controls (if the policy $\pi$ is stochastic). The objective of LQG control is to find a policy $\pi$ such that $J(\pi)$ is minimized.

If the system parameters $((\sA_t, \sB_t, \sR_t)_{t=0}^{T-1}, (\sC_t, \sQ_t)_{t=0}^{T})$ are known, optimal control is obtained by combining the Kalman filter $\sz_0 = \sL_0 y_0$, 
\begin{align*}
    \sz_{t+1} = \sA_t \sz_t + \sB_t u_t + \sL_{t+1} ( y_{t+1} - \sC_{t+1} (\sA_{t} \sz_t + \sB_t u_t) )
\end{align*}
for $0\le t \le T-1$, with the optimal feedback control gains of the linear quadratic regulator (LQR) $(\sK_t)_{t=0}^{T-1}$ such that $\su_t = \sK_t\sz_t$, where $(\sL_t)_{t=0}^{T}$ are the Kalman gains;  
this is known as the \emph{separation principle}~\citep{aastrom2012introduction}. The Kalman gains and optimal feedback control gains are given by 
\begin{gather}
    \sL_t = \sS_t (\sC_t)^{\top} (\sC_t \sS_t (\sC_t)^{\top} + \Sigma_{v_t})^{-1}, \label{eqn:L-riccati} \\
    \sK_t = -((\sB_t)^{\top} \sP_{t+1} \sB_t + \sR_t)^{-1} (\sB_t)^{\top} \sP_{t+1} \sA_t,  \label{eqn:K-riccati}
\end{gather}
where $\sS_t$ and $\sP_t$ are determined by their corresponding Riccati difference equations (RDEs):
\begin{gather}
    \begin{aligned}
        \sS_{t+1} =\;& \sA_t (\sS_t - \sS_t (\sC_t)^{\top} (\sC_t \sS_t (\sC_t)^{\top} + \Sigma_{v_t})^{-1} \sC_t \sS_t) (\sA_t)^{\top} + \Sigma_{w_t},
    \end{aligned}
    \label{eqn:s-riccati} \\
    \begin{aligned}
        \sP_t =\;& (\sA_t)^{\top} (\sP_{t+1} - \sP_{t+1} \sB_t ((\sB_t)^{\top} \sP_{t+1} \sB_t + \sR_t)^{-1} (\sB_t)^{\top} \sP_{t+1} ) \sA_t + \sQ_t, 
    \end{aligned}
    \label{eqn:p-riccati}
\end{gather}
with $\sS_0 = \Sigma_0$ and $\sP_{T} = \sQ_T$.

We consider data-driven control in a partially observable LTV system~\eqref{eqn:po-ltv} with unknown cost matrices $(\sQ_t)_{t=0}^T$. For simplicity, we assume $(\sR_t)_{t=0}^{T}$ is known, though our approaches can be readily generalized to the case where they are unknown; one can identify them in the quadratic regression~\eqref{eqn:warm-up}.

\vspace*{-10pt}
\subsection{Latent model of finite-horizon time-varying LQG}
\label{sec:latent-model}

Under the Kalman filter, the observation prediction error $i_{t+1} := y_{t+1} - \sC_{t+1}(\sA_t \sz_t + \sB_t u_t)$ is called an \emph{innovation}.
It is known that $i_t$ is independent of history $h_t$ and $(i_t)_{t=1}^{T}$ are independent~\citep{bertsekas2012dynamic}.
Now we are ready to present the following proposition that represents the system in terms of the state estimates by the Kalman filter, which we shall refer to as the \emph{latent model}.

\begin{proposition}  \label{prp:state-est-lds}
    Let $(\sz_t)_{t=0}^{T}$ be state estimates given by the Kalman filter. Then, 
    \begin{align*}
        \sz_{t+1} = \sA_t \sz_t + \sB_t u_t + \sL_{t+1} i_{t+1},
    \end{align*}
    where $\sL_{t+1} i_{t+1}$ is independent of $\sz_t$ and $u_t$, i.e.,  the state estimates follow the same linear dynamics as the underlying state, with noises $\sL_{t+1} i_{t+1}$.
    The cost at step $t$ can then be reformulated as functions of the state estimates by 
    \begin{align*}
        c_t = \|\sz_t\|_{\sQ_t}^2 + \|u_t\|_{\sR_t}^2 + b_t + \gamma_t + \eta_t,
    \end{align*}
    where $b_t > 0$ is a problem-dependent constant, and $\gamma_t = \|x_t - \sz_t\|_{\sQ_t}^2 - b_t$, $\eta_t = 2\ipc{\sz_t}{x_t - \sz_t}_{\sQ_t}$ are both zero-mean subexponential random variables. Under Assumptions~\ref{asp:stability} and~\ref{asp:bounded}, $b_t = \bigo(1)$ and $\|\gamma_t\|_{\psi_1} = \bigo(d_x^{1/2})$; moreover, if control $u_t \sim \gauss(0, \sigma_u^2 I)$ for $0\le t\le T$, then $\|\eta_t\|_{\psi_1} = \bigo(d_x^{1/2})$.
\end{proposition}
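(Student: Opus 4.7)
The plan is to prove the two claims of the proposition separately: first the latent dynamics, then the cost decomposition with the subexponential noise bounds. Both rely on standard Kalman filter facts (innovations are a white noise process orthogonal to past history, and the estimation error is independent of the estimate in the Gaussian case), combined with Hanson–Wright-style concentration for Gaussian quadratic forms.

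For the first claim, I would substitute the definition $i_{t+1} = y_{t+1} - \sC_{t+1}(\sA_t \sz_t + \sB_t u_t)$ directly into the Kalman filter update written in the problem setup; this gives the stated recursion $\sz_{t+1} = \sA_t \sz_t + \sB_t u_t + \sL_{t+1} i_{t+1}$ by inspection. For the independence of $\sL_{t+1} i_{t+1}$ from $(\sz_t, u_t)$, I would invoke the classical fact that $i_{t+1}$ is orthogonal to the past history $h_t$ (hence to $\sz_t$, which is measurable w.r.t.\ $h_t$) and is independent of $u_t$ since $u_t$ is either a function of $h_t$ or, in the exploration phase, an exogenous Gaussian drawn independently of the noises that generate $i_{t+1}$. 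Joint Gaussianity of everything then upgrades orthogonality to independence.

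For the cost reformulation, I would set $e_t := x_t - \sz_t$ and expand
\begin{align*}
\|x_t\|_{\sQ_t}^2 = \|\sz_t + e_t\|_{\sQ_t}^2 = \|\sz_t\|_{\sQ_t}^2 + 2\ip{\sz_t}{e_t}_{\sQ_t} + \|e_t\|_{\sQ_t}^2,
\end{align*}
identifying $b_t = \E[\|e_t\|_{\sQ_t}^2] = \tr(\sQ_t \Sigma_{e_t})$, $\gamma_t = \|e_t\|_{\sQ_t}^2 - b_t$, and $\eta_t$ as the cross term. Nonnegativity and the $\bigo(1)$ bound on $b_t$ come from $\sQ_t \succcurlyeq 0$ and the boundedness of both $\|\sQ_t\|_2$ and the Kalman posterior covariance $\Sigma_{e_t}$ (which satisfies a Riccati recursion driven by the bounded quantities in Assumption~\ref{asp:bounded}, and stays uniformly bounded under Assumption~\ref{asp:stability}). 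For $\|\gamma_t\|_{\psi_1}$, the key observation is that $e_t$ is a linear function of $x_0$ and the process/observation noises only (the control contributions to $x_t$ and to $\sz_t$ cancel exactly by linearity of the filter), so $e_t \sim \gauss(0, \Sigma_{e_t})$ regardless of the control distribution; Hanson--Wright then yields $\|\gamma_t\|_{\psi_1} \le C\|\Sigma_{e_t}^{1/2} \sQ_t \Sigma_{e_t}^{1/2}\|_F = \bigo(d_x^{1/2})$.

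The step I expect to be trickiest is the bound on $\|\eta_t\|_{\psi_1}$, since $\eta_t$ is a \emph{bilinear} rather than quadratic form and its concentration depends on the distribution of $\sz_t$. Under $u_t \sim \gauss(0, \sigma_u^2 I)$ independent across $t$, the joint process $(x_{0:T}, y_{0:T}, u_{0:T-1})$ is jointly Gaussian, hence so is $\sz_t$ (a linear function of $y_{0:t}$ and $u_{0:t-1}$). The orthogonality principle of the Kalman filter gives $\E[e_t h_t^\top] = 0$, and since $\sz_t$ is $h_t$-measurable and $(\sz_t, e_t)$ are jointly Gaussian, this upgrades to $\sz_t \perp e_t$. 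I would then stack $g := [\sz_t; e_t] \sim \gauss(0, \diag(\Sigma_{\sz_t}, \Sigma_{e_t}))$ and write $\eta_t = g^\top M g$ with $M$ the symmetric off-diagonal block matrix built from $\sQ_t$; Hanson--Wright gives $\|\eta_t\|_{\psi_1} \le C \|\diag(\Sigma_{\sz_t}, \Sigma_{e_t})^{1/2} M \diag(\Sigma_{\sz_t}, \Sigma_{e_t})^{1/2}\|_F \lesssim \|\Sigma_{\sz_t}\|_2^{1/2} \|\Sigma_{e_t}\|_2^{1/2} \|\sQ_t\|_F = \bigo(d_x^{1/2})$, using that $\|\Sigma_{\sz_t}\|_2 = \bigo(1)$ (uniformly in $t$) follows from Assumption~\ref{asp:stability} and the boundedness of $\sA_t, \sB_t, \sL_t$ applied to the $\sz_t$-recursion with the bounded exploration input. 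The obstruction here, which I expect to need the most care, is verifying that $\|\Sigma_{\sz_t}\|_2$ and $\|\Sigma_{e_t}\|_2$ remain uniformly $\bigo(1)$ in $t$ even in the finite-horizon time-varying regime; this is where Assumptions~\ref{asp:stability} and~\ref{asp:bounded} do the essential work via a telescoping bound on the $\sz$- and $e$-recursions.
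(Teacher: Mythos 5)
Your proposal is correct and follows essentially the same route as the paper's proof: the same substitution for the filter recursion, the same independence argument via measurability of $\sz_t$ with respect to the history plus the whiteness of the innovations, the same quadratic expansion of $\|x_t\|_{\sQ_t}^2$ with $b_t = \E[\|x_t-\sz_t\|_{\sQ_t}^2]$, and the same appeal to boundedness of the filter Riccati solution under Assumptions~\ref{asp:stability} and~\ref{asp:bounded}. The only cosmetic difference is that you invoke Hanson--Wright on the stacked Gaussian $[\sz_t; x_t-\sz_t]$, whereas the paper uses its elementary Lemma~\ref{lem:subexp} (Cauchy--Schwarz plus the product rule for subgaussian norms, which does not even need the independence of $\sz_t$ and the error); both yield the same $\bigo(d_x^{1/2})$ bound.
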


\begin{proof} 
    By the property of the Kalman filter, $\sz_t = \E[x_t \given y_{0:t}, u_{0:(t-1)}]$ is a function of the past history $(y_{0:t}, u_{0:(t-1)})$. Action $u_t$ is a function of the past history $(y_{0:t}, u_{0:(t-1)})$ and may contain noise independent of all other random variables.
    Innovation $i_{t+1} = \sC_{t+1} (\sA_{t} (x_t - \sz_t) + w_t) + v_{t+1}$ is independent of the past history $(y_{0:t}, u_{0:(t-1)})$; hence, it is independent of $\sz_t$ and $u_t$. For the cost function,
    \vspace*{-3pt}
    \begin{align*}
        c_t = \|\sz_t\|_{\sQ_t}^2 + \|u_t\|_{\sR_t}^2 + \|x_t - \sz_t\|_{\sQ_t}^2 + 2 \ipc{\sz_t}{x_t - \sz_t}_{\sQ_t}. 
    \end{align*}
    Let $b_t = \E[\|x_t - \sz_t\|_{\sQ_t}^2]$ be a constant that depends on system parameters $(\sA_t, \sB_t, \Sigma_{w_t})_{t=0}^{T-1}$, $(\sC_t, \Sigma_{v_t})_{t=0}^{T}$ and $\Sigma_0$. 
    Then, random variable $\gamma_t := \|x_t - \sz_t\|_{\sQ_t}^2 - b_t$ has zero mean. Since $(x_t - \sz_t)$ is Gaussian, its squared norm is subexponential. Since $\sz_t$ and $(x_t - \sz_t)$ are independent zero-mean Gaussian random vectors~\citep{bertsekas2012dynamic}, their inner product, and hence $\eta_t$, are zero-mean subexponential random variables.

    If the system is uniformly exponentially stable (Assumption~\ref{asp:stability}) and the system parameters are regular~(Assumption \ref{asp:bounded}), then $(\sS_{t})_{t=0}^{T}$ given by RDE~\eqref{eqn:s-riccati} has a bounded operator norm determined by system parameters $(\sA_t, \sB_t, \sC_t, \Sigma_{w_t})_{t=0}^{T-1}$, $(\Sigma_{v_t})_{t=0}^{T}$ and $\Sigma_0$~\citep{zhang2021boundedness}.
    Since $\sS_t = \Cov(x_t - \sz_t)$, $\|\gamma_t\|_{\psi_1} = \bigo(d_x^{1/2})$ by Lemma~\ref{lem:subexp}. 
    By Assumption~\ref{asp:stability}, if we apply zero control to the system, then $\|\Cov(\sz_t)\|_2 = \bigo(1)$. By Lemma~\ref{lem:subexp}, $\eta_t = 2\ip{\sz_t}{x_t - \sz_t}_{\sQ_t}$ satisfies $\|\eta_t\|_{\psi_1} = \bigo(d_x^{1/2})$.
\end{proof}

Proposition~\ref{prp:state-est-lds} states that: 1) the dynamics of the state estimates produced by the Kalman filter remains the same as the original system up to noises, determined by $(\sA_t, \sB_t)_{t=0}^{T-1}$; 2) the costs (of the latent model) are still determined by $(\sQ_t)_{t=0}^{T}$ and $(\sR_t)_{t=0}^{T-1}$, up to constants and noises. Hence, a latent model can be parameterized by $((A_t, B_t)_{t=0}^{T-1}, (Q_t)_{t=0}^{T})$ (recall that we assume $(\sR_t)_{t=0}^T$ is known for convenience). Note that observation matrices $(\sC_t)_{t=0}^{T}$ are \emph{not} 
involved.

Now let us take a closer look at the state representation function. 
The Kalman filter can be written as $\sz_{t+1} = \soA_t \sz_t + \soB_t u_t + \sL_{t+1} y_{t+1}$, where $\soA_t = (I - \sL_{t+1} \sC_{t+1}) \sA_t$ and $\soB_t = (I - \sL_{t+1} \sC_{t+1}) \sB_t$.
For $0\le t\le T$, unrolling the \mbox{recursion yields}
\begin{align*}
    \sz_t &= \soA_{t-1} \sz_{t-1} + \soB_{t-1} u_{t-1} + \sL_t y_t \\
    &= [\soA_{t-1} \soA_{t-2} \cdots \soA_0 \sL_0, \ldots, \sL_{t}] [y_{0}; \ldots; y_{t}] 
    + [\soA_{t-1} \soA_{t-2} \cdots \soA_1 \soB_0, \ldots, \soB_{t-1}] [u_{0}; \ldots; u_{t-1}] \\
    &=: \sM_{t} [y_{0:t}; u_{0:(t-1)}],
\end{align*}
where $\sM_t \in \R^{d_x \times ((t+1) d_y + t d_u)}$.
This means the optimal state representation function is \emph{linear} in the history of observations and controls. A state representation function can then be parameterized by matrices $(M_t)_{t=0}^{T}$, and the latent state at step $t$ is given by $z_t = M_t h_t$.

Overall, a policy $\pi$ is a combination of state representation function $(M_t)_{t=0}^{T-1}$ ($M_T$ is not needed) and feedback gain $(K_t)_{t=0}^{T-1}$ in the latent model, so we write $\pi = (M_t, K_t)_{t=0}^{T-1}$ as the composition of the two, and let $\spi = (\sM_t, \sK_t)_{t=0}^{T-1}$ denote the optimal policy. 

\vspace*{-5pt}
\section{Methodology: Cost-driven state representation learning} 
\label{sec:method}

State representation learning involves history data that contains samples of three variables: observation, control input, and cost. Each of them can potentially be used as a \emph{supervision} signal, and be used to define a type of state representation learning algorithms. We summarize our categorization of the methods in the literature as follows. 
\begin{itemize}
    \setlength\itemsep{0em}
    \item \emph{Predicting observations} defines the class of \emph{observation-reconstruction-based} methods, including methods based on Markov parameters (mapping from control actions to observations) in linear systems~\citep{lale2021adaptive, zheng2021sample} and methods that learn a mapping from states to observations in more complex systems~\citep{ha2018world, hafner2019learning, hafner2019dream}. 
    This type of method tends to recover all state components. 
    \item \emph{Predicting actions} defines the class of \emph{inverse model} methods, where the control is predicted from states across different time steps~\citep{mhammedi2020learning, frandsen2022extracting,lamb2022guaranteed}. This type of method tends to recover the control-relevant \mbox{state components}. 
    \item \emph{Predicting (cumulative) costs} defines the class of \emph{cost-driven state representation learning} methods~\citep{zhang2020learning, schrittwieser2020mastering, yang2022discrete}. This type of methods tend to recover the state components relevant to the cost.
\end{itemize}

Our method falls into the cost-driven category.
Compared with Markov parameter-based approaches for linear systems, our approach directly parameterizes the state representation function, without exploiting the structure of the Kalman filter, making our approach closer to empirical practice that was designed for general RL settings. 

\citet{subramanian2020approximate} propose to optimize a simple combination of cost and transition prediction errors to learn what they call the \emph{approximate information state}. That is, we parameterize a state representation function by matrices $(M_t)_{t=0}^{T}$ and a latent model by matrices $((A_t, B_t)_{t=0}^{T-1}, (Q_t)_{t=0}^{T})$ and then solve 
\begin{align}  \label{eqn:emp-loss}
    \min\nolimits_{(M_t, Q_t, b_t)_{t=0}^{T}, (A_t, B_t)_{t=0}^{T-1}} \nlsum_{t=0}^{T} \nlsum_{i=1}^{n} l_t^{(i)},
\end{align}
where $(b_t)_{t=0}^{T}$ are additional scalar parameters to account for noises, and the loss at step $t$ for trajectory $i$ is defined by 
\begin{equation}  \label{eqn:point-loss}
    l_t^{(i)} = \big(\|M_t h_t^{(i)}\|_{Q_t}^2 + \|u_t^{(i)}\|_{\sR_t}^2 + b_t - c_{t}^{(i)} \big)^2 + \big\| M_{t+1} h_{t+1}^{(i)} - A_t M_t h_t^{(i)} - B_t u_t^{(i)} \big\|^2,
\end{equation}
for $0\le t\le T-1$ and $l_T^{(i)} = \big(\|M_T h_T^{(i)}\|_{Q_T}^2 + b_T - c_{T}^{(i)} \big)^2$.
The optimization problem~\eqref{eqn:emp-loss} is nonconvex; even if we can  find a global minimizer, it is unclear how to establish finite-sample guarantees for it. A main finding of this work is that for LQG, we can solve the cost and transition loss optimization problems \emph{sequentially}, with the caveat of using \emph{cumulative} costs. 

\begin{algorithm}[!t]
    \caption{\corel{}: \mbox{Cost-driven state representation learning}}
    \label{alg:corel}
    \begin{algorithmic}[1]
        \State {\bfseries Input:} sample size $n$, input noise magnitude $\sigma_u = \Theta(1)$, singular value threshold $\theta = \Theta(n^{-1/4})$ (hiding dependence on other problem parameters)
        \State Collect $n$ trajectories using $u_t \sim \gauss(0, \sigma_u^2 I)$, for $0\le t\le T-1$, to obtain data in the form of
        \begin{align*}
            \cD_{\raw} = (y_0^{(i)}, u_0^{(i)}, c_0^{(i)}, \ldots, y_{T-1}^{(i)}, u_{T-1}^{(i)}, c_{T-1}^{(i)}, y_T^{(i)}, c_T^{(i)})_{i=1}^{n}
        \end{align*}
        \State Run Algorithm~\ref{alg:repl} with $\cD_{\raw}$ and $\theta$ to obtain state representation function estimate $(\eM_t)_{t=0}^{T}$ and latent state estimates $(\ez_t^{(i)})_{t=0, i=1}^{T, n}$, so that the data are converted to 
        \begin{align*}
            \cD_{\state} = (\ez_0^{(i)}, u_0^{(i)}, c_0^{(i)}, \ldots, \ez_{T-1}^{(i)}, u_{T-1}^{(i)}, c_{T-1}^{(i)}, \ez_T^{(i)}, c_T^{(i)})_{i=1}^{n}
        \end{align*}
        \State Run Algorithm~\ref{alg:sys-id} with $\cD_{\state}$ to obtain system parameter estimates $((\eA_t, \eB_t)_{t=0}^{T-1}, (\eQ_t)_{t=0}^{T})$
        \State Find feedback gains $(\eK_t)_{t=0}^{T-1}$ from $((\eA_t, \eB_t, \sR_t)_{t=0}^{T-1}, (\eQ_t)_{t=0}^{T})$ by RDE~\eqref{eqn:p-riccati}
        \State {\bfseries Return:} policy $\epi = (\eM_t, \eK_t)_{t=0}^{T-1}$
    \end{algorithmic}
\end{algorithm}

Our method is summarized in \corel{} (Algorithm~\ref{alg:corel}). It has three steps: cost-driven state representation function learning (Algorithm~\ref{alg:repl}), latent system identification (Algorithm~\ref{alg:sys-id}), and planning by RDE~\eqref{eqn:p-riccati}.

This three-step approach is very similar to the World Model approach~\citep{ha2018world} used in empirical RL, except that in the first step, instead of using an autoencoder to learn the state representation function, we use cost values to supervise the representation learning.
Most empirical state representation learning methods~\citep{hafner2019learning, hafner2019dream, schrittwieser2020mastering} use cost supervision as one loss term; the special structure of LQG allows us to use it alone and have theoretical guarantees. 

Algorithm~\ref{alg:repl} is the core of our algorithm. Once the state representation function $(\eM_t)_{t=0}^{T}$ is obtained, Algorithm~\ref{alg:sys-id} identifies the latent model using linear and quadratic regressions, followed by planning using RDE~\eqref{eqn:p-riccati} to obtain the controller $(\eK_t)_{t=0}^{T-1}$ from $((\eA_t, \eB_t, \sR_t)_{t=0}^{T-1}, (\eQ_t)_{t=0}^T)$. 
Algorithm~\ref{alg:sys-id} consists of the standard regression procedures.
We explain Algorithm~\ref{alg:repl} below.

\vspace*{-8pt}
\subsection{Learning the state representation function}
\label{sec:learn-repr}

\begin{algorithm}[!t]
    \caption{Cost-driven state representation function learning}
    \label{alg:repl}
    \begin{algorithmic}[1]
        \State {\bfseries Input:} raw data $\cD_{\raw}$, singular value threshold $\theta$
        \State Estimate the state representation function and cost constants by solving $(\eN_t, \eb_t)_{t=0}^{T} \in$
        \begin{equation}  \label{eqn:warm-up}
            \argmin_{(N_t = N_t^{\top}, b_t)_{t=0}^{T}} \sum_{t=0}^{T} \sum_{i=1}^{n}\Big( \bigl\|[y_{0:t}^{(i)}; u_{0:(t-1)}^{(i)}]\bigr\|_{N_t}^2 + \sum_{\tau=t}^{t+k-1} \|u_\tau^{(i)}\|_{\sR_\tau}^2 + b_t - \oc_t^{(i)} \Big)^2,
        \end{equation}
        where $k = 1$ for $0\le t\le l-1$ and $k = m \wedge (T-t+1)$ for $\ell\le t\le T$
        \State Find $\tM_t \in \argmin_{M \in \R^{d_x \times ((t+1)d_y + t d_u)}} \| M^{\top} M - \eN_t\|_{F}$
        \State For all $0\le t \le \ell-1$, set $\eM_t = \svt{}(\tM_t, \theta)$; for all $\ell\le t \le T$, set $\eM_t = \tM_t$
        \State Compute $\ez_t^{(i)} = \eM_t [y_{0:t}^{(i)}; u_{0:(t-1)}^{(i)}]$ for all $t = 0, \ldots, T$ and $i = 1, \ldots, n$
        \State {\bfseries Return:} state representation function estimate $(\eM_t)_{t=0}^{T}$ and latent state estimates $(\ez_t^{(i)})_{t=0, i=1}^{T, n}$
    \end{algorithmic}
\end{algorithm}

\begin{algorithm}[!t] 
    \caption{Latent model identification}
    \label{alg:sys-id}
    \begin{algorithmic}[1]
        \State {\bfseries Input:} data in the form of $(\ez_0^{(i)}, u_0^{(i)}, c_0^{(i)}, \ldots, \ez_{T-1}^{(i)}, u_{T-1}^{(i)}, c_{T-1}^{(i)}, \ez_T^{(i)}, c_T^{(i)})_{i=1}^{n}$
        \State Estimate the system dynamics by $(\eA_t, \eB_t)_{t=0}^{T-1} \in$
        \begin{align}  \label{eqn:sys-id}
             \argmin_{(A_t, B_t)_{t=0}^{T-1}} \sum_{t=0}^{T-1} \sum_{i=1}^{n} \| A_t \ez_t^{(i)} + B_t u_t^{(i)} - \ez_{t+1}^{(i)} \|^2,
        \end{align}
        {picking the minimal-Frobenius-norm solution by pseudoinverse, as in \eqref{eqn:min_norm_equation}}
        \State For all $0\le t\le \ell - 1$ and $t = T$, set $\eQ_t = I_{d_x}$
        \State For all $\ell \le t \le T-1$, obtain $\tQ_t$ by $\tQ_t, \eb_t \in$
        \begin{align}  \label{eqn:q-id}
            \argmin_{Q_t = Q_t^{\top}, b_t} \sum_{i=1}^{n} ( \|\ez_t^{(i)}\|_{Q_t}^2 + \|u_t^{(i)}\|_{\sR_t}^2 + b_t - c_t^{(i)} )^2,
        \end{align}
        and set $\eQ_t = U \max(\Lambda, 0) U^{\top}$, where $\tQ_t = U \Lambda U^{\top}$ is its eigenvalue decomposition
        \State {\bfseries Return:} system parameters $((\eA_t, \eB_t)_{t=0}^{T-1}, (\eQ_t)_{t=0}^{T})$ 
    \end{algorithmic} 
\end{algorithm}

The state representation function is learned via Algorithm~\ref{alg:repl}. 
Given the raw data consisting of $n$ trajectories, Algorithm~\ref{alg:repl} first solves the regression problem~\eqref{eqn:warm-up} to recover the symmetric matrix $\eN_t$. The target $\oc_t$ of regression~\eqref{eqn:warm-up} is defined by 
\begin{align*}
    \oc_t := c_t + c_{t+1} + \ldots + c_{t+k-1},
\end{align*}
where $k = 1$ for $0\le t\le \ell-1$ and $k = m \wedge (T-t+1)$ for $\ell\le t\le T$. The superscript in $\oc_t^{(i)}$ denotes the observed $\oc_t$ in the $i$th trajectory.
The quadratic regression has a closed-form solution, by converting it to linear regression using $\|v\|_P^2 = \ip{vv^{\top}}{P}_F = \ip{\svec(vv^{\top})}{\svec(P)}$.

\vspace{3pt}
\noindent\textbf{Why cumulative cost?}
The state representation function is parameterized by $(M_t)_{t=0}^{T}$ and the latent state at step $t$ is given by $z_t = M_t h_t$. The single-step cost prediction (neglecting control cost $\|u_t\|_{\sR_t}^2$ and constant $b_t$) is given by $\|z_t\|_{Q_t}^2 = h_t^{\top} M_t^{\top} Q_t M_t h_t$. The regression recovers $(\sM_t)^{\top} \sQ_t \sM_t$ as a whole, from which we can recover $(\sQ_t)^{1/2} \sM_t$ up to an orthogonal transformation. If $\sQ_t$ is positive definite and known, then we can further recover $\sM_t$ from it. However, if $\sQ_t$ does not have full rank, information about $\sM_t$ is partially lost, and there is no way to fully recover $\sM_t$ even if $\sQ_t$ is known. 
To see why multi-step cumulative cost helps, define $\soQ_t := \nlsum_{\tau = t}^{t+k-1} \Phi_{\tau, t}^{\top} \sQ_{\tau} \Phi_{\tau, t}$ for the same $k$ above. Under zero control and zero noise, starting from $x_t$ at step $t$, the $k$-step cumulative cost is precisely $\|x_t\|_{\soQ_t}^2$. Under the cost observability assumption (Assumption~\ref{asp:cost-obs}), $(\soQ_t)_{t=0}^{T}$ are \mbox{positive definite}.

\vspace{3pt}
\noindent\textbf{The normalized parameterization.}
Still, since $\soQ_t$ is unknown, even if we recover $(\sM_t)^{\top} \soQ_t \sM_t$ as a whole, it is not viable to extract $\sM_t$ and $\soQ_t$. Such ambiguity is unavoidable; in fact, for every $\soQ_t$ we choose, there is an equivalent parameterization of the system such that the system response is exactly the same. In partially observable LTI systems, it is well-known that the system parameters can only be recovered up to a similarity transform~\citep{oymak2019non}. 
Since every parameterization is correct, we simply choose $\soQ_t = I$, which we refer to as the \emph{normalized parameterization}.
Concretely, let us define $x'_t = (\soQ_t)^{1/2} x_t$. Then, the new parameterization is given by 
\begin{gather*}
    x'_{t+1} = \spA_t x'_t + \spB_t u_t + w'_t, \quad 
    y_t = \spC_t x'_t + v_t, \quad 
    c'_t(x', u) = \|x'\|_{\spQ_t}^2 + \|u\|_{\sR_t}^2,
\end{gather*}
and $c'_T(x') = \|x'\|_{(\sQ_T)'}^2$, where for all $t \ge 0$, 
\begin{gather*}
    \spA_t = (\soQ_{t+1})^{1/2} \sA_t (\soQ_t)^{-1/2}, \quad 
    \spB_t = (\soQ_{t+1})^{1/2} \sB_t, \quad 
    \spC_t = \sC_t (\soQ_t)^{-1/2}, \\
    w'_t = (\soQ_{t+1})^{1/2} w_t, \quad 
    (\sQ_t)' = (\soQ_t)^{-1/2} \sQ_t (\soQ_t)^{-1/2}.
\end{gather*}
One can verify that under the normalized parameterization, the system satisfies Assumptions~\ref{asp:stability},~\ref{asp:ctrl},~\ref{asp:cost-obs}, and~\ref{asp:bounded}, up to a change of some constants in the bounds. Without loss of generality, we assume system~\eqref{eqn:po-ltv} is in the normalized parameterization.

\vspace{3pt}
\noindent\textbf{Low-rank approximate factorization.}
Regression~\eqref{eqn:warm-up} has a closed-form solution  $(\eN_{t}, \eb_t)_{t=0}^{T}$. 
Constants $(\eb_t)_{t=0}^{T}$ account for the variance of the state estimation error, and are not part of the state representation function; 
$d_h \times d_h$ symmetric matrices $(\eN_t)_{t=0}^T$ are estimates of $(\sM_t)^{\top} \sM_t$ under the normalized parameterization, where $d_h = (t+1) d_y + t d_u$. $\sM_t$ can only be recovered up to an orthogonal transformation, as  for any orthogonal $S \in \R^{d_x \times d_x}$, $(S \sM_t)^{\top} S \sM_t = (\sM_t)^{\top} \sM_t$. 

We want to recover $\tM_t$ from $\eN_t$ such that $\eN_t = \tM_t^{\top} \tM_t$. Let $U\Lambda U^{\top} = \eN_t$ be its eigenvalue decomposition. Let $\Sigma := \max(\Lambda, 0)$ be the positive semidefinite diagonal matrix with nonnegative eigenvalues, where ``$\max$'' applies elementwise.
If $d_h \le d_x$, we can construct $\tM_t = [\Sigma^{1/2} U^{\top}; 0_{(d_x - d_h) \times d_h}]$ by padding zeros. If $d_h > d_x$, however, $\rank(\eN_t)$ may exceed $d_x$. Without loss of generality, assume that the diagonal elements of $\Sigma$ are in descending order. Let $\Sigma_{d_x}$ be the top-left $d_x \times d_x$ block of $\Sigma$ and $U_{d_x}$ be the left $d_x$ columns of $U$. By the Eckart-Young-Mirsky theorem, $\tM_t = \Sigma_{d_x}^{1/2} U_{d_x}^{\top}$ provides the best approximation of $\eN_t$ with $\tM_t^{\top} \tM_t$ 
among $d_x \times d_h$ matrices in terms of the Frobenius norm distance.

\vspace{3pt}
\noindent\textbf{Why singular value truncation in the first $\ell$ steps?}
The latent states are used to identify the latent system dynamics, so whether they are sufficiently excited, namely having full-rank covariance, makes a big difference: if not, the system matrices can only be identified partially. Proposition~\ref{prp:full-rank-cov} below confirms that the optimal latent state $\sz_t = \sM_t h_t$ indeed has full-rank covariance for $t \ge \ell$.
\begin{proposition}  \label{prp:full-rank-cov}
    If system~\eqref{eqn:po-ltv} satisfies Assumptions~\ref{asp:ctrl} (controllability) and~\ref{asp:bounded} (regularity), then under control $(u_t)_{t=0}^{T-1}$, where $u_t \sim \gauss(0, \sigma_u^2 I)$, $\sigma_{\min}(\Cov(\sz_t)) = \Omega(\nu^2)$, $\sM_t$ has rank $d_x$ and $\sigma_{\min}(\sM_t) = \Omega(\nu t^{-1/2})$ for all $\ell\le t \le T$.
\end{proposition}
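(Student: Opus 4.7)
The plan is to split the proposition into two independent lower bounds. First I would establish $\Cov(\sz_t) \succcurlyeq \Omega(\nu^2)\, I$ using only controllability, and then recover $\sigma_{\min}(\sM_t) = \Omega(\nu t^{-1/2})$ by combining this with an upper bound $\|\Cov(h_t)\|_2 = \bigo(t)$ through the identity $\Cov(\sz_t) = \sM_t \Cov(h_t) \sM_t^\top$, which holds because $\E[h_t] = 0$. Full rank of $\sM_t$ then follows for free from $\sigma_{\min}(\sM_t) > 0$.

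For the first bound, the key idea is to introduce the ``control-only'' estimator $\tilde{z}_t := \E[x_t \given u_{0:(t-1)}]$. Since $u_{0:(t-1)}$ is contained in $h_t$, the tower property yields $\tilde{z}_t = \E[\sz_t \given u_{0:(t-1)}]$, and the law of total variance gives $\Cov(\sz_t) \succcurlyeq \Cov(\tilde{z}_t)$. Because the open-loop Gaussian sequence $(u_\tau)$ is independent of $x_0$ and $(w_\tau)$, $\tilde{z}_t = \sum_{\tau=0}^{t-1} \Phi_{t,\tau+1} \sB_\tau u_\tau$, and hence $\Cov(\tilde{z}_t) = \sigma_u^2 \sum_{\tau=0}^{t-1} \Phi_{t,\tau+1} \sB_\tau \sB_\tau^\top \Phi_{t,\tau+1}^\top$. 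Because $t \ge \ell$, dropping all but the last $\ell$ summands reproduces exactly $\sigma_u^2 \Phi_{t-1,\ell}^c (\Phi_{t-1,\ell}^c)^\top$, which is $\succcurlyeq \sigma_u^2 \nu^2 I$ by Assumption~\ref{asp:ctrl}.

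For the second bound, I would estimate $\|\Cov(h_t)\|_2$ directly. Iterating the state recursion under Assumptions~\ref{asp:stability} and~\ref{asp:bounded} shows $\|\Cov(x_\tau)\|_2 = \bigo(1)$ uniformly in $\tau$, hence $\|\Cov(y_\tau)\|_2 = \bigo(1)$. For any unit $v = [a_0;\dots;a_t;b_0;\dots;b_{t-1}]$, splitting $v^\top h_t$ into its observation and control parts and applying Cauchy--Schwarz across the $t+1$ observation terms (together with independence of the $u_\tau$'s) gives $\Var(v^\top h_t) = \bigo(t)$, so $\|\Cov(h_t)\|_2 = \bigo(t)$. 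Writing $\sM_t = U\Sigma V^\top$ as a thin SVD, the relation $V^\top \Cov(h_t) V \preccurlyeq \|\Cov(h_t)\|_2 I_{d_x}$ yields $\Cov(\sz_t) \preccurlyeq \|\Cov(h_t)\|_2\, \sM_t \sM_t^\top$, and therefore $\sigma_{\min}(\sM_t)^2 \ge \sigma_{\min}(\Cov(\sz_t))/\|\Cov(h_t)\|_2 = \Omega(\nu^2/t)$. The main obstacle is this linear-in-$t$ growth of $\|\Cov(h_t)\|_2$, which comes from the accumulating cross-time correlations among $(y_\tau)$; it is what forces the $t^{-1/2}$ factor, and removing it would seem to require a more delicate exploitation of the Kalman filter structure rather than the crude vectorized argument above.
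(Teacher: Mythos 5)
Your proposal is correct, and the second half (bounding $\|\Cov(h_t)\|_2 = \bigo(t)$ and passing from $\Cov(\sz_t) \preccurlyeq \|\Cov(h_t)\|_2\, \sM_t\sM_t^{\top}$ to $\sigma_{\min}(\sM_t) = \Omega(\nu t^{-1/2})$ and full rank) is essentially identical to the paper's, which just packages the $\bigo(t)$ bound as a separate lemma on covariances of stacked vectors rather than your hand-rolled Cauchy--Schwarz. The first half differs in route but not in destination: the paper unrolls the Kalman filter recursion $\sz_\tau = \sA_{\tau-1}\sz_{\tau-1} + \sB_{\tau-1}u_{\tau-1} + \sL_\tau i_\tau$ for $\ell$ steps, observes that the control block, the term in $\sz_{t-\ell}$, and the innovation block are mutually independent, and then drops the non-control contributions from $\E[\sz_t\sz_t^{\top}]$; you instead condition on $u_{0:(t-1)}$, use the tower property to identify $\E[\sz_t \given u_{0:(t-1)}]$ with the open-loop response $\sum_{\tau}\Phi_{t,\tau+1}\sB_\tau u_\tau$, and invoke the law of total variance. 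Both arguments land on the same matrix inequality $\Cov(\sz_t) \succcurlyeq \sigma_u^2\,\Phi_{t-1,\ell}^c(\Phi_{t-1,\ell}^c)^{\top} \succcurlyeq \sigma_u^2\nu^2 I$. Your version has the small advantage of not needing the independence of the innovations from the past history (only that $\sz_t$ is a conditional expectation of $x_t$ given $h_t$ and that the controls are exogenous), while the paper's version makes explicit that the excitation visible in $\sz_t$ comes through the filter dynamics themselves; the two are interchangeable here.
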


\begin{proof} 
    For $\ell \le t \le T$, unrolling the Kalman filter gives
    \begin{align*}
        \sz_t =\;& \sA_{t-1} \sz_{t-1} + \sB_{t-1} u_{t-1} + \sL_{t} i_{t} \\
        =\;& \sA_{t-1}(\sA_{t-2} \sz_{t-2} + \sB_{t-2} u_{t-2} + \sL_{t-1} i_{t-1} )  + \sB_{t-1} u_{t-1} + \sL_{t} i_{t} \\ 
        =\;& [\sB_{t-1}, \ldots, \sA_{t-1} \sA_{t-2} \cdots \sA_{t-\ell+1} \sB_{t-\ell}] [u_{t-1}; \ldots; u_{t-\ell}]  + \sA_{t-1} \sA_{t-2} \cdots \sA_{t-\ell} \sz_{t-\ell} \\
        &\quad + [\sL_{t}, \ldots, \sA_{t-1} \sA_{t-2} \cdots \sA_{t-\ell+1} \sL_{t-\ell+1}] [i_{t}; \ldots; i_{t-\ell+1}], 
    \end{align*}
    where $(u_\tau)_{\tau=t-\ell}^{t-1}$, $\sz_{t-\ell}$ and $(i_\tau)_{\tau=t-\ell+1}^{t}$ are independent.
    The matrix multiplied by $[u_{t-1}; \ldots; u_{t-\ell}]$ is precisely the controllability matrix $\Phi_{t-1, \ell}^{c}$.
    Then 
    \begin{align*}
        \Cov(\sz_t) = \E[\sz_t (\sz_t)^{\top}] 
        \succcurlyeq \Phi_{t-1, \ell}^{c} \E[[u_{t-1}; \ldots; u_{t-\ell}] [u_{t-1}; \ldots; u_{t-\ell}]^{\top}] (\Phi_{t-1, \ell}^{c})^{\top} = \sigma_u^2 \Phi_{t-1, \ell}^{c} (\Phi_{t-1, \ell}^{c})^{\top}.
    \end{align*}
    By the controllability assumption (Assumption~\ref{asp:ctrl}), $\Cov(\sz_t)$ has full rank and 
    \begin{align*}
        \sigma_{\min}(\Cov(\sz_t)) \ge \sigma_u^2 \nu^2.
    \end{align*}
    On the other hand, since $\sz_t = \sM_t h_t$, 
    \begin{align*}
        \Cov(\sz_t) = \E[\sM_t h_t h_t^{\top} (\sM_t)^{\top}] \preccurlyeq \sigma_{\max}(\E[h_t h_t^{\top}]) \sM_t (\sM_t)^{\top}.
    \end{align*}
    Since $h_t = [y_{0:t}; u_{0:(t-1)}]$ and $(\Cov(y_t))_{t=0}^{T}, (\Cov(u_t))_{t=0}^{T-1}$ have $\bigo(1)$ operator norms 
    by Lemma~\ref{lem:cov-cat}, $\|\Cov(h_t)\| = \|\E[h_t h_t^{\top}]\| = \bigo(t)$.
    Hence, 
    \begin{align*}
        0 < \sigma_u^2 \nu^2 \le \sigma_{\min}(\Cov(\sz_t)) = \bigo(t) \sigma_{d_x}^2(\sM_t).
    \end{align*}
    This implies that $\rank(\sM_t) = d_x$ and $\sigma_{\min}(\sM_t) = \Omega(\nu t^{-1/2})$.
\end{proof}

Proposition~\ref{prp:full-rank-cov} implies that for all $\ell \le t\le T$, $\sN_t$ has rank $d_x$, so if $d_x$ is not provided, this gives a way to discover it.
For $\ell \le t \le T$, Proposition~\ref{prp:full-rank-cov} guarantees that as long as $\tM_t$ is close enough to $\sM_t$, it also has full rank, and so does $\Cov(\tM_t h_t)$. Hence, we simply take the final estimate $\eM_t = \tM_t$.
Without further assumptions, however, there is no such a full-rank guarantee for $(\Cov(\sz_t))_{t=0}^{\ell-1}$ and $(\sM_t)_{t=0}^{\ell-1}$. We make the following minimal assumption to ensure that the minimum positive singular values $(\sigma_{\min}^{+}(\Cov(\sz_t)))_{t=0}^{\ell-1}$ are uniformly lower bounded. Note that $(\Cov(\sz_t))_{t=0}^{\ell-1}$ are not required to have full rank.

\begin{assumption}  \label{asp:m-min-pos-sv}
    For $0\le t\le \ell-1$, $\sigma_{\min}^{+}(\sM_t) \ge \beta > 0$.
\end{assumption}

Still, for $0\le t\le \ell-1$, Assumption~\ref{asp:m-min-pos-sv} does not guarantee the full-rankness of $\Cov(\tM_t h_t)$, not even a lower bound on its minimum positive singular value; that is why we introduce \svt{} that truncates the singular values of $\tM_t$ by a threshold $\theta > 0$. Concretely, we take $\eM_t = (\I_{[\theta, +\infty)}(\Sigma_{d_x}^{1/2}) \odot \Sigma_{d_x}^{1/2}) U_{d_x}^{\top}$. Then, $\eM_t$ has the same singular values as $\tM_t$ except that those below $\theta$ are zeroed. We take $\theta = \Theta(\ell^{3/2} (d_y + d_u) d_x^{3/4} n^{-1/4} \log^{1/4}(\ell/p))$ to ensure a sufficient lower bound on the minimum positive singular value of $\eM_t$,  without increasing the \mbox{statistical errors}.

\section{Theoretical guarantees and proofs}

Theorem~\ref{thm:main} below offers a finite-sample guarantee for our approach, confirming cost-driven state representation learning (Algorithm~\ref{alg:corel}) as a viable path to solving \mbox{LQG control}. 

\begin{theorem}  \label{thm:main}
    Given an unknown LQG control problem defined by~\eqref{eqn:po-ltv} and~\eqref{eqn:q-cost}, under Assumptions~\ref{asp:stability},~\ref{asp:ctrl},~\ref{asp:cost-obs},~\ref{asp:bounded} and~\ref{asp:m-min-pos-sv}, 
    for a given $p \in (0, 1)$, if we run \corel{} (Algorithm~\ref{alg:corel}) for $n \ge \poly(T, d_x, d_y, d_u, \log(1/p))$, then with probability at least $1 - p$, state representation function $(\eM_t)_{t=0}^{T}$ is $\poly(\ell, d_x, d_y, d_u, \log(\ell/p)) n^{-1/4}$-optimal 
    in the first $\ell$ steps, and $\poly(\nu^{-1}, T, d_x, d_y, d_u, \log(T/p)) n^{-1/2}$-optimal in the next $(T - \ell)$ steps. In addition, the overall output policy $\epi = (\eM_t, \eK_t)_{t=0}^{T-1}$ satisties 
    \begin{align*}
        J(\epi) - J(\spi) =\;& \poly(\beta^{-1}, \ell, d_x, d_y, d_u, \log(\ell/p)) c^{\ell} n^{-1/4} \\
        & \qquad + \poly(\nu^{-1}, m, T, d_x, d_y, d_u, \log(nT/p)) n^{-1},
    \end{align*}
    where $c > 1$ is a dimension-free constant depending polynomially on $\ell$ and other problem parameters. 
\end{theorem}

Theorem~\ref{thm:main} provides finite-sample guarantees for both the state representation function $(\eM_t)_{t=0}^{T}$ and the overall policy $\epi = (\eM_t, \eK_t)_{t=0}^{T}$.
From Theorem~\ref{thm:main}, we observe a separation of the sample complexities \emph{before} and \emph{after} time step $\ell$ for the state representation  function, resulting from the loss of the full-rankness of  $(\Cov(\sz_t))_{t=0}^{\ell-1}$ and $(\sM_t)_{t=0}^{\ell-1}$. 

Specifically, quadratic regression guarantees that $\eN_t$ converges to $\sN_t$ at a rate of $n^{-1/2}$ for all $0\le t \le T$ (Lemma~\ref{lem:qr}). Before time step $\ell$, $\eM_t$ suffers a square root decay of the rate  $n^{-1/4}$ because $\sM_t$ may not have rank $d_x$ (Lemma~\ref{lem:mat-fac}). Since $(\ez_t)_{t=0}^{\ell-1}$ may not have full-rank covariances, $(\sA_t)_{t=0}^{\ell-1}$ are only recovered partially. As a result, $(\eK_t)_{t=0}^{\ell-1}$ may not stabilize $(\sA_t, \sB_t)_{t=0}^{\ell-1}$. This issue poses a significant challenge for the analysis and significantly worsens 
dependence on $\ell$ in the policy suboptimality gap (Lemma~\ref{lem:rank-deficient}), which  
implies that if $n$ is not large  enough, the policy may be inferior to zero control in the first $\ell$ steps, as system $(\sA_t, \sB_t)_{t=0}^{\ell-1}$ is uniformly exponential stable (Assumption~\ref{asp:stability}), 
while zero control has a suboptimality gap linear in $\ell$.
After time step $\ell$, $\eM_t$ retains the $n^{-1/2}$ sample complexity, from which the same order of sample complexity guarantee for $(\eA_t, \eB_t)$ follows, resulting in the $\poly(T) n^{-1}$ term in the policy suboptimality gap.

Next, we provide a key proposition and several technical lemmas for analyzing our algorithm, before we present the proof of Theorem~\ref{thm:main}, which contains the exact polynomial orders of the finite-sample guarantees.

\subsection{Proposition on multi-step cumulative costs}

The following proposition establishes the relationship between the multi-step cumulative costs and the state estimates by the Kalman filter.

\begin{proposition}  \label{prp:multi-step-cost}
    Let $(\spz_t)_{t=0}^{T}$ be the state estimates by the Kalman filter under the normalized parameterization.
    If we apply $u_t \sim \gauss(0, \sigma_u^2 I)$ for all $0\le t\le T-1$, then for $0\le t\le T$, 
    \begin{align*}
        \oc_t := c_t + c_{t+1} + \ldots + c_{t+k-1}
        = \|\spz_t\|^2 + \nlsum_{\tau = t}^{t+k-1} \|u_{\tau}\|_{\sR_{\tau}}^2 + \ob_t + \ooe_t,
    \end{align*}
    where $k=1$ for $0\le t \le \ell - 1$ and $k = m \wedge (T - t + 1)$ for $\ell \le t\le T$, $\ob_t = \bigo(k)$, and $\ooe_t$ is a zero-mean subexponential random variable with $\|\ooe_t\|_{\psi_1} = \bigo(k d_x^{1/2})$.
\end{proposition}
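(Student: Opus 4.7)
The plan is to decompose $\oc_t$ step by step using Proposition~\ref{prp:state-est-lds} and then unroll the Kalman-filter recursion to express everything in terms of $\spz_t$ plus independent centered pieces. Under the normalized parameterization, applying Proposition~\ref{prp:state-est-lds} to each step and summing from $\tau = t$ to $t+k-1$ gives
\begin{align*}
    \oc_t = \nlsum_{\tau=t}^{t+k-1} \|\spz_\tau\|_{(\sQ_\tau)'}^2 + \nlsum_{\tau=t}^{t+k-1} \|u_\tau\|_{\sR_\tau}^2 + \nlsum_{\tau=t}^{t+k-1} b_\tau + \nlsum_{\tau=t}^{t+k-1} (\gamma_\tau + \eta_\tau).
\end{align*}
The second sum is the desired control-cost term; the $b_\tau$ constants will be absorbed into $b'_t$ and the $\gamma_\tau + \eta_\tau$ pieces into $e'_t$.

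Next I would unroll the Kalman filter as $\spz_\tau = \Phi'_{\tau,t}\spz_t + \xi_\tau$, where $\xi_\tau$ is built from the controls $(u_s)$ and innovations $(i_{s+1})$ on $s = t, \ldots, \tau-1$, all independent of $\spz_t$ and zero mean (by the same property that drives the proof of Proposition~\ref{prp:state-est-lds}). Expanding $\|\spz_\tau\|_{(\sQ_\tau)'}^2 = \|\Phi'_{\tau,t}\spz_t\|_{(\sQ_\tau)'}^2 + 2(\spz_t)^\top (\Phi'_{\tau,t})^\top (\sQ_\tau)' \xi_\tau + \|\xi_\tau\|_{(\sQ_\tau)'}^2$ and summing over $\tau$, the quadratic-in-$\spz_t$ pieces collapse via the defining identity of the normalized parameterization,
\begin{align*}
    \nlsum_{\tau=t}^{t+k-1} (\Phi'_{\tau,t})^\top (\sQ_\tau)' \Phi'_{\tau,t} = (\soQ_t)' = I,
\end{align*}
into exactly $\|\spz_t\|^2$. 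The cross terms sum to $2(\spz_t)^\top \Xi$ with $\Xi := \nlsum_\tau (\Phi'_{\tau,t})^\top (\sQ_\tau)' \xi_\tau$ independent of $\spz_t$ and zero mean. Each $\|\xi_\tau\|_{(\sQ_\tau)'}^2$ splits as its constant mean, absorbed into $b'_t$, plus a Hanson--Wright-type zero-mean subexponential deviation, absorbed into $e'_t$.

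Finally I would verify the norm bounds. By Assumption~\ref{asp:stability} the matrices $\Phi'_{\tau,s+1}$ decay geometrically in $\tau - s$, so a geometric-series bound gives $\|\Cov(\xi_\tau)\|_2, \|\Cov(\Xi)\|_2 = \bigo(1)$ uniformly in $k$. Combined with Lemma~\ref{lem:subexp} and the subexponential norms already supplied by Proposition~\ref{prp:state-est-lds}, every summand in $e'_t$ (each $\gamma_\tau$, each $\eta_\tau$, each Hanson--Wright deviation, and the single $2(\spz_t)^\top \Xi$) is zero-mean subexponential with norm $\bigo(d_x^{1/2})$; the triangle inequality for $\|\cdot\|_{\psi_1}$ then yields $\|e'_t\|_{\psi_1} = \bigo(k d_x^{1/2})$, and summing the $k$ bounded $b_\tau$'s and mean quadratic forms gives $b'_t = \bigo(k)$ under the same convention as Proposition~\ref{prp:state-est-lds}.

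The main obstacle is handling the cross term $2(\spz_t)^\top \Xi$: the naive bound $\|(\spz_t)^\top \Xi\|_{\psi_1} \lesssim \|\spz_t\|_{\psi_2} \|\Xi\|_{\psi_2}$ gives only $\bigo(d_x)$, which would worsen the final rate. One must instead exploit the independence and joint Gaussianity to obtain $\Var((\spz_t)^\top \Xi) = \tr(\Cov(\spz_t)\Cov(\Xi)) = \bigo(d_x)$, and thus $\bigo(d_x^{1/2})$ subexponential norm via Lemma~\ref{lem:subexp}. An analogous Frobenius-vs-trace care is required inside each Hanson--Wright deviation, and the geometric decay supplied by stability is essential to keep $\Cov(\Xi)$ bounded independently of $k$.
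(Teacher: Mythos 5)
Your proof is correct and takes essentially the same route as the paper's: apply Proposition~\ref{prp:state-est-lds} termwise, unroll the Kalman recursion so that $\spz_\tau = \Phi'_{\tau,t}\spz_t + (\text{noise})$, collapse the quadratic part via the normalized-parameterization identity $\nlsum_{\tau}(\Phi'_{\tau,t})^{\top}\spQ_\tau\Phi'_{\tau,t} = I$, and control the remainder using uniform exponential stability together with Lemma~\ref{lem:subexp}. The only (harmless) differences are cosmetic: you aggregate the cross terms into a single $2(\spz_t)^{\top}\Xi$ where the paper bounds them term by term, and your worry about a naive $\bigo(d_x)$ bound is already resolved by Lemma~\ref{lem:subexp} exactly as the paper uses it.
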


\begin{proof}
    By Proposition~\ref{prp:state-est-lds}, $\spz_{t+1} = \spA_t \spz_t + \spB_t u_t + \spL_{t+1} i'_{t+1}$, where $\spL_{t+1}, i'_{t+1}$ are the Kalman gain and the innovation under the normalized parameterization, respectively. Under Assumptions~\ref{asp:stability} and~\ref{asp:bounded}, $(i'_{t})_{t=0}^{T}$ are Gaussian random vectors whose covariances have $\bigo(1)$ operator norms, and $(\spL_t)_{t=0}^{T}$ have $\bigo(1)$ operator norms~\citep{zhang2021boundedness}. Hence, The covariance of $\spL_{t+1} i'_{t+1}$ has $\bigo(1)$ operator norm. 
    Since $u_t \sim \gauss(0, \sigma_u^2 I)$, $j_t := \spB_t u_t + i'_t$ can be viewed as a Gaussian noise vector whose covariance has $\bigo(1)$ operator norm. By Proposition~\ref{prp:state-est-lds},
    \begin{align*}
        c_t = \|\spz_t\|_{\spQ_t}^2 + \|u_t\|_{\sR_t}^2 + b'_t + e'_t,
    \end{align*}
    where $e'_t := \gamma'_t + \eta'_t$ is subexponential with $\|e'_t\|_{\psi_1} = \bigo(d_x^{1/2})$. 
    Let $\Phi'_{t, t_0} = \spA_{t-1} \spA_{t-2} \cdots \spA_{t_0}$ for $t > t_0$ and $\Phi'_{t, t } = I$.
    Then, for $\tau \ge t$, 
    \begin{align*}
        \spz_\tau = \Phi'_{\tau, t} \spz_t + \nlsum_{s = t}^{\tau - 1} \Phi'_{\tau, s} j_{s} := \Phi'_{\tau, t} \spz_t + j'_{\tau, t},
    \end{align*}
    where $j'_{t, t} = 0$ and for $\tau > t$, $j'_{\tau, t}$ is a Gaussian random vector with bounded covariance due to uniform exponential stability (Assumption~\ref{asp:stability}).
    Therefore, 
    \begin{align*}
        \oc_t =\;& \nlsum_{\tau=t}^{t+k-1} c_{\tau} \\
        =\;& \nlsum_{\tau=t}^{t+k-1} \big( \|\Phi'_{\tau, t} \spz_t + j'_{\tau, t}\|_{\spQ_{\tau}}^2 + \|u_{\tau}\|_{\sR_{\tau}}^2 + b'_{\tau} + e'_{\tau} \big) \\
        =\;& (\spz_t)^{\top} \Big(\nlsum_{\tau=t}^{t+k-1} (\Phi'_{\tau, t})^{\top} \spQ_t \Phi'_{\tau, t}\Big) \spz_t + \nlsum_{\tau=t}^{t+k-1} \|u_{\tau}\|_{\sR_\tau}^2 \\
        &\quad + \nlsum_{\tau=t}^{t+k-1} (\|j'_{\tau, t}\|_{\spQ_{\tau}}^2 + (j'_{\tau, t})^{\top} \spQ_{\tau} \Phi'_{\tau, t} \spz_t + b'_{\tau} + e'_\tau) \\
        =\;& \|\spz_t\|^2 + \nlsum_{\tau=t}^{t+k-1} \|u_{\tau}\|_{\sR_\tau}^2 + \ob_t + \ooe_t,
    \end{align*}
    where $\sum_{\tau=t}^{t+k-1} (\Phi'_{\tau, t})^{\top} \spQ_t \Phi'_{\tau, t} = I$ is due to the normalized parameterization, $\ob_t := \sum_{\tau=t}^{t+k-1} (b_{\tau} + \E[\|j'_{\tau}\|_{\spQ_{\tau}}^2]) = \bigo(k)$, and 
    \begin{align*}
        \ooe_t := \sum_{\tau=t}^{t+k-1} \big(\|j'_{\tau}\|_{\spQ_{\tau}}^2 - \E[\|j'_{\tau}\|_{\spQ_{\tau}}^2] + (j'_{\tau})^{\top} \spQ_{\tau} \Phi'_{\tau, t} \spz_t + e'_\tau \big)
    \end{align*}
    has zero mean and is subexponential with $\|\ooe_t\|_{\psi_1} = \bigo(k d_x^{1/2})$.
\end{proof}

\subsection{Quadratic regression bound}
\label{sec:qr}

As noted in \S\ref{sec:learn-repr}, the quadratic regression can be converted to linear regression using $\|h\|_P^2 = \ip{hh^{\top}}{P}_F = \ip{\svec(hh^{\top})}{\svec(P)}$. 
To analyze this linear regression with an intercept, we need the following lemma. We note that a similar lemma without considering the intercept has been proved in~\citep[Proposition 1]{jadbabaie2021time}.

\begin{lemma}  \label{lem:eig-lin-id}
    Let $(h_0^{(i)})_{i=1}^n$ be $n$ independent observations of the $d$-dimensional random vector $h_0 \sim \gauss(0, I_d)$. Let $f_0^{(i)} := \svec(h_0^{(i)} (h_0^{(i)})^{\top})$ and $\of_0^{(i)} := [f_0^{(i)}; 1]$.
    There exists an absolute constant $a > 0$, such that as long as $n \ge a r^4 \log (a r^2 / p)$, with probability at least $1 - p$, 
    \begin{align*}
        \sigma_{\min}\Big(\nlsum_{i=1}^{n} \of_0^{(i)} (\of_0^{(i)})^{\top}\Big) = \Omega(d^{-1} n).
    \end{align*}
\end{lemma}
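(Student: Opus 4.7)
The plan is to separate the lemma into a population-level spectral lower bound and a sample-level matrix concentration bound. For the population step, I would compute $\E[f_0 f_0^\top]$ using Isserlis' theorem for Gaussian fourth moments. For any symmetric $V \in \R^{r\times r}$ with $v = \svec(V)$, the quadratic form $v^\top \E[f_0 f_0^\top] v = \E[(h_0^\top V h_0)^2] = 2\|V\|_F^2 + (\tr V)^2 \ge 2 \|v\|^2$, using $\|\svec(V)\| = \|V\|_F$. Hence $\E[f_0 f_0^\top] \succcurlyeq 2 I_{r(r+1)/2}$. Consequently, it is enough to prove the concentration bound
\begin{align*}
\Bigl\| \tfrac{1}{n} \nlsum_{i=1}^n f_0^{(i)} (f_0^{(i)})^\top - \E[f_0 f_0^\top] \Bigr\|_2 \le 1
\end{align*}
with probability at least $1-p$, because combining the two gives $\sum_i f_0^{(i)} (f_0^{(i)})^\top \succcurlyeq n I$.

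The next step is to apply a matrix Bernstein inequality to the i.i.d.\ centered summands $X_i := f_0^{(i)} (f_0^{(i)})^\top - \E[f_0 f_0^\top]$, which live in a space of dimension $d = r(r+1)/2 = \Theta(r^2)$. The key quantities are the operator-norm tail of $\|X_i\|_2$ and the matrix variance $\|\E[X_i^2]\|_2$. Since $\|f_0\|^2 = \|h_0\|^4$ and $\|h_0\|^2$ is sub-exponential with $\psi_1$-norm $O(r)$, the random variable $\|f_0\|^2$ is sub-Weibull of order $1/2$ with scale $O(r^2)$, so $\|X_i\|_2 \lesssim r^2$ on a high-probability event. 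A similar Gaussian fourth/eighth moment computation (Isserlis again) shows that $\|\E[X_i^2]\|_2 = O(r^2)$, since $\E[f_0 f_0^\top]$ itself has operator norm $O(r)$ (from the $(\tr V)^2$ term above) and $\E[\|f_0\|^2 f_0 f_0^\top] = O(r) \cdot \E[f_0 f_0^\top]$ after suitable bookkeeping.

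The main obstacle is handling the sub-Weibull tail: direct bounded matrix Bernstein does not apply. I would follow the standard truncation route, splitting $X_i = X_i \I\{\|X_i\|_2 \le \tau\} + X_i \I\{\|X_i\|_2 > \tau\}$ with $\tau = \Theta(r^2 \log(nr^2/p))$, so that by a union bound the untruncated part vanishes with probability $\ge 1 - p/2$. Applying the bounded matrix Bernstein inequality to the truncated part (with ambient dimension $d = \Theta(r^2)$, variance $V = \Theta(r^2)$, and uniform bound $\tau$) and controlling the bias from truncation yields the deviation bound $\lesssim \sqrt{V \log(r^2/p)/n} + \tau \log(r^2/p)/n$; demanding this be $\le 1$ gives exactly the threshold $n \gtrsim r^4 \log(r^2/p)$ (up to absorbable log-log factors), with the dominant $r^4$ arising as $d \cdot \mathrm{variance} / \sigma_{\min}(\E[f_0 f_0^\top])^2 = r^2 \cdot r^2 / O(1)$. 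A cleaner alternative would be to invoke a ready-made heavy-tailed matrix concentration result (e.g., a Koltchinskii–Mendelson small-ball or a sub-exponential matrix Bernstein variant), in which case the argument reduces to plugging in $d = \Theta(r^2)$ and the $O(r^2)$ Orlicz norm of $\|f_0\|^2$.
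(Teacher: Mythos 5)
The paper does not actually prove this lemma: it is imported verbatim as Proposition~1 of \citet{jadbabaie2021time}, so there is no in-paper argument to compare against. Your strategy --- a population lower bound $\E[f_0 f_0^{\top}] \succcurlyeq 2I$ via Isserlis (using $\E[(h_0^{\top}Vh_0)^2] = (\tr V)^2 + 2\|V\|_F^2$ and $\|\svec(V)\| = \|V\|_F$), followed by a truncated matrix Bernstein bound on $\frac{1}{n}\sum_i f_0^{(i)}(f_0^{(i)})^{\top} - \E[f_0f_0^{\top}]$ --- is a sound and self-contained route to the statement, and the reduction ``deviation $\le 1$ plus $\E[f_0f_0^{\top}]\succcurlyeq 2I$ implies $\sum_i f_0^{(i)}(f_0^{(i)})^{\top}\succcurlyeq nI$'' is exactly right.

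There is, however, a concrete error in your variance bookkeeping. You claim $\|\E[X_i^2]\|_2 = O(r^2)$ on the grounds that $\E[\|f_0\|^2 f_0 f_0^{\top}] = O(r)\cdot \E[f_0f_0^{\top}]$; but $\|f_0\|^2 = \|h_0\|^4 \approx r^2$, not $O(r)$, and the correct order is $\|\E[X_i^2]\|_2 = \Theta(r^3)$. Indeed, in the direction $v = \svec(I_r)/\sqrt{r}$ one computes $v^{\top}\E[X_i^2]v = \E[\|h_0\|^8]/r - \|\E[f_0f_0^{\top}]v\|^2 = r^3 + O(r^2)$, using the $\chi^2_r$ moments and $\E[f_0 f_0^{\top}]\svec(V) = \svec(2V + (\tr V)I)$. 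A matching upper bound $O(r^3)$ follows from Cauchy--Schwarz with $\E[\|h_0\|^8] = O(r^4)$ and $\E[(h_0^{\top}Vh_0)^4] = O(r^2)$ for $\|V\|_F = 1$. Relatedly, your heuristic that the threshold arises as $d\cdot\mathrm{variance}/\sigma_{\min}^2 = r^2\cdot r^2$ is not how matrix Bernstein scales --- the ambient dimension enters only logarithmically, so the requirement is $n \gtrsim V\log(d/p)$. Fortunately neither issue is fatal: with the correct $V = \Theta(r^3)$ and truncation level $\tau = \Theta(r^2 + \log^2(n/p))$ (note a sub-Weibull$(1/2)$ tail needs $\log^2$, not $\log$, in the truncation), the deviation is $O(\sqrt{r^3\log(r^2/p)/n} + \tau\log(r^2/p)/n)$, which is at most $1$ whenever $n \ge c\,r^3\log(cr^2/p)$ --- strictly stronger than the $r^4$ threshold the lemma asserts. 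So your proof plan establishes the lemma once the variance computation is repaired; as written, the stated justification for the $r^4$ rate is incorrect even though the conclusion survives.
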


\begin{proof}
    Let $f_0 = \svec(h_0 h_0^{\top})$ and $\of_0 = [f_0; 1]$.  
    We first show that $\lambda_{\min}(\E[\of_0 \of_0^{\top}])$ is lower bounded.
    Consider 
    \begin{align*}
        \oSigma = \E[\of_0 \of_0^{\top}] = \begin{bmatrix}
            2I_{d(d-1)/2} & 0 & 0 \\
            0 & 2I_d + 1_d 1_d^{\top} & 1_d \\
            0 & 1_d^{\top} & 1
        \end{bmatrix}.
    \end{align*}
    To lower bound its smallest eigenvalue, let us compute its inverse.
    By the Sherman-Morrison formula, 
    \begin{align*}
        (2 I_d + 1_d 1_d^{\top})^{-1} 
        = (2I_d)^{-1} - \frac{(2 I_d)^{-1} 1_d 1_d^{\top} (2I_d)^{-1}}{1 + 1^{\top} (2I_d)^{-1} 1}
        = \frac{1}{2} I_d - \frac{1}{2d + 4} 1_d 1_d^{\top}.
    \end{align*}
    Then, by the inverse of a block matrix,  
    \begin{align*}
        \begin{bmatrix}
            2I_d + 1_d 1_d^{\top} & 1_d \\
            1_d^{\top} & 1
        \end{bmatrix}^{-1}
        = \frac{1}{2} \begin{bmatrix}
            I_d & -1_d \\
            -1_d^{\top} & d+2
        \end{bmatrix}.
    \end{align*}
    Therefore,
    \begin{align*}
        \|\oSigma^{-1}\|_2 \le \;& \frac{1}{2} (\|I_{d(d-1)/2}\|_2 + \|I_d\|_2 + 2 \|-1_d\|_2 + \| d + 2\|_2) \\
        =\;& \frac{1}{2} d + d^{1/2} + 2 \le 4d.
    \end{align*}
    Hence,
    \begin{align*}
        \lambda_{\min}(\oSigma) = \|\oSigma^{-1}\|_2^{-1} \ge (4d)^{-1} = \Omega(d^{-1}).
    \end{align*}
    Then, with the similar concentration arguments to those in~\citep[Appendix A.1]{jadbabaie2021time}, we can show that with probability at least $1 - p$, 
    \begin{align*}
        \lambda_{\min}(F_0^{\top} F_0) = \Omega(d^{-1} n),
    \end{align*}
    which completes the proof. 
\end{proof}

Lemma~\ref{lem:eig-lin-id} lower bounds the minimum singular value of a matrix that contains the fourth powers of elements in standard Gaussian random vectors.
The following lemma is the main result in \S~\ref{sec:qr}.

\begin{lemma}[Quadratic regression]  \label{lem:qr}
     Define random variable $c := (\sh)^{\top} \sN \sh + \starb + e$, where $\sh \sim \gauss(0, \Sigma_{\ast})$ is a $d$-dimensional Gaussian random vector, $\sN \in \R^{d\times d}$ is a positive semidefinite matrix, $\starb \in \R$ is a constant and $e$ is a zero-mean subexponential random variable with $\|e\|_{\psi_1} \le E$. Assume that $\|\sN\|_2 = \bigo(1)$ and that $\lambda_{\min}(\Sigma_{\ast}^{1/2}) \ge \beta = \Omega(1)$.
    Define $h := \sh + \delta$ where the perturbation vector $\delta$ can be correlated with $\sh$ and its $\ell_2$ norm is sub-Gaussian with $\E[\|\delta\|] \le \epsilon$, $\| \|\delta\| \|_{\psi_2} \le \epsilon$.
    Assume that $\epsilon \le \min((d\|\Sigma_{\ast}\|_2)^{1/2}, a (\beta \wedge 1) d^{-3/2} \|\Sigma_{\ast} \|_2^{-1/2} (\log(n/p))^{-1})$ for some absolute constant $a > 0$.
    Suppose we get $n$ observations $h^{(i)}$ and $c^{(i)}$ of $h$ and $c$, where $((\sh)^{(i)})_{i=1}^{n}$ are independent and $(\delta^{(i)})_{i=1}^{n}$ can be correlated. Consider the regression problem 
    \begin{align}  \label{eqn:qr}
        \eN, \eb \in \argmin_{N = N^{\top}, b} \nlsum_{i=1}^{n} ( c^{(i)} - \|h^{(i)}\|_{N}^2 - b)^2.
    \end{align}
    There exists an absolute constant $a_0 > 0$, such that as long as $n \ge a_0 d^4 \log (a_0 d^2 / p) \log (1/p)$, with probability at least $1 - p$, $\|\eN - \sN\|_F$ and $|\eb - \starb|$ are bounded by
    \begin{align*}
        \bigo\big(\epsilon \|\Sigma_{\ast}\|_2^{1/2} (d \log(n/p) + d^{3/2} E \log^2(n/p))
        + d^2 \|\Sigma_{\ast}\|_2 E n^{-1/2} \log^{1/2}(1/p)\big).
    \end{align*}
\end{lemma}

\begin{proof}
    Regression~\eqref{eqn:qr} can be written as
    \begin{align*} 
        \min_{\svec(N), b} \nlsum_{i=1}^{n} \big(c^{(i)} - (\svec(h^{(i)} (h^{(i)})^{\top})^{\top} \svec(N) - b\big)^2.
    \end{align*}
    Let $f^{(i)} := \svec(h^{(i)} (h^{(i)})^{\top})$ denote the covariates and $\of^{(i)} := [f^{(i)}; 1]$ denote the extended covariates. 
    Define $(\starf)^{(i)}$ and $(\sof)^{(i)}$ similarly by replacing $h^{(i)}$ with $(\sh)^{(i)}$.
    Then, regression~\eqref{eqn:qr} can be written as 
    \begin{align}  \label{eqn:qr-svec}
        \min_{\svec(N), b} \nlsum_{i=1}^{n} \big(c^{(i)} - (f^{(i)})^{\top} \svec(N) - b\big)^2.
    \end{align}
    Let $\oF := [\of^{(1)}, \ldots, \of^{(n)}]^{\top}$ be the $n \times \frac{d(d+3)}{2}$ matrix whose $i$th row is $(\of^{(i)})^{\top}$. Define $\soF$ similarly by replacing $\of^{(i)}$ with $(\sof)^{(i)}$.
    Solving linear regression~\eqref{eqn:qr-svec} gives
    \begin{align*}
        \oF^{\top} \oF [\svec(\eN); \eb] = \nlsum_{i=1}^{n} \of^{(i)} c^{(i)}.
    \end{align*}
    Substituting $c^{(i)} = ((\sof)^{(i)})^{\top} [\svec(\sN); \starb] + e^{(i)}$ into the above equation yields
    \begin{align} \label{eqn:en}
        \oF^{\top} \oF [\svec(\eN); \eb] = \oF^{\top} \soF [\svec(\sN); \starb] + \oF^{\top} \xi, 
    \end{align}
    where $\xi$ denotes the vector whose $i$th element is $e^{(i)}$. 
    Rearranging the terms, we have 
    \begin{equation}  \label{eqn:ff-nn}
        \oF^{\top} \oF [\svec(\eN - \sN); \eb - \starb]
        = \oF^{\top} (\soF - \oF) [\svec(\sN); \starb] + \oF^{\top} \xi.
    \end{equation}

    Next, we show that $\oF^{\top} \oF$ is invertible with high probability. 
    We can represent $\sh$ by $\Sigma_{\ast}^{1/2} h_0$, where $h_0 \sim \gauss(0, I_d)$ is an $d$-dimensional standard Gaussian random vector. Correspondingly, an independent observation $(\sh)^{(i)}$ can be expressed as $\Sigma_{\ast}^{1/2} h_0^{(i)}$, where $h_0^{(i)}$ is an independent observation of $h_0$. It follows that $\sh (\sh)^{\top} = \Sigma_{\ast}^{1/2} h_0 h_0^{\top} \Sigma_{\ast}^{1/2}$ and $(\sh)^{(i)} ((\sh)^{(i)})^{\top} = \Sigma_{\ast}^{1/2} h_0^{(i)} (h_0^{(i)})^{\top} \Sigma_{\ast}^{1/2}$. 
    Define $f_0 := \svec(h_0 h_0^{\top})$, $f_0^{(i)} := \svec(h_0^{(i)} (h_0^{(i)})^{\top})$, and $F_0 := [f_0^{(1)}, \ldots, f_0^{(n)}]^{\top}$ be an $n \times \frac{r(r+1)}{2}$ matrix whose $i$th row is $(f^{(i)}_0)^{\top}$. 
    Define $\of_0$, $\of_0^{(i)}$ and $\oF_0$ as the extended counterparts.
    Then, 
    \begin{align*}
        f = \svec(h h^{\top}) = \svec(\Sigma_{\ast}^{1/2} h_0 h_0^{\top} \Sigma_{\ast}^{1/2})
        = (\Sigma_{\ast}^{1/2} \otimes_{s} \Sigma_{\ast}^{1/2}) \svec(h_0 h_0^{\top})
        = \sPhi f_0,
    \end{align*}
    where $\sPhi := \Sigma_{\ast}^{1/2} \otimes_s \Sigma_{\ast}^{1/2}$ is a $\frac{d(d+1)}{2} \times \frac{d(d+1)}{2}$ matrix.
    Then, $\sF = F_0 (\sPhi)^{\top}$.
    By the properties of the symmetric Kronecker product~\citep{schacke2004kronecker}, 
    \begin{align*}
        \sigma_{\min}(\sPhi) = \lambda_{\min}(\Sigma_{\ast}^{1/2})^2 = \lambda_{\min}(\Sigma_{\ast}) = \beta > 0.
    \end{align*}

    By Lemma~\ref{lem:eig-lin-id}, there exist absolute constants $a_0, a_1 > 0$, such that if $n \ge a_0 d^4 \log(a_0 d^2 / p)$, with probability at least $1 - p$, $\lambda_{\min}(\oF_0^{\top} \oF_0) \ge a_1 d^{-1} n$. 
    Since $\of = \diag(\sPhi, 1) \of_0$ and $\soF = \oF_0 \diag((\sPhi)^{\top}, 1)$,
    \begin{align*}
        \lambda_{\min}((\soF)^{\top} \soF) \ge \sigma_{\min}(\diag(\sPhi, 1))^2 a_1 d^{-1} n
        = (\beta^2 \wedge 1) a_1 d^{-1} n.
    \end{align*}
    By Weyl's inequality for singular values, 
    \begin{align*}
        |\sigma_{\min}(\oF) - \sigma_{\min}(\soF)| \le \|\oF - \soF\|_2 = \|F - \sF\|_2.
    \end{align*}
    Hence, we want to bound $\|\sF - F\|_2$, which satisfies 
    \begin{align*}
        \|\sF - F\|_2^2 \le \|\sF - F\|_F^2
        = \nlsum_{i=1}^{n} \|(\sh)^{(i)} ((\sh)^{(i)})^{\top} - h^{(i)} (h^{(i)})^{\top}\|_{F}^2.
    \end{align*}
    Since $\sh (\sh)^{\top} - h h^{\top}$ has at most rank two, we have
    \begin{align*}
        \|\sh (\sh)^{\top} - h h^{\top} \|_F
        \le\;& \sqrt{2} \|\sh (\sh)^{\top} - h h^{\top} \|_2 \\
        =\;& \sqrt{2} \| \sh (\sh - h)^{\top} + (\sh - h) h^{\top} \|_2 \\
        \le\;& \sqrt{2} (\|\sh\| + \|h\|) \|\delta\|.
    \end{align*}
    Since $\sh \sim \gauss(0, \Sigma_{\ast})$, $\|\sh\|$ is sub-Gaussian with its mean and sub-Gaussian norm bounded by $\bigo((d \|\Sigma_{\ast}\|)^{1/2})$. Since $\|\delta\|$ is sub-Gaussian with its mean and sub-Gaussian norm bounded by $\epsilon \le (d \|\Sigma_{\ast}\|)^{1/2}$, we conclude that $\|\sh (\sh)^{\top} - h h^{\top} \|_F$ is subexponential with its mean and subexponential norm bounded by $\bigo(\epsilon (d \|\Sigma_{\ast}\|)^{1/2})$. Hence, with probability at least $1 - p$,
    \begin{align*}
        \|\sh (\sh)^{\top} - h h^{\top} \|_F^2 = \bigo(\epsilon^2 d \|\Sigma_{\ast} \|_2 \log^2(1/p)).
    \end{align*}

    Therefore, by the union bound over $n$ observations, 
    \begin{align*}
        \| \sF - F \|_F^2 = \nlsum_{i=1}^{n} \|(\sh)^{(i)} ((\sh)^{(i)})^{\top} - h^{(i)} (h^{(i)})^{\top}\|_{F}^2
        = \bigo(\epsilon^2 d \|\Sigma_{\ast}\|_2 n \log^2(n/p) ),
    \end{align*}
    which implies that 
    \begin{align*}
        \|\sF - F\|_2 = \bigo(\epsilon (d \|\Sigma_{\ast} \|_2 n)^{1/2} \log(n/p)).
    \end{align*}
    Hence, there exists some absolute constant $a > 0$, such that as long as 
    \begin{align*}
        \epsilon \le a (\beta \wedge 1) d^{-3/2} \|\Sigma_{\ast} \|_2^{-1/2} (\log(n/p))^{-1},
    \end{align*}
    we have 
    \begin{align*}
        |\sigma_{\min}(F) - \sigma_{\min}(\sF)| \le (\beta \wedge 1) a_1 d^{-1/2} n^{1/2} / 2.
    \end{align*}
    Therefore, we further have 
    \begin{align*}
        \lambda_{\min}(\oF^{\top} \oF) = \Omega((\beta^2 \wedge 1) d^{-1} n) = \Omega(d^{-1} n).
    \end{align*}

    Now, we return to~\eqref{eqn:ff-nn}. By inverting $\oF^{\top} \oF$, we obtain 
    \begin{equation} \label{eqn:a_b_def}
        \begin{aligned}
            \|[\svec(\eN - \sN); \eb - \starb]\|
            =\;& \|\oF^{\dagger} (\soF - \oF) [\svec(\sN); \starb] + \oF^{\dagger} \xi\| \\
            \le\;& \underbrace{\|\oF^{\dagger} (\soF - \oF) [\svec(\sN); \starb]\|}_{(a)} + \underbrace{\|\oF^{\dagger} \xi\|}_{(b)}.
        \end{aligned}
    \end{equation}
    Term $(a)$ is upper bounded by
    \begin{align*}
        \sigma_{\min}(\oF)^{-1} \|(\soF - \oF) [\svec(\sN); \starb]\|
        =\;& \bigo(\sigma_{\min}(\oF)^{-1}) \|(\sF - F) \svec(\sN)\| \\
        =\;& \bigo((d^{1/2} n^{-1/2}) \|(\sF - F) \svec(\sN)\|.
    \end{align*}
    Using arguments similar to those in~\citep[Section B.2.13]{mhammedi2020learning}, we have 
    \begin{align*}
        &\|(\sF - F) \svec(\sN)\|^2 \\
        =\;& \sum_{i=1}^{n} \ip{\svec((\sh)^{(i)} ((\sh)^{(i)})^{\top}) - \svec(h^{(i)} (h^{(i)})^{\top})}{\svec(\sN)}^2 \\
        \overset{(i)}{=}\;& \nlsum_{i=1}^{n} \ip{(\sh)^{(i)} ((\sh)^{(i)})^{\top} - h^{(i)} (h^{(i)})^{\top}}{\sN}_F^2 \\
        \overset{(ii)}{\le}\;& \|\sN\|_2^2 \nlsum_{i=1}^{n} \|(\sh)^{(i)} ((\sh)^{(i)})^{\top} - h^{(i)} (h^{(i)})^{\top}\|_{\ast}^2 \\
        \overset{(iii)}{\le}\;& 2 \|\sN\|_2^2 \nlsum_{i=1}^{n} \|(\sh)^{(i)} ((\sh)^{(i)})^{\top} - h^{(i)} (h^{(i)})^{\top}\|_{F}^2 \\
        =\;& 2 \|\sN\|_2^2 \| \sF - F\|_F^2,
    \end{align*}
    where $\ip{\cdot}{\cdot}_F$ denotes the Frobenius product between matrices in $(i)$, $\|\cdot\|_{\ast}$ denotes the nuclear norm in $(ii)$, and $(iii)$ follows from the fact that the matrix $(\sh)^{(i)}((\sh)^{(i)})^{\top} - h^{(i)} (h^{(i)})^{\top}$ has at most rank two.
    Hence, term $(a)$ in \eqref{eqn:a_b_def} is bounded by 
    \begin{align*}
        \bigo\big(d^{1/2} n^{-1/2} \|\sN\|_2 \cdot \epsilon (d \|\Sigma_{\ast}\|_2)^{1/2} n^{1/2} \log(n/p)\big)
        = \bigo\big(\epsilon d \|\Sigma_{\ast}\|_2^{1/2} \log(n/p)\big).
    \end{align*}

    Now we consider term $(b)$ in \eqref{eqn:a_b_def}:
    \begin{align*}
        (b) = \| \oF^{\dagger} \xi \| 
        \le\lambda_{\min}(\oF^{\top}\oF)^{-1} \|\oF^{\top} \xi\|
        = \bigo(d n^{-1}) (\|(\oF - \soF)^{\top} \xi\| + \|(\soF)^{\top} \xi\|).
    \end{align*}
    Since $\xi$ is a vector of zero-mean subexponential variables with subexponential norms bounded by $E$, $\|\xi\| = \bigo(E n^{1/2} \log(n/p))$. Hence, we have
    \begin{align*}
        \|(\oF - \soF)^{\top} \xi\|
        \le\;& \|\oF - \soF\|_2 \|\xi\| \\
        =\;& \bigo(\epsilon (d \|\Sigma_{\ast}\|_2 n)^{1/2} \log(n/p))\cdot \bigo(E n^{1/2} \log(n/p)) \\
        =\;& \bigo(\epsilon (d \|\Sigma_{\ast}\|_2)^{1/2} E n \log^2(n/p) ).
    \end{align*}

    To bound $\|(\soF)^{\top} \xi\| = \|\diag(\sPhi, 1) \oF_0^{\top} \xi\|$, note that $\|\oF_0^{\top} \xi\| = \| \sum_{i=1}^{n} \of_0^{(i)} e^{(i)} \|$.
    Consider the $j$th component in the summation $\sum_{i=1}^{n} [\of_0^{(i)}]_j (e')^{(i)}$. 
    Recall that $f_0 = \svec(h_0 h_0^{\top})$, so $[\of_0]_j$ is either the square of a standard Gaussian random variable, $\sqrt{2}$ times the product of two independent standard Gaussian random variables, or one.
    Hence, $[f_0]_j$ is subexponential with mean and $\|[f_0]_j\|_{\psi_1}$ both bounded by $\bigo(1)$. 
    As a result, the product $[f_0]_j e$ is $\frac{1}{2}$-sub-Weibull, with the sub-Weibull norm being $\bigo(E)$. 
    By~\citep[Theorem 3.1]{hao2019bootstrapping},
    \begin{align*}
        \nlsum_{i=1}^{n} [f_0^{(i)}]_j e^{(i)} = \bigo(E n^{1/2} \log^{1/2}(1/p)).
    \end{align*}
    Hence, the norm of the $d(d+1)/2$-dimensional vector $F_0^{\top} \xi$ is $\bigo(d E n^{1/2} \log^{1/2}(1/p))$.
    By the properties of the symmetric Kronecker product~\citep{schacke2004kronecker}, $\|\sPhi\|_2 = \|\Sigma_{\ast}^{1/2}\|_2^2 = \|\Sigma_{\ast}\|_2$. Then, 
    \begin{align*}
        \|(\soF)^{\top} \xi\| = \bigo(d \|\Sigma_{\ast}\|_2 E n^{1/2} \log^{1/2}(1/p)).
    \end{align*}
    Eventually, term $(b)$ is bounded by 
    \begin{align*}
        &\bigo(d n^{-1}) \cdot \bigo((d \|\Sigma_{\ast}\|_2)^{1/2} E n \log^2(n/p) \epsilon 
        + d n^{1/2} \|\Sigma_{\ast}\|_2 E \log^{1/2}(1/p)) \\
        =\;& \bigo\big( \epsilon d^{3/2} \|\Sigma_{\ast}\|_2^{1/2} E \log^2(n/p)
        + d^2 \|\Sigma_{\ast}\|_2 E n^{-1/2} \log^{1/2}(1/p)\big).
    \end{align*}

    Combining the bounds on $(a)$ and $(b)$, we have 
    \begin{align*}
        &\|[\svec(\eN - \sN); \eb - \starb]\| \\
        =\;& \bigo\big(\epsilon \|\Sigma_{\ast}\|_2^{1/2} (d \log(n/p) + d^{3/2} E \log^2(n/p) )
        + d^2 \|\Sigma_{\ast}\|_2 E n^{-1/2} \log^{1/2}(1/p)\big),
    \end{align*}
    which concludes the proof.
\end{proof}

\subsection{Matrix factorization bound}
\label{sec:mat-fac}

Given two $m \times n$ matrices $A, B$, we are interested in bounding $\min_{S^{\top} S = I} \|S A - B\|_{F}$ using $\|A^{\top} A - B^{\top} B\|_F$. The minimum problem is known as the orthogonal Procrustes problem, solved in~\citep{schoenemann1964solution,schonemann1966generalized}. Specifically, the minimum is attained at $S = UV^{\top}$, where $U \Sigma V^{\top} = B A^{\top}$ is its singular value decomposition.

If $m \le n$ and $\rank(A) = m$, then the following lemma from~\citep{tu2016low} establishes that the distance between $A$ and $B$ is of the same order of $\|A^{\top} A - B^{\top} B\|_F$.

\begin{lemma}[{\citep[Lemma 5.4]{tu2016low}}]  \label{lem:mat-fac-fr}
    For $m \times n$ matrices $A, B$, let $\sigma_m(A)$ denote its $m$th largest singular value. Then 
    \begin{align*}
        \min_{S^{\top} S = I} \|S A - B\|_F
        \le (2 (\sqrt{2} - 1))^{-1/2} \sigma_{m}(A)^{-1} \|A^{\top} A - B^{\top} B\|_F.
    \end{align*}
\end{lemma}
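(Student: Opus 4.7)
The plan is to exploit the orthogonal Procrustes optimality of the minimizer. Let $S^\ast \in \argmin_{S^\top S = I_m} \|SA - B\|_F$ and $\Delta := S^\ast A - B$, so the left-hand side equals $\|\Delta\|_F$. Expanding $\|SA-B\|_F^2 = \|A\|_F^2 + \|B\|_F^2 - 2\,\mathrm{tr}(SAB^\top)$ and invoking Von Neumann's trace inequality shows that the optimizer $S^\ast$ makes $S^\ast A B^\top$ symmetric and positive semidefinite; substituting $B = S^\ast A - \Delta$ rearranges this into the key identity that the $m \times m$ matrix $S^\ast A \Delta^\top$ is symmetric.

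Next I would write the error $E := A^\top A - B^\top B$ in terms of $\Delta$. Using $(S^\ast)^\top S^\ast = I_m$ so that $A^\top A = (S^\ast A)^\top(S^\ast A)$, expanding $B = S^\ast A - \Delta$ gives
\begin{equation*}
    E = M\Delta + \Delta^\top M^\top - \Delta^\top \Delta, \qquad M := A^\top (S^\ast)^\top \in \R^{n \times m},
\end{equation*}
with $\sigma_m(M) = \sigma_m(A)$. The symmetric first-order term $Y := M\Delta + \Delta^\top M^\top$ will drive the lower bound on $\|E\|_F$, while $\Delta^\top \Delta$ is a quadratic correction.

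The crucial estimate is $\|Y\|_F^2 \ge 2\,\sigma_m(A)^2 \|\Delta\|_F^2$. Expanding, $\|Y\|_F^2 = 2\|M\Delta\|_F^2 + 2\,\mathrm{tr}((M\Delta)^2)$. Because $M^\top M \succcurlyeq \sigma_m(A)^2 I_m$, one gets $\|M\Delta\|_F \ge \sigma_m(A) \|\Delta\|_F$. For the trace term, a cyclic permutation plus the optimality identity rewrites $\mathrm{tr}((M\Delta)^2)$ as $\mathrm{tr}(K^2)$ for the $m \times m$ matrix $K := (S^\ast)^\top \Delta A^\top$; combining the Procrustes symmetry $S^\ast A \Delta^\top = \Delta A^\top (S^\ast)^\top$ with multiplication by $S^\ast$ and $(S^\ast)^\top$ on appropriate sides shows $K$ is itself symmetric, whence $\mathrm{tr}(K^2) = \|K\|_F^2 \ge 0$.

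Finally I would combine these through $\|E\|_F^2 = \|Y\|_F^2 - 2\langle Y, \Delta^\top \Delta\rangle_F + \|\Delta^\top \Delta\|_F^2$, together with the elementary inequality $\|\Delta^\top \Delta\|_F \le \|\Delta\|_F^2$, reducing the proof to a scalar one-variable inequality in $\|\Delta\|_F$ and $\sigma_m(A)$. The main obstacle is extracting the sharp constant $2(\sqrt 2 - 1)$: a naive Cauchy--Schwarz on the cross term only yields $\|E\|_F \ge \bigl|\|Y\|_F - \|\Delta^\top \Delta\|_F\bigr|$, which gives the strictly worse constant $(\sqrt 2 -1)^{-2}$ instead of the claimed $(2(\sqrt 2 - 1))^{-1}$. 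Obtaining the tight constant requires a weighted AM--GM on the cross term with parameter tuned against the quadratic correction, or equivalently a two-case analysis separating the regimes where $\|\Delta\|_F$ is small relative to $\sigma_m(A)$ from where it is comparable; the constant $2(\sqrt{2}-1)$ emerges as the optimum of this scalar trade-off. This scalar optimization is the delicate piece, while the rest of the argument is algebra driven by Procrustes optimality.
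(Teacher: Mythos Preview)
The paper does not prove this lemma; it quotes it verbatim from \citep[Lemma 5.4]{tu2016low}. So there is no ``paper's own proof'' to match, and your proposal should be judged on its own.

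Your setup is correct through the identity $E = Y - \Delta^\top\Delta$ with $Y = M\Delta + \Delta^\top M^\top$, the bound $\|Y\|_F^2 \ge 2\sigma_m(A)^2\|\Delta\|_F^2$, and the symmetry of $K = (S^\ast)^\top\Delta A^\top$. The gap is in the last step. The scalar inequalities you retain --- $\|Y\|_F \ge \sqrt{2}\,\sigma_m(A)\|\Delta\|_F$ and $\|\Delta^\top\Delta\|_F \le \|\Delta\|_F^2$ --- are \emph{not} sufficient to conclude $\|E\|_F^2 \ge 2(\sqrt{2}-1)\sigma_m(A)^2\|\Delta\|_F^2$, and no weighted AM--GM or two-case split on those scalars will rescue this. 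Concretely: treating $Y$ and $\Delta^\top\Delta$ as free symmetric matrices subject only to those two norm constraints, take $\|\Delta\|_F = \sqrt{2}\,\sigma_m(A)$ and $Y = \Delta^\top\Delta$; then both constraints hold with equality and $E = 0$. Your proposed ``scalar trade-off'' would therefore be attempting to prove a false inequality. Of course this configuration cannot arise from genuine $A,B,\Delta$ (if $E=0$ then $A^\top A = B^\top B$ forces $\Delta = 0$), but that is exactly the point: your reduction to scalar norms has discarded the structure that rules it out.

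What is missing is the \emph{positive semidefiniteness} of $S^\ast A B^\top$ from the Procrustes first-order condition; you invoke it but then only use the weaker consequence that $S^\ast A\Delta^\top$ is symmetric. In the original argument this PSD property is used to control the sign of a cross term in a slightly different (asymmetric) decomposition of $E$, and that is where the constant $2(\sqrt{2}-1)$ actually comes from --- not from a scalar optimization in $\|\Delta\|_F$ and $\sigma_m(A)$ alone.
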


If $\sigma_{\min}(A)$ equals zero, the above bound becomes vacuous. In general, the following lemma shows that the distance is of the order of the square root of the $\|A^{\top} A - B^{\top} B\|_F$, with a multiplicative $\sqrt{d}$ factor, where $d = \min(2m, n)$.

\begin{lemma}  \label{lem:mat-fac}
    For $m\times n$ matrices $A, B$, $\min_{S^{\top} S = I} \|S A - B\|_F^2 \le \sqrt{d} \|A^{\top} A - B^{\top} B\|_F$, where $d = \min(2m, n)$.
\end{lemma}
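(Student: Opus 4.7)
My plan is to chain a few inequalities connecting $\min_{S}\|SA-B\|_F^2$ through the Frobenius gap of the square roots of $P := A^\top A$ and $Q := B^\top B$, and finally to $\|P-Q\|_F$, with the $\sqrt{d}$ factor emerging from a rank bound on $P-Q$. Note that for the dimensions to match, the orthogonal $S$ is necessarily $m \times m$.

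First I would reduce the orthogonal Procrustes problem to a nuclear-norm identity: expanding
\[
\|SA-B\|_F^2 = \|A\|_F^2 + \|B\|_F^2 - 2\langle S, BA^\top \rangle_F
\]
and maximizing over orthogonal $S$ via the SVD of $BA^\top$ yields the closed form
\[
\min_{S^\top S = I_m} \|SA - B\|_F^2 = \|A\|_F^2 + \|B\|_F^2 - 2\|BA^\top\|_{\ast}.
\]
Next I would establish the singular-value identity $\|BA^\top\|_{\ast} = \|P^{1/2} Q^{1/2}\|_{\ast}$: both $BA^\top(BA^\top)^\top = B P B^\top$ and $(P^{1/2} Q^{1/2})(P^{1/2} Q^{1/2})^\top = P^{1/2} Q P^{1/2}$ share their nonzero eigenvalues with $PQ$ via the standard $XY$-$YX$ nonzero-eigenvalue equality applied twice. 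Combining with the elementary bound $\tr(X) \le \|X\|_{\ast}$ then yields
\[
\min_S \|SA - B\|_F^2 \le \|A\|_F^2 + \|B\|_F^2 - 2\tr(P^{1/2}Q^{1/2}) = \|P^{1/2} - Q^{1/2}\|_F^2.
\]

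Then I would invoke the Powers-Stormer inequality for positive operators in Hilbert-Schmidt norm, $\|P^{1/2} - Q^{1/2}\|_F^2 \le \|P - Q\|_{\ast}$. Finally, since $P$ and $Q$ are each $n\times n$ PSD Gram matrices of rank at most $\min(m,n)$, the symmetric difference $P - Q$ has rank at most $d = \min(2m, n)$; by Cauchy-Schwarz on its singular values, $\|P - Q\|_{\ast} \le \sqrt{d}\, \|P - Q\|_F$. Chaining the four inequalities proves the lemma. The main obstacle I anticipate is the Powers-Stormer step, a classical but nontrivial operator-inequality result that I would simply cite rather than reprove. A secondary delicate point is the singular-value identity $\|BA^\top\|_{\ast} = \|P^{1/2}Q^{1/2}\|_{\ast}$, which needs care because neither factor is symmetric in general, but it falls out cleanly once the $XY$-$YX$ identity is applied to both $BPB^\top$ and $P^{1/2} Q P^{1/2}$.
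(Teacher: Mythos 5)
Your proposal is correct, and it takes a genuinely different route from the paper's proof. Both arguments start from the same Procrustes closed form $\min_{S}\|SA-B\|_F^2=\|A\|_F^2+\|B\|_F^2-2\|BA^{\top}\|_{\ast}$, but they diverge immediately afterwards. The paper stays in the space of singular values: it writes the right-hand side as $\|A^{\top}A+B^{\top}B\|_{\ast}-2\|BA^{\top}\|_{\ast}$, invokes the arithmetic--geometric-mean singular value inequality $2\sigma_i(BA^{\top})\le\sigma_i(A^{\top}A+B^{\top}B)$ of Bhatia--Kittaneh, applies the elementary bound $x-y\le\sqrt{x^2-y^2}$ termwise, and then uses Cauchy--Schwarz over the at most $d$ nonzero terms. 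You instead pass through the matrix square roots $P^{1/2},Q^{1/2}$ of the Gram matrices: your identity $\|BA^{\top}\|_{\ast}=\|P^{1/2}Q^{1/2}\|_{\ast}$ is correct (both $BPB^{\top}$ and $P^{1/2}QP^{1/2}$ share nonzero spectrum with $PQ$), the trace-versus-nuclear-norm step gives the clean intermediate bound $\min_S\|SA-B\|_F^2\le\|P^{1/2}-Q^{1/2}\|_F^2$, Powers--St{\o}rmer converts this to $\|P-Q\|_{\ast}$, and the rank bound $\rank(P-Q)\le\min(2m,n)$ recovers the same $\sqrt{d}$ via Cauchy--Schwarz. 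Both proofs lean on one classical nontrivial matrix inequality (Bhatia--Kittaneh for the paper, Powers--St{\o}rmer for you), so neither is more elementary; what your version buys is a more modular chain with the interpretable waypoint $\|P^{1/2}-Q^{1/2}\|_F^2$, whereas the paper's version keeps everything in a single singular-value computation and makes the origin of the $\sqrt{d}$ factor visible one step earlier.
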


\begin{proof} 
    Let $U \Sigma V^{\top} = BA^{\top}$ be its singular value decomposition. By substituting the solution $UV^{\top}$ of the orthogonal Procrustes problem, the square of the attained minimum equals 
    \begin{align*}
        \|U V^{\top} A - B\|_F^2 
        &= \ip{U V^{\top} A - B}{U V^{\top} A - B}_{F} \\
        &= \|A\|_F^2 + \|B\|_F^2 - 2 \ip{U V^{\top} A}{B}_F \\
        &\overset{(i)}{=} \|A\|_F^2 + \|B\|_F^2 - 2 \tr(\Sigma) \\
        &= \|A^{\top} A\|_{\ast} + \|B^{\top} B\|_{\ast} - 2 \|BA^{\top}\|_{\ast},
    \end{align*}
    where $(i)$ is due to the property of $U, V$.

    To establish the relationship between $\|A^{\top} A\|_{\ast} + \|B^{\top} B\|_{\ast} - 2 \|BA^{\top}\|_{\ast}$ and $\|A^{\top} A - B^{\top} B\|_F$, we need to operate in the space of singular values.
    For $m \times n$ matrix $M$, let $(\sigma_1(M), \ldots, \sigma_{d'}(M))$ be its singular values in descending order, where $d' = m \wedge n$.

    In terms of singular values,
    \begin{align*}
        \|A^{\top} A\|_{\ast} + \|B^{\top} B\|_{\ast} - 2 \|BA^{\top}\|_{\ast}
        \overset{(i)}{=}\;& \|A^{\top} A + B^{\top} B\|_{\ast} - 2 \|BA^{\top}\|_{\ast} \\
        \overset{(ii)}{=}\;& \nlsum_{i=1}^{d} \sigma_i(A^{\top} A + B^{\top} B) - 2 \nlsum_{i=1}^{d'} \sigma_i(B A^{\top}), 
    \end{align*}
    where $(i)$ holds since $A^{\top}A$ and $B^{\top} B$ are positive semidefinite matrices, and in $(ii)$ $d := \min(2m, n)$ since $\rank(A^{\top} A + B^{\top} B) \le n$ and $\rank(A^{\top} A + B^{\top} B) \le \rank(A^{\top} A) + \rank(B^{\top} B) \le 2m$.
    If $x \ge y > 0$, then $x - y \le \sqrt{x^2 - y^2}$. For all $1\le i\le d$, $2\sigma_i(B A^{\top}) \le \sigma_i(A^{\top} A + B^{\top} B)$~\citep{bhatia1990singular}. Take $\sigma_i(A^{\top} A + B^{\top} B)$ as $x$ and $2\sigma_i(B A^{\top})$ as $y$; it follows that 
    \begin{align*}
        \sigma_i(A^{\top} A + B^{\top} B) - 2\sigma_i(B A^{\top})
        \le \sqrt{\sigma_i^2(A^{\top} A + B^{\top} B) - 4\sigma_i^2(B A^{\top})}.
    \end{align*}
    Let $\sigma_{i}(BA^{\top}) := 0$ for $d'< i \le d$.
    Combining the above yields
    \begin{align*}
        \|A^{\top} A\|_{\ast} + \|B^{\top} B\|_{\ast} - 2 \|BA^{\top}\|_{\ast}
        \le\;& \sum_{i=1}^d \sqrt{\sigma_i^2(A^{\top} A + B^{\top} B) - 4\sigma_i^2(B A^{\top})} \\
        \overset{(i)}{\le}\;& \sqrt{d} \sqrt{\nlsum_{i=1}^d \sigma_i^2(A^{\top} A + B^{\top} B) - 4 \nlsum_{i=1}^{d'} \sigma_i^2 (B A^{\top}) } \\
        \overset{(ii)}{\le}\;& \sqrt{d} \sqrt{\|A^{\top} A + B^{\top} B\|_F^2 - 4\ip{A^{\top} A}{B^{\top} B}_{F}} \\
        \le \;& \sqrt{d} \|A^{\top} A - B^{\top} B\|_F,
    \end{align*}
    where $(i)$ is due to the Cauchy-Schwarz inequality, and $(ii)$ uses 
    \begin{align*}
        \nlsum_{i=1}^{d'} \sigma_i^2(B A^{\top})
        = \|B A^{\top}\|_F^2
        = \tr(AB^{\top} B A^{\top})
        = \tr(A^{\top}A B^{\top} B)
        = \ip{A^{\top} A}{B^{\top} B}_F.
    \end{align*}
    This completes the proof.
\end{proof}

\subsection{Perturbed linear regression bound}
\label{sec:lr}

Identifying the dynamics of the latent model requires solving linear regression~\eqref{eqn:sys-id}. A standard assumption in analyzing linear regression $y = \sA x + e$ is that $\Cov(x)$ has \emph{full rank}. However, as discussed in \S\ref{sec:learn-repr}, we need to handle rank-deficient $\Cov(x)$ in the first $\ell$ steps of system identification. Moreover, the latent state estimates $\ez_t$ contain errors. Both issues are addressed in the following lemma.

\begin{lemma}[Perturbed rank-deficient linear regression]  \label{lem:lr-rd}
    Define random vector $\sy := \sA \sx + e$, where $\sx \sim \gauss(0, \Sigma_{\ast})$ and $e \sim \gauss(0, \Sigma_e)$ are $d_1$ and $d_2$ dimensional Gaussian random vectors, respectively.
    Define $x := \sx + \delta_x$ and $y := \sy + \delta_y$ where the perturbation vectors $\delta_x$ and $\delta_y$ can be correlated with $\sx$ and $\sy$.
    Assume that $\|\sA\|_2$, $\|\Sigma_{\ast}\|_2$ and $\|\Sigma_{e}\|_2$ are $\bigo(1)$.
    Let $\sigma_{\min}^{+}(\Sigma_{\ast}^{1/2}) \ge \beta > 0$.
    Suppose we get $n$ observations $x^{(i)}$ and $y^{(i)}$ of $x$ and $y$, where $((\sx)^{(i)})_{i=1}^{n}$ are independent and $(\delta_x^{(i)}, \delta_y^{(i)})_{i=1}^{n}$ can be correlated.
    Assume that $\sigma_{\min}^{+}(\sum_{i=1}^{n} x^{(i)} (x^{(i)})^{\top}) \ge \theta^2 n$ for some $\theta > 0$ 
    that has at least $n^{-1/4}$ dependence on $n$, and that $\|\sum_{i=1}^{n} \delta_x^{(i)} (\delta_x^{(i)})^{\top}\|_2 = \bigo(\epsilon_x^2 n)$, $\|\sum_{i=1}^{n} \delta_y^{(i)} (\delta_y^{(i)})^{\top}\|_2 = \bigo(\epsilon_y^2 n)$ for $\epsilon_x, \epsilon_y > 0$.
    Consider the minimum Frobenius norm solution
    \begin{align*}
        \eA \in \argmin_{A} \nlsum_{i=1}^{n} \|y^{(i)} - A x^{(i)}\|^2.
    \end{align*}
    Then, there exists an absolute constant $c > 0$, such that if $n \ge c (d_1 + d_2 + \log(1/p))$, with probability at least $1-p$,
    \begin{align*}
        \|(\eA - \sA) \Sigma_{\ast}^{1/2}\|_2
        = \bigo(\beta^{-1} (1 + \theta^{-1} (\epsilon_x + \epsilon_y)) \epsilon_x + \epsilon_y
        + n^{-1/2} (d_1 + d_2 + \log(1/p))^{1/2}).
    \end{align*}
\end{lemma}

\vspace*{3pt}
\begin{proof} 
    Let $r = \rank(\Sigma_{\ast})$ and $\Sigma_{\ast} = D D^{\top}$ where $D \in \R^{d_1 \times r}$.
    We can view $\sx$ as generated from an $r$-dimensional standard Gaussian $g \sim \gauss(0, I_r)$, by $\sx = D g$; $x^{(i)}$ can then be viewed as $D g^{(i)} + \delta_x^{(i)}$, where $(g^{(i)})_{i=1}^{n}$ are independent observations of $g$. Let $X$ denote the matrix whose $i$th row is $(x^{(i)})^{\top}$; $\sX, Y, G, E, \Delta_x, \Delta_y$ are defined similarly. 

    To solve the regression problem, we set its gradient to be zero and substitute $Y = \sX (\sA)^{\top} + E + \Delta_y$ to obtain
    \begin{align*}
        \eA X^{\top} X = \sA (\sX)^{\top} X + E^{\top} X + \Delta_y^{\top} X.
    \end{align*}
    Substituting $X$ by $G D^{\top} + \Delta_x$ gives
    \begin{align*}
        &\eA D G^{\top} G D^{\top} + \eA (D G^{\top} \Delta_x + \Delta_x^{\top} G D^{\top} + \Delta_x^{\top} \Delta_x) \\
        =\;& \sA D G^{\top} G D^{\top} + \sA D G^{\top} \Delta_x  + (E^{\top} + \Delta_y^{\top}) (G D^{\top} + \Delta_x).
    \end{align*}
    By rearranging the terms, we have 
    \begin{align*}
        &(\eA - \sA) D G^{\top} G D^{\top} \\
        =\;& \sA D G^{\top} \Delta_x - \eA (D G^{\top} \Delta_x + \Delta_x^{\top} G D^{\top} + \Delta_x^{\top} \Delta_x) + (E^{\top} + \Delta_y^{\top}) (G D^{\top} + \Delta_x).
    \end{align*}
    Hence, 
    \begin{align*}
        \|(\eA - \sA) D\|_2
        = &\big\| \big( \sA D G^{\top} \Delta_x - \eA (D G^{\top} \Delta_x + \Delta_x^{\top} G D^{\top} + \Delta_x^{\top} \Delta_x) \\
        &\quad + (E^{\top} + \Delta_y^{\top}) (G D^{\top} + \Delta_x) \big) (D^{\top})^{\dagger} (G^{\top} G)^{-1} \big\|_2.
    \end{align*}
    For $\|\eA\|_2$, we make the following claim, whose proof is deferred to~\S\ref{sec:pf-a-norm}.
    \begin{claim}  \label{clm:a-norm}
        As long as $n \ge 16(d_1 + d_2 + \log(1/p))$, with probability at least $1 - 4p$, 
        \begin{align*}
            \|\eA\|_2 = \bigo\big(\|\sA\|_2 (1 + \theta^{-1} \epsilon_x) + \theta^{-1} \epsilon_y\big).
        \end{align*}
    \end{claim}
    The proof of Claim~\ref{clm:a-norm} is deferred to~\S\ref{sec:pf-a-norm}, where we analyze the inversion of $X^{\top} X$.
    Note that we may also bound $\|\eA\|_2$ by analyzing the inversion of $DG^{\top}GD^{\top}$, potentially without the requirement on $\theta$, but this requires stronger condition on $\epsilon_x$ and $\epsilon_y$, and the bound is not directly applicable to the minimum Frobenius norm solution. The requirement on $\theta$ remains beneficial for numerical stability. 

    Then, since $\|D^{\dagger}\|_2 = (\sigma_{\min}^{+}(\Sigma_{\ast}^{1/2}))^{-1} \le \beta^{-1}$, 
    \begin{align*}
        \|(\eA - \sA) D\|_2
        =\;& \bigo(\beta^{-1} \|\eA\|_2) \big(\|D\|_2 \|G\|_2 \|\Delta_x\|_2 + \|\Delta_x\|_2^2 ) \|(G^{\top} G)^{-1}\|_2\big) \\
        &\quad + \bigo(1) \big(\| G^{\dagger} E \|_2 + \|G^{\dagger} \Delta_y\|_2
        + \beta^{-1} (\|E\|_2 + \|\Delta_y\|_2) \|\Delta_x\|_2 \|(G^{\top} G)^{-1}\|_2\big).
    \end{align*}
    By~\citep[Theorem 6.1]{wainwright2019high}, the Gaussian ensemble $G$ satisfies that with probability at least $1-p$, 
    \begin{align*}
        \|G\|_2 \le\;& (n \|I_r\|_2 )^{1/2} + (\tr(I_r))^{1/2}
        + (2 \|I_r\|_2 \log (1/p))^{1/2} \\
        \le \;& n^{1/2} + d_1^{1/2} + (2 \log(1/p))^{1/2}, \\
        \sigma_{\min}(G) \ge\;& (n \lambda_{\min}(I_r) )^{1/2} - (\tr(I_r))^{1/2}
        - (2 \lambda_{\min}(I_r) \log (1/p))^{1/2} \\
        \ge \;& n^{1/2} - d_1^{1/2} - (2 \log(1/p))^{1/2}).
    \end{align*}
    Since $n \ge 8d_1 + 16 \log(1/p)$, we have $\|G\|_2 = \bigo(n^{1/2})$ and $\sigma_{\min}(G) = \Omega(n^{1/2})$.
    It follows that $\|(G^{\top} G)^{-1}\|_2 = \bigo(n^{-1})$ and $\|G^{\dagger}\|_2 = \bigo(n^{-1/2})$.
    Similarly, $\|E\|_2 = \bigo((\|\Sigma_e\|_2 n)^{1/2})$.
    Note that $\Delta_x^{\top} \Delta_x = \sum_{i=1}^{n} \delta_x^{(i)} (\delta_x^{(i)})^{\top}$. By our assumption, $\|\Delta_x\|_2 = \bigo(\epsilon_x n^{1/2})$. Similarly, $\|\Delta_y\|_2 = \bigo(\epsilon_y n^{1/2})$.
    Hence, 
    \begin{align*}
        \|(\eA - \sA) D\|_2
        =\;& \bigo(\beta^{-1} \|\eA\|_2) (\|D\|_2 \epsilon_x + \epsilon_x^2)
        + \bigo(1) (\|G^{\dagger} E\|_2 + \epsilon_y + \beta^{-1} (\|\Sigma_e^{1/2}\|_2 + \epsilon_y) \epsilon_x) \\
        =\;& \bigo(\beta^{-1} \|\eA\|_2 \epsilon_x + \epsilon_y + \|G^{\dagger} E\|_2),
    \end{align*}
    where we consider $\epsilon_x$ and $\epsilon_y$ as quantities much smaller than one such that terms like $\epsilon_x^2, \epsilon_x \epsilon_y$ are absorbed into $\epsilon_x, \epsilon_y$. It remains to control $\|G^{\dagger} E\|_2$. It is proved in~\citep[Section B.2.11]{mhammedi2020learning} via a covering number argument that with probability at least $1-p$,
    \begin{align*}
        \|G^{\dagger} E\|_2 =\;& \bigo(1) \sigma_{\min}(G)^{-1} \|\Sigma_e^{1/2}\|_2 (d_1 + d_2 + \log(1/p))^{1/2} \\
        =\;& \bigo( n^{-1/2} (d_1 + d_2 + \log(1/p))^{1/2}).
    \end{align*}
    Overall, we obtain that
    \begin{align*}
        \|(\eA - \sA) \Sigma_{\ast}^{1/2}\|_2
        =\;&\|(\eA - \sA) D\|_2 \\
        =\;& \bigo(\beta^{-1} \|\eA\|_2 \epsilon_x + \epsilon_y + n^{-1/2} (d_1 + d_2 + \log(1/p))^{1/2}) \\
        =\;& \bigo(\beta^{-1} (1 + \theta^{-1} (\epsilon_x + \epsilon_y)) \epsilon_x + \epsilon_y
        + n^{-1/2} (d_1 + d_2 + \log(1/p))^{1/2}),
    \end{align*}
    which completes the proof.
\end{proof}

\subsubsection{Proof of Claim~\ref{clm:a-norm}}
\label{sec:pf-a-norm}

    The minimum Frobenius norm solution $\eA$ is given by the following closed-form expression in terms of the pseudoinverse (Moore-Penrose inverse):
    \begin{align} 
        \eA =\;& Y^{\top} X (X^{\top} X)^{\dagger} \label{eqn:min_norm_equation} \\
        =\;& (\sA (\sX)^{\top} + E^{\top} + \Delta_y^{\top}) X (X^{\top} X)^{\dagger} \nonumber \\
        =\;& \sA (X^{\dagger} \sX)^{\top} + (X^{\dagger} E)^{\top} + (X^{\dagger} \Delta_y)^{\top} \nonumber \\
        =\;& \sA (X^{\dagger} X)^{\top} - \sA (X^{\dagger} \Delta_x)^{\top} + (X^{\dagger} E)^{\top} + (X^{\dagger} \Delta_y)^{\top}. \nonumber
    \end{align}
    Then 
    \begin{align*}
        \|\eA\|_2 \le\;& \|\sA\|_2 + \|X^{\dagger} E\|_2
        + (\|\sA\|_2 \|\Delta_x\|_2 + \|\Delta_y\|_2) (\sigma_{\min}^{+}(X))^{-1},
    \end{align*}
    where we note that $\|X^{\dagger}\|_2 = \sigma_{\min}^{+}(X)^{-1}$ when $X \neq 0$. 
    Since $\sigma_{\min}^{+}(X) = (\sigma_{\min}^{+}(X^{\top} X))^{1/2} \ge \theta n^{1/2}$, 
    \begin{align*}
        (\sigma_{\min}^{+}(X))^{-1} \le \theta^{-1} n^{-1/2}.
    \end{align*}
    By our assumption, 
    \begin{align*}
        \|\Delta_x\|_2 = \bigo(\epsilon_x n^{1/2}), \quad 
        \|\Delta_y\|_2 = \bigo(\epsilon_y n^{1/2}).
    \end{align*}
    Similar to the proof of Lemma~\ref{lem:lr-rd}, by~\citep[Section B.2.11]{mhammedi2020learning}, with probability at least $1-p$,
    \begin{align*}
        \|X^{\dagger} E\|_2 =\;& \bigo(1) \sigma_{\min}^{+}(X)^{-1} \|\Sigma_e^{1/2}\|_2 (d_1 + d_2 + \log(1/p))^{1/2} \\
        =\;& \bigo(\theta^{-1} n^{-1/2} (d_1 + d_2 + \log(1/p))^{1/2}).
    \end{align*}
    Combining the bounds above, we obtain
    \begin{align*}
        \|\eA\|_2
        \le\;& \|\sA\|_2 + \bigo(\theta^{-1} n^{-1/2} (d_1 + d_2 + \log(1/p))^{1/2})
        + \bigo((\|\sA\|_2 \epsilon_x + \epsilon_y) n^{1/2} \theta^{-1} n^{-1/2}) \\
        =\;& \bigo(\|\sA\|_2 (1 + \theta^{-1} \epsilon_x) + \theta^{-1} \epsilon_y)
        + \bigo(\theta^{-1} n^{-1/2} (d_1 + d_2 + \log(1/p))^{1/2}).
    \end{align*}
    Hence, as long as $\theta$ has at least $n^{-1/4}$ dependence on $n$, 
    \begin{align*}
        \|\eA\|_2 =\;& \bigo\big(\big(\|\sA\|_2 (1 + \theta^{-1} \epsilon_x) + \theta^{-1} \epsilon_y\big)\big).
    \end{align*}

\endproof

In Lemma~\ref{lem:lr-rd}, if $\Sigma_{\ast}$ has full rank and $\lambda_{\min}(\Sigma_{\ast}) \ge \beta > 0$ and $\lambda_{\min}(\sum_{i=1}^{n} x^{(i)} (x^{(i)})^{\top}) = \Omega(\beta^2 n)$, then 
\begin{align*}
    \| \eA - \sA \|_2
    = \bigo(\beta^{-1}((\beta^{-1} \epsilon_x + \epsilon_y) \log^{1/2}(n/p)
    + n^{-1/2} (d_1 + d_2 + \log(1/p))^{1/2})).
\end{align*}
The following lemma shows that we can strengthen the result by removing the $\beta^{-1}$ factor \mbox{before $\epsilon_x$}. 

\begin{lemma}[Perturbed linear regression]  \label{lem:lr}
    Define random variable $\sy = \sA \sx + e$, where $\sx \sim \gauss(0, \Sigma_{\ast})$ and $e \sim \gauss(0, \Sigma_e)$ are $d_1$ and $d_2$ dimensional random vectors. Assume that $\|\sA\|_2$, $\|\Sigma_{\ast}\|_2$ and $\|\Sigma_{e}\|_2$ are $\bigo(1)$, and $\lambda_{\min}(\Sigma_{\ast}^{1/2}) \ge \beta > 0$. Define $x := \sx + \delta_x$ and $y := \sy + \delta_y$ where the perturbation vectors $\delta_x$ and $\delta_y$ can be correlated with $\sx$ and $\sy$.
    Suppose we get $n$ independent observations $x^{(i)}, y^{(i)}$ of $x$ and $y$. 
    Assume that $\lambda_{\min}(\sum_{i=1}^{n} x^{(i)} (x^{(i)})^{\top}) = \Omega(\beta^2 n)$, $\|\sum_{i=1}^{n} \delta_x^{(i)} (\delta_x^{(i)})^{\top}\|_2 = \bigo(\epsilon_x^2 n)$, $\|\sum_{i=1}^{n} \delta_y^{(i)} (\delta_y^{(i)})^{\top}\|_2 = \bigo(\epsilon_y^2 n)$ for $\beta, \epsilon_x, \epsilon_y > 0$.
    Consider the minimum Frobenius norm solution  
    \begin{align*}
        \eA \in \argmin_{A} \nlsum_{i=1}^{n} (y^{(i)} - A x^{(i)})^2.
    \end{align*}
    Then, there exists an absolute constant $c > 0$, such that if $n \ge c (d_1 + d_2 + \log(1/p))$, with probability at least $1-p$,
    \begin{align*}
        \|\eA - \sA\|_2
        = \bigo\big(\beta^{-1} \big(\epsilon_x + \epsilon_y + n^{-1/2} (d_1 + d_2 + \log(1/p))^{1/2}\big)\big).
    \end{align*}
\end{lemma}

\vspace*{2pt}

\begin{proof}
    Following the proof of Claim~\ref{clm:a-norm}, we have 
    \begin{align*}
        \|\eA - \sA\|_2 \le \|\sA\| \|X^{\dagger} \Delta_x\|_2 + \|X^{\dagger} E\|_2 + \|X^{\dagger} \Delta_y\|_2.
    \end{align*}
    Combining with the bounds on $\|X^{\dagger} \Delta_x\|_2$, $\|X^{\dagger} \Delta_y\|_2$ and $\|X^{\dagger} E\|_2$ concludes the proof.
\end{proof}

\vspace*{-10pt}
\subsection{Certainty equivalent linear quadratic control}

As shown in Lemma~\ref{lem:lr-rd}, if the input of linear regression does not have full-rank covariance, then the parameters can only be identified in certain directions. 
The following lemma studies the performance of the certainty equivalent optimal controller in this case.

\begin{lemma}[Rank deficient linear quadratic control]  \label{lem:rank-deficient}
    Consider the finite-horizon time-varying linear dynamical system given by the first equation in~\eqref{eqn:po-ltv}
    with quadratic costs given by~\eqref{eqn:q-cost}, under Assumption~\ref{asp:stability}, Assumption~\ref{asp:cost-obs} with $\ell = T$, and Assumption~\ref{asp:bounded} on relevant problem parameters.
    Let $\Sigma_t := \Cov(x_t)$ 
    under control $u_t \sim \gauss(0, \sigma_u^2 I)$ for all $t\ge 0$, which may not have full rank.
    Let $(\eA_t, \eB_t)_{t=0}^{T-1}, (\eQ_t)_{t=0}^{T}$ satisfy $\|(\eA_t - \sA_t) (\Sigma_t)^{1/2}\|_2 \le \eps$, $\|\eB_t - \sB_t\|_2 \le \eps$, and $\|\eQ_t - \sQ_t\|_2 \le \eps$, $\|\eA_t\|_2 \le c$ for \mbox{dimension-free} constant $c > 0$, and $\eQ_t \succ 0$ for all $t \ge 0$.
    Let $(\sK_t)_{t=0}^{T-1}$ and $(\eK_t)_{t=0}^{T-1}$ be the optimal feedback gains of system $((\sA_t, \sB_t, \sR_t)_{t=0}^{T-1}, (\sQ_t)_{t=0}^{T})$ and system $((\eA_t, \eB_t, \sR_t)_{t=0}^{T-1}, (\eQ_t)_{t=0}^{T})$, respectively.

    Let $J(K)$ denote the expected cumulative cost in system $((\sA_t, \sB_t, \sR_t)_{t=0}^{T-1}, (\sQ_t)_{t=0}^{T})$ under state feedback control $u_t = K_t x_t$ for $t \ge 0$, and $J^{\delta}(\eK)$ denote the the expected cumulative cost in system $((\sA_t, \sB_t, \sR_t)_{t=0}^{T-1}, (\sQ_t)_{t=0}^{T})$ under control $u_t = K_t (x_t + \delta_t)$, where $\delta_t$ is a perturbation, for $t \ge 0$.
    Under control $u_t = K_t (x_t + \delta_t)$ for all $t \ge 0$, let $\Xi_t(K) := \Cov(x_t)$ in system $((\sA_t, \sB_t, \sR_t)_{t=0}^{T-1}, (\sQ_t)_{t=0}^{T})$, and assume that $\delta_t$ follows an arbitrary zero-mean Gaussian distribution, can be correlated with $x_t$, and satisfies $\|\Cov(\delta_t)\|_2^{1/2} \le \varphi \max_{\tau \le t}\|\Xi_{\tau}(K)\|_2^{1/2}$ for $t\ge 0$.
    There exist dimension-free constants $\kappa, a_2, a_3, a_4 > 1$ that depends polynomially on $c$ and other problem parameters, such that under the condition of $\eps, \varphi$ being small enough to ensure $\|\eB_t\|_2, \|\eQ_t\|_2$ are of order $\bigo(1)$, $\lambda_{\min}(\eQ_t) = \Omega(1)$ for all $t \ge 0$ and $\eps + \varphi = \bigo((\kappa a_2 + a_3^2)^{-T})$,
    we have
    \begin{align*}
        J^{\delta}(\eK) - J(\sK) = \bigo((\eps + \varphi) d_x a_4^T).
    \end{align*}
\end{lemma}

\begin{proof}
    First of all, we note that $\Sigma_t$, the covariance of $x_t$ under control $u_t \sim \gauss(0, \sigma_u^2 I)$ in system $(\sA_t, \sB_t)_{t=0}^{T-1}$, has $\bigo(1)$ operator norm due to Assumption~\ref{asp:stability} and has covered all the possible directions of the state covariance $\Xi_t(K)$ in system $(\sA_t, \sB_t)_{t=0}^{T-1}$ under control $u_t = K_t (x_t + \delta_t)$, for all $t \ge 0$ for any $K$. Concretely, we claim the following.
    \begin{claim}  \label{clm:coverage}
    For any $K$ and $t\geq 0$, there exists some dimension-free $\const_t > 0$ 
    that depends on $K$ and other problem parameters, such that $\Xi_t(K) \preccurlyeq \const_t \Sigma_t$.        
    \end{claim}
    \vspace*{2pt}
    The proof of Claim~\ref{clm:coverage} is deferred to \S\ref{sec:pf-coverage}, which also applies to more general control inputs $(u_t)_{t\ge 0}$.
    Now that $\const_t$ exists, a sufficiently large $\const_t$ is given by 
    \begin{align}  \label{eqn:xi-sigma}
       \const_t= \sigma_{\min}^{+}(\Sigma_t)^{-1} \|\Xi_t(K)\|_2 = \bigo(\|\Xi_t(K)\|_2)    
    \end{align}
    as $\Xi_t(K) \preccurlyeq \|\Xi_t(K)\|_2 I$ and $\sigma_{\min}^{+}(\Sigma_t) I \preccurlyeq \Sigma_t$.  
    We shall revisit the bounds on $(\const_t)_{t\ge 0}$ later in the proof of Claim~\ref{clm:xi-exi} in~\S\ref{sec:pf-xi-exi}.
    By the definition of the operator norm, we have $\|(\eA_t - \sA_t) \Xi_t^{1/2}(\eK)\|_2 \le \|(\eA_t - \sA_t) (\const_t \Sigma_t)^{1/2}\|_2 = \bigo(\eps \const_t^{1/2})$ for all $0 \le t\le T-1$.

    Next, we establish bounds on $(\sK_t)_{t=0}^{T-1}$ and $(\eK_t)_{t=0}^{T-1}$. 
    Since $(\sA_t)_{t=0}^{T-1}$ is uniformly exponentially stable, the optimal feedback gains $(\sK_t)_{t=0}^{T-1}$ have $\bigo(1)$ operator norms,
    due to the backward recursion with $(\sA_t)_{t=0}^{T-1}$ as multipliers in ~\eqref{eqn:p-riccati} and~\eqref{eqn:K-riccati}.
    However, as $(\eA_t)_{t=0}^{T-1}$ may not be uniformly exponentially stable, such arguments do not apply to $(\eK_t)_{t=0}^{T-1}$;
    in the worse case, since $((\eA_t, \eB_t, \sR_t)_{t=0}^{T-1}, (\eQ_t)_{t=0}^{T})$ have $\bigo(1)$ operator norms, we can only guarantee that there exists a dimension-free constant $\kappa > 1$ that depends polynomially on $c$ and other problem parameters, such that $\|\eK_t\|_2 = \bigo(\kappa^{T - t})$, from the backward recursion in ~\eqref{eqn:p-riccati} and~\eqref{eqn:K-riccati}.

    Let $\eXi_t(K)$ denote the covariance of $x_t$ under feedback controller $u_t = K_t (x_t + \delta_t)$ in system $(\eA_t, \eB_t)_{t=0}^{T-1}$.
    To bound $\Xi_t(K)$, $\eXi_t(K)$ and their difference for either  $K = \sK$ and $K = \eK$, 
    let us define for $t > t_0$, 
    \begin{align*}
        \ePhi_{t, t_0}(K) :=\;& (\eA_{t-1} + \eB_{t-1} K_{t-1}) 
        \cdots (\eA_{t_0} + \eB_{t_0} K_{t_0}), \\
        \sPhi_{t, t_0}(K) :=\;& (\sA_{t-1} + \sB_{t-1} K_{t-1}) 
        \cdots (\sA_{t_0} + \sB_{t_0} K_{t_0}),
    \end{align*}
    with $\ePhi_{t, t}(K) = \sPhi_{t, t}(K):= I$, $\ePhi_{t, t_0}(K) = \sPhi_{t, t_0}(K) = 0$ for $t < t_0$, and introduce the notation $x_t(\tau, t_0, x, K)$, where $\tau \in [t_0, t]$ denotes the time step at which the system dynamics \emph{switch} from $(\sA_t, \sB_t)_{t=0}^{T-1}$ to $(\eA_t, \eB_t)_{t=0}^{T-1}$, as:
    \begin{align*}
        x_t(\tau, t_0, x, K)
        :=\;& \ePhi_{t, \tau}(K) \sPhi_{\tau, t_0}(K) x + \nlsum_{t_1 = \tau}^{t-1} \ePhi_{t, t_1+1}(K) (w_{t_1} + \eB_{t_1} K_{t_1} \delta_{t_1}) \\
        &\quad+ \nlsum_{t_1 = t_0}^{\tau - 1} \ePhi_{t, \tau}(K) \sPhi_{\tau, t_1+1}(K) (w_{t_1} + \sB_{t_1} K_{t_1} \delta_{t_1}).
    \end{align*}
    Then, we can precisely relate and define
    \begin{gather*}
        \Xi_t(K) := \Cov(x_t(t, 0, x_0, K)), \quad 
        \eXi_t(K) := \Cov(x_t(0, 0, x_0, K)).
    \end{gather*}
    For bounding $\Xi_t(K)$, $\eXi_t(K)$ and their difference, we have the following claim.
    \begin{claim}  \label{clm:xi-exi}
        There exists dimension-free constants $a_2, a_3 > 1$ that depend polynomially on $c$ and other problem parameters, such that for $K = \eK$, for all $t \ge 0$, $\|\Xi_t(\eK)\|_2 = \bigo(a_2^{T})$, $\|\eXi_t(\eK)\|_2 = \bigo(a_2^{T})$, and $\|\Xi_t(\eK) - \eXi_t(\eK)\|_2 = \bigo((\eps \varphi) (\kappa a_2^2)^{T})$; 
        and that for $K = \sK$, for all $t \ge 0$, $\|\Xi_t(\sK)\|_2 = \bigo(t)$, $\|\eXi_t(\sK)\|_2 = \bigo(t)$, and $\|\Xi_t(\sK) - \eXi_t(\sK)\|_2 = \bigo((\eps + \varphi) t a_3^{2t})$.
    \end{claim}
    \vspace*{2pt}
    The proof of Claim~\ref{clm:xi-exi} is deferred to~\S\ref{sec:pf-xi-exi}. 
    A core idea of the proof is to use inverse telescoping to express $\Xi_t(K) - \eXi_t(K)$.

    Recall that $J^{\delta}(K)$ denotes the expected cumulative cost in system $((\sA_t, \sB_t, \sR_t)_{t=0}^{T-1}, (\sQ_t)_{t=0}^{T})$ under state feedback controller $u_t = K_t(x_t + \delta_t)$ for $t \ge 0$. 
    Similarly, let $\eJ^{\delta}(K)$ denote the corresponding expected cumulative cost in system $((\eA_t, \eB_t, \sR_t)_{t=0}^{T-1}, (\eQ_t)_{t=0}^{T})$.
    Notice that 
    \begin{align*}
        J^{\delta}(K)
        = \E \Bigl[\nlsum_{t=0}^{T} c_t\Bigr]
        =\;& \E \Bigl[\nlsum_{t=0}^{T} x_t^{\top} (\sQ_t)' x_t + \nlsum_{t=0}^{T-1} \delta_t^{\top} K_t^{\top} \sR_t K_t (2 x_t + \delta_t) \Bigr] \\
        =\;& \nlsum_{t=0}^{T} \ip{(\sQ_t)'}{\Xi_t(K)}_F + \Delta(K), 
    \end{align*}
    where $(\sQ_t)' := (\sQ_t + K_t^{\top} \sR_t K_t)$ for $0\le t\le T-1$, $(\sQ_T)' := \sQ_T$, and
    \begin{align*}
        \Delta(K) := \nlsum_{t=0}^{T-1}\ip{K_t^{\top}\sR_t K_t}{\E[(2x_t(t, 0, x_0, K) + \delta_t)\delta_t^{\top}]}_F.
    \end{align*}
    By Lemma~\ref{lem:cs-rvec},
    \begin{align*}
        \|\E[x_t(t, 0, x_0, K)\delta_t^{\top}]\|_2 \le \|\Xi_{t}(K)\|_2^{1/2} \cdot \|\Cov(\delta_t)\|_2^{1/2}
        \le \varphi \|\Xi_{t}(K)\|_2^{1/2} \cdot \max_{\tau \le t} \|\Xi_{\tau}(K)\|_2^{1/2}.
    \end{align*}
    Hence,
    \begin{align*}
        |\Delta(K)|
        \le \nlsum_{t=0}^{T-1} d_x \|K_t^{\top} \sR_t K_t\|_2 \cdot 3 \varphi \|\Xi_t(K)\|_2 
        = \bigo(\varphi d_x \nlsum_{t=0}^{T-1} \|K_t\|_2^2 \|\Xi_{t}(K)\|_2^{1/2} \cdot \max_{\tau \le t} \|\Xi_{\tau}(K)\|_2).
    \end{align*}
    Similarly, 
    \begin{align*}
        \eJ^{\delta}(K)
        = \nlsum_{t=0}^{T} \ip{\eQ'_t}{\eXi_t(K)}_F + \eDelta(K),
    \end{align*}
    where $\eQ'_t = \eQ_t + K_t^{\top} \sR_t K_t$ for $0\le t\le T - 1$, $\eQ'_T = \eQ_T$, and 
    \begin{align*}
        \eDelta(K) := \nlsum_{t=0}^{T-1} \ip{K_t^{\top}\sR_t K_t}{\E[(2x_t(0, 0, x_0, K) + \delta_t)\delta_t^{\top}]}_F.
    \end{align*}
    Moreover, 
    \begin{align*}
        |\eDelta(K)|
        = \bigo(\varphi d_x \nlsum_{t=0}^{T-1} \|K_t\|_2^2 \|\eXi_{t}(K)\|_2^{1/2} \cdot \max_{\tau \le t}\|\Xi_{\tau}(K)\|_2^{1/2}).
    \end{align*}
    Hence, for $K = \eK$, we have
    \begin{align*}
        &\big|J^{\delta}(\eK) - \eJ^{\delta}(\eK)\big| \\
        =\;& \Big|\nlsum_{t=0}^{T} \ip{(\sQ_t)' - \eQ'_t}{\Xi_t(\eK)}_F
        + \nlsum_{t=0}^{T} \ip{\eQ'_t}{\Xi_t(\eK) - \eXi_t(\eK)}_F + \Delta(\eK) - \eDelta(\eK)\Big| \\
        \overset{(i)}{=}\;& d_x \nlsum_{t=0}^{T} \bigo(\eps \cdot a_2^T)
        + d_x \nlsum_{t=0}^{T} \bigo\big(\kappa^{2(T-t)} \cdot (\eps + \varphi) (\kappa a_2^2)^{T}\big)
        + \varphi d_x \nlsum_{t=0}^{T-1} \bigo(a_2^T \kappa^{2(T-t)}) \\ 
        =\;& \bigo\big(\eps d_x T a_2^T + (\eps + \varphi) d_x (\kappa^3 a_2^2)^{T} + \varphi d_x (\kappa^2 a_2)^{T} \big) \\
        =\;& \bigo((\eps + \varphi) d_x (\kappa^3 a_2^2)^T),
    \end{align*}
    where $(i)$ uses the fact that $\|\eQ'_t\|_2 = \bigo(\|\eK_t\|_2^2) = \bigo(\kappa^{2(T-t)})$.
    For $K = \sK$, we have 
    \begin{align*}
        &\big|J^{\delta}(\sK) - \eJ^{\delta}(\sK)\big| \\
        =\;& \Big|\nlsum_{t=0}^{T} \ip{(\sQ_t)' - \eQ'_t}{\Xi_t(\sK)}_F
        + \nlsum_{t=0}^{T} \ip{\eQ'_t}{\Xi_t(\sK) - \eXi_t(\sK)}_F + \Delta(\sK) - \eDelta(\sK) \Big| \\
        \overset{(i)}{=}\;& d_x \nlsum_{t=0}^{T} \bigo(\eps \cdot t) + d_x \nlsum_{t=0}^{T} \bigo\big((\eps + \varphi) t a_3^{2t} \big)
        + \varphi d_x \nlsum_{t=0}^{T} \bigo(t) \\
        =\;& \bigo\big(\eps d_x T^2 + (\eps + \varphi) d_x T a_{3}^{2T} + \varphi d_x T^2\big) \\
        =\;& \bigo((\eps + \varphi) d_x a_3^{3T}),
    \end{align*}
    where $(i)$ uses the fact that  $\|\eQ'_t\|_2 = \bigo(\|\sK_t\|_2^2) = \bigo(1)$.
    Define $a_4 = \max(\kappa^3 a_2^2, a_3^3)$. Then, whether $K = \sK$ or $K = \eK$, \mbox{we have} 
    \begin{align*}
        \big|J^{\delta}(K) - \eJ^{\delta}(K)\big|=\;& \bigo((\eps + \varphi) d_x a_4^T).
    \end{align*} 

    Let $\eXi_t^{0}(K)$ denote the covariance of $x_t$ under feedback controller $u_t = K_t x_t$ in system $(\eA_t, \eB_t)_{t=0}^{T-1}$, without perturbation.
    To bridge the difference in the expected cumulative costs with and without perturbations, we claim the following. 
    \begin{claim}   \label{clm:exi-0}
        For all $t \ge 0$, 
        \begin{align*}
            \|\eXi_t(\eK) - \eXi_t^{0}(\eK)\|_2 = \bigo(\varphi a_2^T (\kappa a_2)^{T-t}).
        \end{align*}
    \end{claim}
    \vspace*{2pt}
    The proof of Claim~\ref{clm:exi-0}, similar to that of Claim~\ref{clm:xi-exi}, is deferred to~\S\ref{sec:pf-exi-0}.
    Hence,
    \begin{align*}
        |\eJ^{\delta}(\eK) - \eJ(\eK)|
        =\;& \Big|\nlsum_{t=0}^{T} \ip{\eQ'_t}{\eXi_t(\eK) - \eXi^0_t(\eK)}_F + \eDelta(\eK)\Big| \\
        =\;& \bigo\Big(d_x \nlsum_{t=0}^{T} \kappa^{2(T-t)} \cdot \varphi a_2^T (\kappa a_2)^{T-t}
        + \varphi d_x \nlsum_{t=0}^{T} a_2^T \kappa^{2(T-t)} \Big) \\
        =\;& \bigo(\varphi d_x (\kappa^3 a_2^2)^T + \varphi d_x (\kappa^2 a_2)^T) \\
        =\;& \bigo(\varphi d_x (\kappa^3 a_2^2)^T).
    \end{align*}

    Combining the above results, we have 
    \begin{align*}
        &J^{\delta}(\eK) - J(\sK) \\
        =\;& J^{\delta}(\eK) - \eJ^{\delta}(\eK) + \eJ^{\delta}(\eK) - \eJ(\eK)
        + \eJ(\eK) - \eJ(\sK) + \eJ(\sK) - J(\sK) \\ 
        \overset{(i)}{=}\;& \bigo((\eps + \varphi) d_x (\kappa^3 a_2^2)^T + \varphi d_x (\kappa^3 a_2^2)^T + 0 + (\eps + \varphi) d_x a_3^{3T}) \\
        =\;& \bigo((\eps + \varphi) d_x a_4^T),
    \end{align*}
    where $\eJ(\eK) - \eJ(\sK) \le 0$ in $(i)$ due to the optimality of $\eK$ in the  system $((\eA_t, \eB_t, \sR_t)_{t=0}^{T-1}, (\eQ_t)_{t=0}^{T})$.

\vspace*{-3pt}
\end{proof}

\subsubsection{Proof of Claim~\ref{clm:coverage}}
\label{sec:pf-coverage}
    We prove the claim by induction.
    For time step $0$, clearly, $\Xi_0 = \Sigma_0$ and $\const_0 = 1$. 
    Suppose for time step $t$, $\Xi_t \preccurlyeq \const_t \Sigma_t$.
    Since $x_{t+1} = (\sA_t + \sB_t K_t) x_t + \sB_t K_t \delta_t + w_t$, we have that for step $t+1$, 
    \begin{align*}
        \Sigma_{t+1} =\;& \sA_t \Sigma_t (\sA_t)^{\top} + \sigma_u^2 \sB_t (\sB_t)^{\top} + \Sigma_{w_t}, \\
        \Xi_{t+1}(K) =\;& (\sA_t + \sB_t K_t) \Xi_t(K) (\sA_t + \sB_t K_t)^{\top} + \Sigma_{w_t} 
        + \sB_t K_t \Cov(\delta_t) (\sB_t K_t)^{\top} \\
        &\quad + \E\Big[(\sA_t + \sB_t K_t) x_t (\sB_t K_t \delta_t)^{\top} \Big]
        + \E\Big[\sB_t K_t \delta_t x_t^{\top} (\sA_t + \sB_t K_t)^{\top}\Big].
    \end{align*}
    Since for any $F, G$ and positive semidefinite $P$, we have $F P G^{\top} + G P F^{\top} \preccurlyeq FPF^{\top} + GPG^{\top}$ as  $(F - G) P (F - G)^{\top} \succcurlyeq 0$. 
    Then, 
    \begin{align*}
        &\E\Big[(\sA_t + \sB_t K_t) x_t (\sB_t K_t \delta_t)^{\top} + \sB_t K_t \delta_t x_t^{\top} (\sA_t + \sB_t K_t)^{\top}\Big] \\
        \preccurlyeq\;& (\sA_t + \sB_t K_t) \Xi_t (\sA_t + \sB_t K_t)^{\top} + \sB_t K_t \Cov(\delta_t) (\sB_t K_t)^{\top},
    \end{align*}
    and
    \begin{align*}
        (\sA_t + \sB_t K_t) \Xi_t (\sA_t + \sB_t K_t)^{\top}
        \preccurlyeq 2 \sA_t \Xi_t (\sA_t)^{\top} + 2\sB_t K_t \Xi_t K_t^{\top} (\sB_t)^{\top}.
    \end{align*}
    Hence, 
    \begin{align*}
        \Xi_{t+1}(K)
        \preccurlyeq\;& 2 (\sA_t + \sB_t K_t) \Xi_t (\sA_t + \sB_t K_t)^{\top} + 2 \sB_t \Delta_t^u (\sB_t)^{\top} + \Sigma_{w_t} \\
        \preccurlyeq\;& 4 \sA_t \Xi_t(K) (\sA_t)^{\top} + 4\sB_t K_t \Xi_t K_t^{\top} (\sB_t)^{\top}
        + 2 \sB_t K_t \Cov(\delta_t) (\sB_t K_t)^{\top} + \Sigma_{w_t}.
    \end{align*}
    Hence, for $\const_{t+1} \ge 1$,
    \begin{align*}
        \const_{t+1} \Sigma_{t+1} - \Xi_{t+1}(K)
        \succcurlyeq\;& (\const_{t+1} - 4 \const_t) \sA_t \Sigma_t (\sA_t)^{\top} + \const_{t+1} \sigma_u^2 \sB_t (\sB_t)^{\top} \\
        &\quad - 4 \const_t\sB_t K_t \Sigma_t K_t^{\top} (\sB_t)^{\top} - 2 \sB_t K_t \Cov(\delta_t) (\sB_t K_t)^{\top}. 
    \end{align*}
    To ensure $\const_{t+1} \Sigma_{t+1} \succcurlyeq \Xi_{t+1}(K)$, it suffices to take $\const_{t+1} \ge 4 \const_t (1 + \sigma_u^{-2} \|K_t\|_2^2 \|\Sigma_t\|_2) + 2 \sigma_u^{-2} \|K_t\|_2^2 \|\Cov(\delta_t)\|_2$, which completes the proof. 
\endproof

\subsubsection{Proof of Claim~\ref{clm:xi-exi}} 
\label{sec:pf-xi-exi}
    By the definition of $x_t(\tau, t_0, x, K)$, we have that for $t_2 \le t_1 \le t$, 
    \begin{align*}
        x_t(t_1, 0, x_0, K) 
        = x_t(t_1, t_2, x_{t_2}(t_2, 0, x_0, K), K).
    \end{align*}
    Hence, for $0\le \tau< t$,
    \begin{align*}
        &\Cov(x_t(\tau + 1, 0, x_0, K)) - \Cov(x_t(\tau, 0, x_0, K)) \\
        =\;& \Cov(x_t(\tau + 1, \tau, x_{\tau}(\tau, 0, x_0, K), K)) 
        - \Cov(x_t(\tau, \tau, x_{\tau}(\tau, 0, x_0, K), K)).
    \end{align*}
    Notice that
    \begin{align*}
        x_t(\tau + 1, \tau, x_{\tau}(\tau, 0, x_0, K), K)
        =\;& \ePhi_{t, \tau + 1}(K) (\sA_{\tau} + \sB_{\tau} K_{\tau}) x_{\tau}(\tau, 0, x_0, K)
        + \ePhi_{t, \tau + 1}(K) (w_{\tau} + \sB_{\tau} K_{\tau} \delta_{\tau}) \\ 
        &\quad + \nlsum_{t_0=\tau + 1}^{t-1} \ePhi_{t, t_0 + 1}(K) (w_{t_0} + \eB_{t_0} K_{t_0} \delta_{t_0})
    \end{align*}
    and 
    \begin{align*}
        x_t(\tau, \tau, x_{\tau}(\tau, 0, x_0, K), K)
        = \ePhi_{t, \tau}(K) x_{\tau}(\tau, 0, x_0, K)
        +\nlsum_{t_0=\tau}^{t-1} \ePhi_{t, t_0 + 1}(K) (w_{t_0} + \eB_{t_0} K_{t_0} \delta_{t_0}).
    \end{align*}
    As $x_t(\tau, 0, x_0, K)$ and $(w_{t_0})_{t_0\ge \tau}$ have the same distributions in the above expressions of $x_t(\tau + 1, \tau, x_{\tau}(\tau, 0, x_0, K), K)$ and $x_t(\tau, \tau, x_{\tau}(\tau, 0, x_0, K), K)$, and covariances are expectations, we can couple $x_t(\tau, 0, x_0, K)$ and $(w_{t_0})_{t_0\ge \tau}$ therein without changing the two covariances. On the other hand, $(\delta_{t_0})_{t_0\ge \tau}$ may not have the same distributions in the two trajectories generating $x_t(\tau + 1, \tau, x_{\tau}(\tau, 0, x_0, K), K)$ and $x_t(\tau, \tau, x_{\tau}(\tau, 0, x_0, K), K)$, and we let $(\delta_{t_0}^{(\tau+1)})_{t_0 \ge \tau}, (\delta_{t_0}^{(\tau)})_{t_0 \ge \tau}$ denote the perturbations in the respective trajectory.
    As a shorthand, define
    \begin{align*}
        \delta_{t, \tau} :=\;&x_t(\tau + 1, \tau, x_{\tau}(\tau, 0, x_0, K), K)
        - x_t(\tau, \tau, x_{\tau}(\tau, 0, x_0, K), K) \\
        =\;& \ePhi_{t, \tau + 1}(K) (\sA_{\tau} - \eA_{\tau} + (\sB_{\tau} - \eB_{\tau}) K_{\tau}) x_{\tau}(\tau, 0, x_0, K)
        + \ePhi_{t, \tau+1}(K) (\sB_{\tau} - \eB_{\tau}) K_{\tau} \delta_{\tau}^{(\tau+1)} \\
        &\quad + \nlsum_{t_0=\tau}^{t-1} \ePhi_{t, t_0+1}(K) \eB_{t_0} K_{t_0} (\delta_{t_0}^{(\tau+1)} - \delta_{t_0}^{(\tau)}).
    \end{align*}
    Then, by Lemma~\ref{lem:cov-diff},
    \begin{align}
        &\|\Cov(x_t(\tau + 1, \tau, x_{\tau}(\tau, 0, x_0, K), K))
        - \Cov(x_t(\tau, \tau, x_{\tau}(\tau, 0, x_0, K), K))\|_2 \nonumber \\
        \le \;& \|\Cov(\delta_{t, \tau})\|_{2} 
        + 2 \|\Cov(\delta_{t, \tau})\|_{2}^{1/2} \cdot \|\Cov(x_t(\tau, \tau, x_{\tau}(\tau, 0, x_0, K), K))\|_2^{1/2}. \label{eqn:os-cov-diff}
    \end{align}
    By Lemma~\ref{lem:cov-sum},
    \begin{align}
        \|\Cov(\delta_{t, \tau})\|_2^{1/2}
        \le\;& \|\ePhi_{t, \tau + 1}(K) (\sA_{\tau} - \eA_{\tau} + (\sB_{\tau} - \eB_{\tau}) K_{\tau}) \Xi_{\tau}^{1/2}(K) \|_2 \label{eqn:cov-delta} \\
        &\quad + \|\ePhi_{t, \tau+1}(K) (\sB_{\tau} - \eB_{\tau}) K_{\tau} \|_2 \cdot \big\| \Cov\big(\delta_{\tau}^{(\tau+1)}\big)\big\|_2^{1/2} \nonumber \\
        &\quad + \nlsum_{t_0=\tau}^{t-1} \|\ePhi_{t, t_0+1}(K) \eB_{t_0} K_{t_0}\|_2 \cdot \Big(\big\|\Cov\big(\delta_{t_0}^{(\tau+1)}\big)\big\|_2^{1/2}
        + \big\|\Cov\big(\delta_{t_0}^{(\tau)}\big)\big\|_2^{1/2}\Big) \nonumber
    \end{align} 
    and
    \begin{align}
        &\|\Cov(x_t(\tau, \tau, x_{\tau}(\tau, 0, x_0, K), K))\|_2^{1/2} \nonumber \\
        =\;& \Big\|\Cov\Big(\ePhi_{t, \tau}(K) x_{\tau}(\tau, 0, x_0, K)
        + \nlsum_{t_0 = \tau}^{t-1} \ePhi_{t, t_0 + 1}(K) (w_{t_0} + \eB_{t_0} K_{t_0} \delta_{t_0})\Big) \Big\|_2^{1/2} \nonumber \\
        \le\;& \|\ePhi_{t, \tau}(K) \Xi_{\tau}^{1/2}(K)\|_2 + \nlsum_{t_0 = \tau}^{t-1} \| \ePhi_{t, t_0 + 1}(K) \Sigma_{w_{t_0}}^{1/2} \|_2 \label{eqn:cov-x-tau-tau} \\
        &\quad + \nlsum_{t_0=\tau}^{t-1} \|\ePhi_{t, t_0+1}(K) \eB_{t_0} K_{t_0} \|_2 \cdot \|\Cov(\delta_{t_0})\|_2^{1/2}. \nonumber
    \end{align}

    Next, we consider $K = \eK$ and $K = \sK$ separately.

    For $K = \eK$, let $(\eP_t)_{t=0}^{T}$ be the optimal value matrices for system $((\eA_t, \eB_t, \sR_t)_{t=0}^{T-1}, (\eQ_t)_{t=0}^{T})$, given by~\eqref{eqn:p-riccati} with the system matrices $((\sA_t, \sB_t, \sR_t)_{t=0}^{T-1}, (\sQ_t)_{t=0}^{T})$ replaced. Since $\|\eA_t\|_2 \le c$ and $\|\eB_t\|_2, \|\sR_t\|_2, \|\eQ_t\|_2$ are of order $\bigo(1)$ for all $t \ge 0$, there exists some dimension-free constant $a_2 > 1$ depending polynomially on $c$ and other problem parameters, such that $\|\eP_t\|_2 = \bigo(a_2^{T - t})$, due to the backward recursion in~\eqref{eqn:p-riccati}.
    For any $0\le \tau\le t$ and $x_{\tau} \in \R^{d_x}$, by the definition of $(\hat{P}_t)_{t\ge 0}$, 
    \begin{align*}
        \|x_{\tau} \|_{\eP_{\tau}}^2 \ge \|\ePhi_{t, \tau}(\eK) x_{\tau}\|_{\eP_{t}}^2.
    \end{align*}
    Maximizing both sides for $x_{\tau} \in \s^{d_x - 1}$ and by the definition of the operator norm, we have 
    \begin{align*}
        \|\ePhi_{t, \tau}(\eK)^{\top} \eP_{t} \ePhi_{t, \tau}(\eK) \|_2 \le \|\eP_{\tau}\|_2.
    \end{align*}
    Since $\lambda_{\min}(\eQ_t) = \Omega(1)$, $\lambda_{\min}(\eP_t) = \Omega(1)$. 
    Hence,
    \begin{align*}
        \|\ePhi_{t, \tau}(\eK)\|_2 = \bigo\Big(a_2^{(T-\tau)/2}\Big).
    \end{align*}

    Now we are ready to bound $\Xi_t(\eK)$, $\const_t$, $\|\Xi_t(\eK) - \eXi_t(\eK)\|_2$ and $\eXi_t(\eK)$ by induction. 
    First, $\|\Xi_0(\eK)\|_2 = \|\Sigma_0\|_2 = \bigo(1) = \bigo(a_2^T)$ and $\const_0 = 1 = \bigo(a_2^T)$.
    Suppose $\|\Xi_\tau(\eK)\|_2 = \bigo(a_2^T)$ and $\const_{\tau} = \bigo(a_{5}^{T})$ for all $\tau < t$.
    Then, for all $\tau < t$, $\|\Cov(\delta_{\tau})\|_2 = \bigo(\varphi^2 \max_{t_0\le \tau}\|\Xi_{t_0}(\eK)\|_2) = \bigo(\varphi^2 a_2^{T})$, and~\eqref{eqn:cov-delta} is \mbox{bounded by} 
    \begin{align*}
        &\bigo\big( \eps \const_{\tau}^{1/2} \big(\kappa a_2^{1/2}\big)^{T - \tau} 
        + \eps \varphi a_2^{T/2} \big(\kappa a_2^{1/2} \big)^{T - \tau}
        + \varphi a_2^{T/2} \nlsum_{t_0=\tau}^{t-1} \big(\kappa a_2^{1/2}\big)^{T - t_0} \big) \\ 
        =\;& \bigo\big( \eps \const_{\tau}^{1/2} \big(\kappa a_2^{1/2}\big)^{T - \tau} + \varphi a_2^{T/2} \big(\kappa a_2^{1/2}\big)^{T - \tau} \big), \\
        =\;& \bigo\big( (\eps + \varphi) a_2^{T/2} \big(\kappa a_2^{1/2}\big)^{T - \tau} \big),
    \end{align*}
    where we recall that $\|\eK_t\|_2 = \bigo(\kappa^{T - t})$ and the assumption that $\|\eB_t - \sB_t\|_2 \le \eps = \bigo((\kappa a_2)^{-T}) = \bigo(1)$. Similarly, for $\tau < t$, we have~\eqref{eqn:cov-x-tau-tau} is bounded by 
    \begin{align*}
        &\bigo(\const_{\tau}^{1/2} a_2^{(T - \tau)/2} + a_2^{(T-\tau)/2} + \varphi a_2^{T/2} \nlsum_{t_0=\tau}^{t-1}(\kappa a_2^{1/2})^{T-t_0} ) \\
        =\;& \bigo(a_{5}^{T/2} a_2^{(T - \tau)/2} + a_2^{(T-\tau)/2} + \varphi a_2^{T/2} (\kappa a_2^{1/2})^{T-\tau}) \\
        =\;& \bigo(a_2^{T/2} a_2^{(T-\tau)/2}),
    \end{align*}
    due to $\|\Sigma_{w_t}\|_2 = \bigo(1)$ and $\varphi = \bigo((\kappa a_2)^{-T}) = \bigo(\kappa^{-T})$.
    Hence, by~\eqref{eqn:os-cov-diff} and our assumption that $\eps + \varphi = \bigo((\kappa a_2)^{-T}) = \bigo(\kappa^{-T})$, we have
    \begin{align*}
        \|\Cov(x_t(\tau + 1, 0, x_0, \eK)) - \Cov(x_t(\tau, 0, x_0, \eK))\|_2
        = \bigo((\eps + \varphi) a_2^T (\kappa a_2)^{T-\tau}).
    \end{align*}
    Using inverse telescoping, we have 
    \begin{align*}
        \Xi_t(\eK) - \eXi_t(\eK)
        =\;& \Cov(x_t(t, 0, x_0, \eK)) - \Cov(x_t(0, 0, x_0, \eK)) \\
        =\;& \nlsum_{\tau=0}^{t-1} (\Cov(x_t(\tau + 1, 0, x_0, \eK)) - \Cov(x_t(\tau, 0, x_0, \eK))).
    \end{align*}
    Then,  
    \begin{align*}
        \|\Xi_t(\eK) - \eXi_t(\eK)\|_2 =\;& \bigo\Big((\eps + \varphi) a_2^T \nlsum_{\tau = 0}^{t - 1} (\kappa a_2)^{T - \tau}\Big) \\
        =\;& \bigo((\eps + \varphi) (\kappa a_2^2)^{T}).
    \end{align*}

    On the other hand, since
    \begin{align*}
        \eXi_t(\eK)
        = \Cov(x_t(0, 0, x_0, \eK))
        = \Cov\Big(\ePhi_{t, 0}(\eK) x_0 + \nlsum_{\tau=0}^{t-1} \ePhi_{t, \tau+1}(\eK) (w_{\tau} + \eB_{\tau} K_{\tau} \delta_{\tau})\Big),
    \end{align*}
    and the operator norms of $\Sigma_0$ and $(\Sigma_{w_t})_{t \ge 0}$ are $\bigo(1)$, by Lemma~\ref{lem:cov-sum}, we have
    \begin{align*}
        \|\eXi_t(\eK)\|_2^{1/2} =\;& \bigo(a_2^{T/2} + a_2^{T/2} + \varphi a_2^{T/2} \nlsum_{\tau=0}^{t-1} \big(\kappa a_2^{1/2}\big)^{T-\tau}) \\
        =\;& \bigo(a_2^{T/2} + a_2^{T/2} + \varphi (\kappa a_2)^{T}) \\
        =\;& \bigo(a_2^{T/2}),
    \end{align*}
    due to $\varphi = \bigo((\kappa a_2)^{-T}) = \bigo((\kappa a_2^{1/2})^{-T})$.

    Therefore, $\|\Xi_t(\eK)\|_2 \le \|\eXi_t(\eK)\|_2 + \|\Xi_t(\eK) - \eXi_t(\eK)\|_2 = \bigo(a_{5}^{T} + (\eps + \varphi) (\kappa a_2^2)^{T})$. Since $\eps + \varphi = \bigo((\kappa a_2)^{-T})$, 
    we have $\|\Xi_t(\eK)\|_2 = \bigo(a_{5}^T)$.
    Then, by~\eqref{eqn:xi-sigma}, $\const_t = \bigo(\|\Xi_t(\eK)\|_2) = \bigo(a_{5}^T)$.
    By induction, we have that for all $0\le t\le T$, $\const_t = \bigo(a_2^T)$ and $\|\eXi_{t}(\eK) - \Xi_t(\eK)\|_2 = \bigo( (\eps + \varphi) (\kappa a_2^2)^{T})$.

    For $K = \sK$, we first bound $\Xi_t(\sK)$. 
    Let $(\sP_t)_{t = 0}^{T}$ be the optimal value matrices for system $((\sA_t, \sB_t, \sR_t)_{t=0}^{T-1}, (\sQ_t)_{t=0}^{T})$, given by~\eqref{eqn:p-riccati}. Since $((\sA_t, \sB_t, \sR_t)_{t=0}^{T-1}, (\sQ_t)_{t=0}^{T})$ have $\bigo(1)$ operator norms and $(\sA_t)_{t = 0}^{T-1}$ is uniformly exponentially stable, $\|\sP_t\|_2 = \bigo(1)$ for all $t\ge 0$, due to the backward recursion given by~\eqref{eqn:p-riccati}. 
    For any $0\le \tau\le t$ and $x_{\tau}\in \R^{d_x}$, since the value  at time step $\tau$ is no less than the value at time step $t \ge \tau$, we \mbox{have that}
    \begin{align*}
        \|x_{\tau}\|_{\sP_{\tau}}^2 \ge \|\sPhi_{t, \tau}(\sK) x_{\tau} \|_{\sP_{t}}^2.
    \end{align*}
    Maximizing both sides for $x_{\tau} \in \s^{d_x - 1}$ and by the definition of the operator norm, we have
    \begin{align*}
        \|(\sPhi_{t, \tau}(\sK))^{\top} \sP_{t} \sPhi_{t, \tau}(\sK) \|_2 \le \|\sP_{\tau}\|_2.
    \end{align*}
    Since $\lambda_{\min}(\sQ_t) = \Omega(1)$, $\lambda_{\min}(\sP_t) = \Omega(1)$. Hence,
    \begin{align}  \label{eqn:phi-o1}
        \|\sPhi_{t, \tau}(\sK)\|_2 = \bigo(1). 
    \end{align}
    Now we show that $\|\Xi_t(\sK)\|_2 = \bigo(t)$ by induction. 
    First, $\|\Xi_0(\sK)\|_2 = \|\Sigma_0\|_2 = \bigo(1)$.
    Assume $\|\Xi_{\tau}(\sK)\|_2 = \bigo(\tau^{1/2})$ for all $\tau < t$. Then, since 
    \begin{align*}
        \Xi_t(\sK)
        = \Cov(x_t(t, 0, x_0, \sK))
        = \Cov\Big(\sPhi_{t, 0}(\sK) x_0 + \nlsum_{\tau=0}^{t-1} \sPhi_{t, \tau+1}(\sK) (w_{\tau} + \sB_{\tau} \sK_{\tau} \delta_{\tau})\Big),
    \end{align*}
    by Lemma~\ref{lem:cov-sum},
    \begin{align*}
        \|\Xi_t(\sK)\|_2^{1/2}
        = \Big\|\Cov \Big(\sPhi_{t, 0}(\sK) x_0 + \nlsum_{\tau = 0}^{t - 1} \sPhi_{t, 0}(\sK) w_{\tau} \Big)\Big\|_2^{1/2}
        + \nlsum_{\tau = 0}^{t - 1} \|\Cov(\sPhi_{t, 0}(\sK) \sB_{\tau} \sK_{\tau} \delta_{\tau})\|_2^{1/2}.
    \end{align*}
    Since the operator norms of $\Sigma_0$, $(\Sigma_{w_t})_{t\ge 0}$ are $\bigo(1)$, by the independence of $x_0$ and $(w_{\tau})_{\tau \ge 0}$, \mbox{we have} 
    \begin{align*}
        \Big\|\Cov \Big(\sPhi_{t, 0}(\sK) x_0 + \nlsum_{\tau = 0}^{t - 1} \sPhi_{t, 0}(\sK) w_{\tau} \Big)\Big\|_2^{1/2} = \bigo(t^{1/2}).
    \end{align*}
    Note that here we leverage independence instead of applying  Lemma~\ref{lem:cov-sum}, which yields a worse bound of $\bigo(t)$.
    Then, since $\varphi = \bigo((\kappa a_2)^{-T}) = \bigo(t^{-1})$, we have 
    \begin{align*}
        \|\Xi_t(\sK)\|_2^{1/2} =\;& \bigo(t^{1/2} + \varphi \nlsum_{t_1=0}^{t-1} \max_{\tau \le t_1}\|\Xi_{\tau}(\sK)\|_2^{1/2}) \\
        =\;& \bigo(t^{1/2} + \varphi t^{3/2}) \\
        =\;& \bigo(t^{1/2}).
    \end{align*}
    Hence, for all $t\ge 0$, $\|\Xi_t(\sK)\|_2 = \bigo(t)$.
    Then, by~\eqref{eqn:xi-sigma}, $\Xi_t(\sK) \le \const_t \Sigma_t$ for $\const_t = \bigo(\|\Xi_t(\sK)\|_2) = \bigo(t)$.

    Next, we bound $\ePhi_{t, \tau}(\sK)$ in order to bound~\eqref{eqn:cov-delta} and~\eqref{eqn:cov-x-tau-tau}.
    Since $\|\eA_t\|_2 \le c$ and $\|\eB_t\|_2, \|\sK_t\|_2$ are order $\bigo(1)$, there exists a dimension-free constant $a_3 > 1$ depending polynomially on $c$ and other problem parameters, such that $\|\ePhi_{t, \tau}(\sK)\|_2 = \bigo(a_3^{t - \tau})$.
    Then,~\eqref{eqn:cov-delta} is bounded by 
    \begin{align*}
        &\bigo\big(a_3^{t-\tau} \eps t^{1/2} + a_3^{t-\tau} \eps \cdot \varphi {\tau}^{1/2} + \varphi \nlsum_{t_0=\tau}^{t-1} a_3^{t - t_0} \cdot t_0^{1/2} \big)\big) \\
        =\;& \bigo(\eps t^{1/2} a_3^{t-\tau} + \varphi t^{1/2} a_3^{t - \tau}) \\
        =\;& \bigo((\eps + \varphi) t^{1/2} a_3^{t - \tau}),
    \end{align*}
    and~\eqref{eqn:cov-x-tau-tau} is bounded by 
    \begin{align*}
        \bigo\big(a_3^{t-\tau} \cdot t^{1/2} + \nlsum_{t_0=\tau}^{t-1} (a_3^{t-t_0} + a_3^{t - t_0} \varphi t_0^{1/2}) \big)
        = \bigo(t^{1/2} a_3^{t-\tau}).
    \end{align*}
    Hence, by~\eqref{eqn:os-cov-diff}, 
    \begin{align*}
        \|\Cov(x_t(\tau + 1, 0, x_0, \sK)) - \Cov(x_t(\tau, 0, x_0, \sK))\|_2
        = \bigo((\eps + \varphi) t a_3^{2(t-\tau)}).
    \end{align*}
    Using inverse telescoping, we have 
    \begin{align*}
        \Xi_t(\sK) - \eXi_t(\sK)
        =\;& \Cov(x_t(t, 0, x_0, \sK)) - \Cov(x_t(0, 0, x_0, \sK)) \\
        =\;& \nlsum_{\tau=0}^{t-1} (\Cov(x_t(\tau + 1, 0, x_0, \sK)) - \Cov(x_t(\tau, 0, x_0, \sK))).
    \end{align*}
    Then, 
    \begin{align*}
        \|\Xi_t(\sK) - \eXi_t(\sK)\|_2
        = \bigo\Big( (\eps + \varphi) t \nlsum_{\tau = 0}^{t-1} a_3^{2(t-\tau)} \Big)
        = \bigo( (\eps + \varphi) t a_3^{2t} ).
    \end{align*}
    Finally, since $\eps + \varphi = \bigo(a_3^{-2T})$, we have 
    \begin{align*}
        \|\eXi_t(\sK)\|_2 \le \|\Xi_t(\sK)\|_2 + \|\Xi_t(\sK) - \eXi_t(\sK)\|_2 = \bigo(t),
    \end{align*}
    completing the proof. 
    \endproof

\subsubsection{Proof of Claim~\ref{clm:exi-0}}
\label{sec:pf-exi-0}
    Define
    \begin{align*}
        x_t(\tau, t_0, x) := \ePhi_{t, t_0}(\eK) x + \nlsum_{t_1 = t_0}^{t-1} \ePhi_{t, t_1+1}(\eK) w_{t_1}
        + \nlsum_{t_1 = t_0}^{\tau-1} \ePhi_{t, t_1+1}(\eK) \eB_{t_1} \eK_{t_1} \delta_{t_1},
    \end{align*}
    where $\tau \le t$ denotes the time step at which the controller switches from $u_t = \eK_t (x_t + \delta_t)$ to $u_t = \eK_t x_t$, $t_0 \le \tau$ denotes the initial time step, and $x$ is the state at  $t_0$. 
    Then, we know  
    \begin{align*}
        \eXi_t(\eK) = \Cov(x_t(t, 0, x_0)), \quad 
        \eXi^{0}_t(\eK) = \Cov(x_t(0, 0, x_0)).
    \end{align*}
    By definition, for $t_2 \le t_1 < t$,
    \begin{align*}
        x_t(t_1, 0, x_0) = x_t(t_1, t_2, x_{t_2}(t_2, 0, x_0)).
    \end{align*}
    Hence, for $0\le \tau < t$, 
    \begin{align*}
        &\Cov(x_t(\tau + 1, 0, x_0)) - \Cov(x_t(\tau, 0, x_0)) \\
        =\;& \Cov(x_t(\tau + 1, \tau, x_{\tau}(\tau, 0, x_0)))   - \Cov(x_t(\tau, \tau, x_{\tau}(\tau, 0, x_0))).
    \end{align*}
    As a shorthand, define
    \begin{align*}
        \delta_{t, \tau}^{\prime} :=
        &x_t(\tau + 1, \tau, x_{\tau}(\tau, 0, x_0)) - x_t(\tau, \tau, x_{\tau}(\tau, 0, x_0)) \\
        =\;& \ePhi_{t, \tau+1}(\eK) ((\eA_{\tau} + \eB_{\tau} \eK_{\tau}) x_{\tau}(\tau, 0, x_0) + \eB_{\tau} \eK_{\tau} \delta_{\tau}) \\
        &\quad + \nlsum_{t_0 = \tau}^{t-1} \ePhi_{t, t_0+1}(\eK) w_{t_0} - \ePhi_{t, \tau}(\eK) x_{\tau}(\tau, 0, x_0)
        - \nlsum_{t_0 = \tau}^{t-1} \ePhi_{t, t_0+1}(\eK) w_{t_0} \\
        =\;& \ePhi_{t, \tau+1}(\eK) \eB_{\tau} \eK_{\tau} \delta_{\tau},
    \end{align*}
    where we couple the $x_{\tau}(\tau, 0, x_0)$ and $(w_{t_0})_{t_0\ge \tau}$ in the two trajectories generating $x_t(\tau + 1, \tau, x_{\tau}(\tau, 0, x_0))$ and $x_t(\tau, \tau, x_{\tau}(\tau, 0, x_0))$. 
    Then, by Lemma~\ref{lem:cov-diff},
    \begin{align*}
        &\|\Cov(x_t(\tau + 1, \tau, x_{\tau}(\tau, 0, x_0)))
        - \Cov(x_t(\tau, \tau, x_{\tau}(\tau, 0, x_0)))\|_2 \\
        \le\;& \|\Cov(\delta_{t, \tau}^{\prime})\|_2
        + 2 \|\Cov(\delta_{t, \tau}^{\prime})\|_2^{1/2} \cdot \|\Cov(x_t(\tau, \tau, x_{\tau}(\tau, 0, x_0)))\|_2^{1/2}.
    \end{align*}
    Since $\|\Xi_t(\eK)\|_2 = \bigo(a_2^T)$ by Claim~\ref{clm:xi-exi}, 
    \begin{align*}
        \|\Cov(\delta_{t, \tau}^{\prime})\|_2^{1/2}
        =\;& \|\ePhi_{t, \tau+1}(\eK) \eB_{\tau} \eK_{\tau} \Cov(\delta_{\tau})^{1/2} \|_2 \nonumber \\
        =\;& \bigo\big((a_2^{1/2})^{T-\tau} \cdot 1 \cdot \kappa^{T-\tau} \cdot \varphi \|\Xi_t(\eK)\|_2^{1/2}\big) \nonumber \\
        =\;& \bigo(\varphi a_2^{T/2} (\kappa a_2^{1/2})^{T-\tau}).
    \end{align*}
    By Lemma~\ref{lem:cov-sum}, 
    \begin{align*}
        \|\Cov(x_t(\tau, \tau, x_{\tau}(\tau, 0, x_0)))\|_2^{1/2}
        =\;& \Big\|\Cov\Big(\ePhi_{t, \tau}(\eK) x_{\tau}(\tau, 0, x_0)  + \nlsum_{t_0 = \tau}^{t-1} \ePhi_{t, t_0+1}(\eK) w_{t_0} \Big)\Big\|_2^{1/2} \nonumber \\
        \le\;& \|\ePhi_{t, \tau}(\eK) \eXi_{\tau}^{1/2}(\eK)\|_2 + \nlsum_{t_0 = \tau}^{t-1} \| \ePhi_{t, t_0 + 1}(\eK) \Sigma_{w_{t_0}}^{1/2} \|_2 \nonumber \\
        \overset{(i)}{=}\;& \bigo( a_2^{(T-\tau)/2} a_2^{T/2} + \nlsum_{t_0=\tau}^{t-1} a_2^{(T-t_0)/2} ) \nonumber \\
        =\;& \bigo(a_2^{T/2} a_2^{(T - \tau)/2}),
    \end{align*}
    where $(i)$ follows from $\|\eXi(\eK)\|_2 = \bigo(a_2^T)$ by Claim~\ref{clm:xi-exi}.
    Hence, by our assumption that $\varphi = \bigo((\kappa a_2)^{-T}) = \bigo(\kappa^{-T})$, 
    \begin{align*}
        \|\Cov(x_t(\tau + 1, 0, x_0)) - \Cov(x_t(\tau, 0, x_0))\|_2
        = \bigo(\varphi a_2^T (\kappa a_2)^{T-\tau}).
    \end{align*}
    Finally, using inverse telescoping, we have 
    \begin{align*}
        \|\eXi_t(\eK) - \eXi_t^{0}(\eK)\|_2
        =\;& \|\Cov(x_t(t, 0, x_0)) - \Cov(x_t(0, 0, x_0))\|_2 \\
        \le\;& \nlsum_{\tau = 0}^{t-1} \|\Cov(x_t(\tau + 1, 0, x_0)) - \Cov(x_t(\tau, 0, x_0))\|_2 \\
        =\;& \bigo( \varphi a_2^{T} \nlsum_{\tau = 0}^{t-1} (\kappa a_2)^{T-\tau} ) \\
        =\;& \bigo(\varphi a_2^{T} (\kappa a_2)^{T-t}),
    \end{align*}
    which completes the proof.
\endproof

If the input of linear regression has full-rank covariance, then by Lemma~\ref{lem:lr}, the system parameters can be fully identified. To certify the performance of the certainty equivalent optimal controller, we need the performance difference lemma~\citep{kakade2002approximately}. To specify it for our LQR problem, we define the value function $V_t^K(x)$ as the expected cumulative cost starting from state $x$ at time step $t$ under control $u_t = K_t x_t$ for $t \ge 0$.
Let $P_T^K = Q_T$, and for $t\ge 0$, define value matrices $P_t^K$ through the backward recursion
\begin{align}  \label{eqn:p-def}
    P_t^K := \sQ_t + K_t^{\top} \sR_t K_t + (\sA_t + \sB_t K_t)^{\top} P_{t+1}^K (\sA_t + \sB_t K_t).
\end{align}
Then, it is easy to verify that $V_t^K(x)$ satisfies
\begin{align}
    V_t^K(x) := \E\Big[\nlsum_{\tau = t}^{T} c_t \Big]
    = x^{\top} P_t^K x + \nlsum_{\tau = t}^{T-1} \E[w_{\tau}^{\top} P_{\tau+1}^K w_{\tau}].  \label{eqn:vtk}
\end{align}
The following lemma is the counterpart of~\citep[Lemma 10]{fazel2018global} in the LTV setting with system noises.

\begin{lemma}[Performance difference lemma]  \label{lem:pdl}
    Consider the finite-horizon linear quadratic control problem $((\sA_t, \sB_t, \sR_t)_{t=0}^{T-1}, (\sQ_t)_{t=0}^{T})$.
    Let $V_t^{u}(x)$ be the expected cumulative cost starting from state $x$ at time step $t$ under control $(u_t)_{t\ge 0}$.
    Define $E_t^{K} := (\sR_t + (\sB_t)^\top  P_{t+1}^{K} \sB_t) K_t + (\sB_t)^{\top} P_{t+1}^{K} \sA_t$ for $t \ge 0$. Then, for all $t \ge 0$ and $x \in \R^{d_x}$,
    \begin{align*}
        V_t^{u}(x) - V_t^{K}(x) = \nlsum_{\tau=t}^{T-1} \E[D_{\tau}^{K}(x_{\tau}^u, u_t)],
    \end{align*}
    where $x_{\tau}^u$ denotes the state at time step $\tau \ge t$ under control $(u_t)_{t\ge 0}$ starting from $x_t^u = x$, and for $0\le t \le T-1$, 
    \begin{align*}
        D_t^K(x, u) := 2 (u - K_t x)^{\top} E_t^{K} x
        + (u - K_t x)^{\top} (\sR_t + (\sB_t)^{\top} P_{t+1}^{K} \sB_t) (u - K_t x). 
    \end{align*}
\end{lemma}
\begin{proof}
    For $\tau \ge t$, let $c_{\tau}^u$ denote the cost at time step $t$ under control $(u_t)_{t\ge 0}$ starting from $x_t^u = x$.
    Then, by the definition of $V_t^{u}(x)$, we have
    \begin{align*}
        V_t^u(x) - V_t^K(x)
        =\;& \E\Big[\nlsum_{\tau=t}^{T} c_{\tau}(x_{\tau}^u, u_{\tau})\Big] - V_t^K(x) \\ 
        =\;& \E\Big[\nlsum_{\tau=t}^{T} \Big(c_{\tau}(x_{\tau}^u, u_{\tau}) + V_{\tau}^{K}(x_{\tau}^u) - V_{\tau}^{K}(x_{\tau}^u)\Big)\Big] - V_t^K(x) \\
        \overset{(i)}{=}\;& \E\Big[\nlsum_{\tau=t}^{T-1} \Big(c_{\tau}(x_{\tau}^u, u_{\tau}) + V_{\tau + 1}^{K}(x_{\tau + 1}^u) - V_{\tau}^{K}(x_{\tau}^u)\Big)\Big],
    \end{align*}
    where $(i)$ is due to $x = x_t^u$ and $c_T^u = \|x_T^u\|_{\sQ_T}^2 = V_{T}^K(x_T^u)$.
    Define 
    \begin{align*}
        D_t^K(x, u) := c_t(x, u) + \E[V_{t+1}^K(\sA_t x + \sB_t u + w_t)] - V_t^K(x).
    \end{align*}
    Then, by the law of total expectation, $V_t^{u}(x) - V_t^{K}(x) = \nlsum_{\tau=t}^{T-1} \E[D_{\tau}^{K}(x_{\tau}^u, u_t)]$. 
    Notice that 
    \begin{align*}
        V_t^K(x) = x^{\top} (\sQ_t + K_t^{\top} \sR_t K_t) x + \nlsum_{\tau=t}^{T-1}\E[w_{\tau}^{\top} P_{\tau+1}^K w_{\tau}]
        + x^{\top} (\sA_t + \sB_t K_t)^{\top} P_{t+1}^{K} (\sA_t + \sB_t K_t) x.
    \end{align*}
    Hence, we have 
    \begin{align*}
        D_t^K(x, u)
        =\;& x^{\top} \sQ_t x + u^{\top} \sR_t u + \nlsum_{\tau=t}^{T-1}\E[w_{\tau}^{\top} P_{\tau+1}^K w_{\tau}]
        + (\sA_t x + \sB_t u)^{\top} P_{t+1}^K (\sA_t x + \sB_t u) - V_t^K(x) \\
        =\;& 2 (u - K_t x)^{\top} E_t^K x
        + (u - K_t x)^{\top} (\sR_t + (\sB_t)^{\top} P_{t+1}^{K} \sB_t) (u - K_t x). 
    \end{align*}
\end{proof}

The following lemma shows that the certainty equivalent optimal controller for LQ  control in the full-rank case has a much better guarantee compared to the rank-deficient case.

\begin{lemma}[Linear quadratic control]  \label{lem:lq}
    Consider the finite-horizon time-varying linear dynamical system given by the first equation in~\eqref{eqn:po-ltv} with quadratic costs given by~\eqref{eqn:q-cost}, under Assumption~\ref{asp:cost-obs} with $\ell = 1$ and Assumption~\ref{asp:bounded} on the  relevant problem parameters.
    Let $(\sK_t)_{t=0}^{T-1}$ be the optimal feedback gains and $(\sP_t)_{t=0}^{T}$ be the solution to the RDE~\eqref{eqn:p-riccati} of system $((\sA_t, \sB_t, \sR_t)_{t=0}^{T-1}, (\sQ_t)_{t=0}^{T})$.
    Let $(\eK_t)_{t=0}^{T-1}$ be the optimal feedback gains and $(\eP_t)_{t=0}^{T}$ be the solution to the RDE~\eqref{eqn:p-riccati} of system $((\eA_t, \eB_t, \sR_t)_{t=0}^{T-1}, (\eQ_t)_{t=0}^{T})$, where $\|\eA_t - \sA_t\|_2 \le \eps$, $\|\eB_t - \sB_t\|_2 \le \eps$, $\|\eQ_t - \sQ_t\|_2 \le \eps$, and $(\eQ_t)_{t=0}^{T}$ are positive semidefinite.
    For any feedback gains $(K_t)_{t=0}^{T-1}$, let $J(\eK)$ denote the expected cumulative cost in system $((\sA_t, \sB_t, \sR_t)_{t=0}^{T-1}, (\sQ_t)_{t=0}^{T})$ under the state feedback control $u_t = K_t x_t$ for $t \ge 0$.
    Then, there exists a dimension-free constant $\eps_0 > 0$ with $\eps_0^{-1}$ depending polynomially on problem parameters, such that as long as $\eps \le \eps_0$, $\|\eP_t - \sP_t\|_2 = \bigo(\eps)$, $\|\eK_t - \sK_t\|_2 = \bigo(\eps)$ for all $t \ge 0$, and 
    \vspace*{-2pt}
    \begin{align*}
        J(\eK_t) - J(\sK_t) = \bigo((d_x \wedge d_u) T \eps^2).
    \end{align*}    
\end{lemma}

\vspace*{2pt}
\begin{proof}
    This problem is studied in~\citep{mania2019certainty} for the infinite-horizon LTI setting; here we extend their result to the finite-horizon LTV setting.

    We adopt the following compact formulation of a finite-horizon LTV system, as introduced in~\citep[Section 3]{zhang2021derivative}, to reduce our setting to the infinite-horizon LTI one:
    \begin{gather*}
        x = [x_0; \ldots; x_T], \quad
        u = [u_0; \ldots; u_{T-1}], \quad 
        w = [x_0; w_0; \ldots; w_{T-1}], \\
        \sA = \begin{bmatrix}
            0_{d_x \times d_x T} & 0_{d_x \times d_x} \\
            \diag(\sA_0, \ldots, \sA_{T-1}) & 0_{d_x T \times d_x}
        \end{bmatrix}, \quad 
        \sB = \begin{bmatrix}
            0_{d_x \times d_u T} \\
            \diag(\sB_0, \ldots, \sB_{T-1})
        \end{bmatrix}, \quad 
        \sQ = \diag(\sQ_0, \ldots, \sQ_T), \\ 
        \sR = \diag(\sR_0, \ldots, \sR_{T-1}), \quad 
        K = [\diag(K_0, \ldots, K_{T-1}), 0_{d_u T \times d_x}].
    \end{gather*}
    The control inputs using state feedback controller $(K_t)_{t=0}^{T-1}$ can be characterized by $u = K x$. Let $(P_t^K)_{t=0}^{T}$ be the associated value matrix starting from step $t$. Then 
    \begin{align*}
        P_t^K = (\sA_t + \sB_t K_t)^{\top} P_{t+1}^K (\sA_t + \sB_t K_t) + \sQ_t + K_t^{\top} \sR_t K_t,
    \end{align*}
    and $P^K := \diag(P_0^K, \ldots, P_T^K)$ is the solution to 
    \begin{align*}
        P^K = (\sA + \sB K)^{\top} P^K (\sA + \sB K) + \sQ + K^{\top} \sR K.
    \end{align*} 
    Similarly, the optimal value matrix 
    \begin{align*}
        \sP := \diag(\sP_0, \ldots, \sP_T),
    \end{align*}
    produced by the RDE~\eqref{eqn:p-riccati} in system $(\sA, \sB, \sR,\sQ)$,  
    satisfies 
    \begin{align*}
        \sP = (\sA)^{\top} (\sP - \sP \sB ((\sB)^{\top} \sP \sB
        + \sR)^{-1} (\sB)^{\top} \sP) \sA + \sQ.
    \end{align*}
    Let $\eP = \diag(\eP_0, \ldots, \eP_T)$ be the optimal cumulative cost matrices in system $(\eA, \eB, \eQ, \sR)$ 
    by the RDE~\eqref{eqn:p-riccati}.
    With a slight abuse of notation, define $\sK := [\diag(\sK_0, \ldots, \sK_{T-1}), 0_{d_u T \times d_x}]$, where $(\sK_t)_{t=0}^{T-1}$ is the optimal controller in system $(\sA, \sB, \sQ, \sR)$, and define $\eK$ similarly for system $(\eA, \eB, \eQ, \sR)$.
    By definition, $\sA$ is stable (hence, stabilizable), 
    $(\sA, \sQ)$ is observable in the sense of LTI systems, and $\sR \succ 0$.
    Therefore, by~\citep[Propositions 1 and 2]{mania2019certainty}, there exists a dimension-free constant $\eps_0 > 0$ with $\eps_0^{-1}$ depending polynomially on problem parameters such that as long as $\eps \le \eps_0$,
    \begin{align*}
        \|\eP - \sP\|_2 = \bigo(\eps), \quad \|\eK - \sK\|_2 = \bigo(\eps),
    \end{align*}
    and that $\eK$ stabilizes system $(\sA, \sB)$.
    By Lemma~\ref{lem:pdl},
    \begin{align*}
        J(\eK) - J(\sK)
        \overset{(i)}{=}\;& \nlsum_{t=0}^{T-1} \tr( \Sigma_t
        (\eK_t - \sK_t)^{\top} (\sR_t + (\sB_t)^{\top} \sP_{t+1} \sB_t)(\eK_t - \sK_t)) \\
        \overset{(ii)}{=}\;& \tr(\Sigma (\eK - \sK)^{\top} (\sR + (\sB)^{\top} \sP \sB) (\eK - \sK)),
    \end{align*}
    where in $(i)$ the term $E_t^{K} = 0$ in Lemma~\ref{lem:pdl} for $K = \sK$, and in $(ii)$ we define $\Sigma := \diag(\Sigma_0, \ldots, \Sigma_T)$ with $\Sigma_t$ being $\E[x_t x_t^{\top}]$ in system $(\sA, \sB)$ under state feedback controller $\eK$. 
    As a result,
    \begin{align*}
        J(\eK) - J(\sK)
        \le \|\Sigma\|_2 \|\sR + (\sB)^{\top} \sP \sB\|_2 \| \eK - \sK\|_F^2.
    \end{align*}
    Since $\eK$ stabilizes system $(\sA, \sB)$, $\|\Sigma\|_2 = \bigo(1)$.
    Since 
    \begin{align*}
        \|\eK - \sK\|_F \le ((d_x \wedge d_u) T )^{1/2} \|\eK - \sK\|_2, 
    \end{align*}
    we conclude that 
    \begin{align*}
        J(\eK) - J(\sK) = \bigo((d_x \wedge d_u) T \eps^2),
    \end{align*}
    and thus complete the proof. 
\end{proof}

Finally, we are ready to present the main lemma of this section below, which combines Lemmas~\ref{lem:rank-deficient} to~\ref{lem:lq} to provide an end-to-end guarantee for the certainty equivalent optimal controller in linear quadratic control with both rank-deficient and full-rank stages.

\begin{lemma}  \label{lem:e2e}
    Consider the finite-horizon linear time-varying system given by the first equation in~\eqref{eqn:po-ltv} with quadratic costs given by~\eqref{eqn:q-cost}, under Assumption~\ref{asp:stability}, Assumption~\ref{asp:cost-obs} and Assumption~\ref{asp:bounded} on the relevant problem parameters.
    Let $\Sigma_t := \Cov(x_t)$ under control $u_t \sim \gauss(0, \sigma_u^2 I)$ for all $t\ge 0$, which may not have full rank.
    Let $(\eA_t, \eB_t)_{t=0}^{T-1}, (\eQ_t)_{t=0}^{T}$ satisfy $\|(\eA_t - \sA_t)\Sigma_t\|_2 \le \eps_1$, $\|\eB_t - \sB_t\|_2 \le \eps_1$, $\|\eQ_t - \sQ_t\|_2 \le \eps_1$, $\|\eA_t\|_2 \le c$ for \mbox{dimension-free} constant $c > 0$, $\eQ_t \succ 0$ for $0\le t \le \ell - 1$, and $\|\eA_t - \sA_t\|_2 \le \eps_2$, $\|\eB_t - \sB_t\|_2 \le \eps_2$, $\|\eQ_t - \sQ_t\|_2 \le \eps_2$, $\eQ_t \succcurlyeq 0$ for $t\ge \ell$, where $\eps_1, \eps_2 \ge 0$.
    Let $(\sK_t)_{t=0}^{T-1}$ and $(\eK_t)_{t=0}^{T-1}$ be the optimal feedback gains of system $((\sA_t, \sB_t, \sR_t)_{t=0}^{T-1}, (\sQ_t)_{t=0}^{T})$ and system $((\eA_t, \eB_t, \sR_t)_{t=0}^{T-1}, (\eQ_t)_{t=0}^{T})$, respectively. 

    For any feedback gains $(K_t)_{t=0}^{T-1}$, let $J(K)$ denote the expected cumulative cost in system $((\sA_t, \sB_t, \sR_t)_{t=0}^{T-1}$, $(\sQ_t)_{t=0}^{T})$ under state feedback control $u_t = K_t x_t$ for $t \ge 0$, and $J^{\delta}(K)$ denote the expected cumulative cost in system $((\sA_t, \sB_t, \sR_t)_{t=0}^{T-1}, (\sQ_t)_{t=0}^{T})$ under control $u_t = K_t (x_t + \delta_t)$, where $\delta_t$ is a perturbation, for $t \ge 0$.
    Under control $u_t = K_t (x_t + \delta_t)$ for all $t \ge 0$, let $\Xi_t(K) := \Cov(x_t)$ in the  system $((\sA_t, \sB_t, \sR_t)_{t=0}^{T-1}, (\sQ_t)_{t=0}^{T})$, and assume that $\delta_t$ follows an arbitrary zero-mean Gaussian distribution, can be correlated with $x_t$, and satisfies $\|\Cov(\delta_t)\|_2^{1/2} \le \varphi_1 \max_{\tau \le t}\|\Xi_{\tau}(K)\|_2^{1/2}$ for $0\le t \le \ell - 1$ and  $\|\Cov(\delta_t)\|_2^{1/2} \le \varphi_2 \max_{\tau \le t}\|\Xi_{\tau}(K)\|_2^{1/2}$ for $t \ge \ell$.

    There exist dimension-free constants $\kappa, a_2, a_3, a_4 > 1$ that depend polynomially on $c$ and other problem parameters and dimension-free constant $\eps_0 > 0$ with $\eps_0^{-1}$ depending polynomially on problem parameters, such that under the conditions that 1) $\eps_1, \varphi_1$ are small enough to ensure $\|\eB_t\|_2, \|\eQ_t\|_2$ are of order $\bigo(1)$, $\lambda_{\min}(\eQ_t) = \Omega(1)$ for $0\le t \le \ell - 1$ and $\eps_1 + \varphi_1 = \bigo((\kappa a_2 + a_3^2)^{-T})$, and that 2) $\eps_2, \varphi_2$ are small enough to ensure $\eps_2 \le \eps_0$ and $\eps_2 + \varphi_2 = \bigo(\eps_1 + \varphi_1) = \bigo({T}^{-1})$, we have
    \begin{align*}
        J^{\delta}(\eK) - J(\sK) 
        = \bigo((\eps_1 + \varphi_1) d_x a_4^{\ell} + \eps_2^2 d_x (T - \ell)^2).
    \end{align*}
    \normalsize
\end{lemma}

\vspace*{2pt}

\begin{proof}
    Let us begin by introducing some notation.
    For feedback gains $(K_t)_{t=0}^{T-1}$ and $d_x \times d_x$ matrix $P_{\ell} \succ 0$, let $J_1^{\delta}(K, P_{\ell})$ denote the expected cumulative cost under $u_t = K_t (x_t + \delta_t)$ for $t \ge 0$, in system $((\sA_t, \sB_t, \sR_t)_{t=0}^{T-1}, (\sQ_t)_{t=0}^{T})$ for $0\le t \le \ell$, with $c_{\ell} = \|x_{\ell}\|_{P_{\ell}}^2$; let $J_1(K, P_{\ell})$ denote the corresponding expected cumulative cost under $u_t = K_t x_t$ for $t \ge 0$.
    Let $x_t^{\delta}$ denote the state at step $t$ in system $((\sA_t, \sB_t, \sR_t)_{t=0}^{T-1}, (\sQ_t)_{t=0}^{T})$ under control $u_t = \eK_t (x_t + \delta_t)$.
    For feedback gains $(K_t)_{t=0}^{T-1}$, let $J_2^{\delta}(K)$ denote the expected cumulative cost under $u_t = K_t (x_t + \delta_t)$ for $t \ge \ell$, in system $((\sA_t, \sB_t, \sR_t)_{t=0}^{T-1}, (\sQ_t)_{t=0}^{T})$ starting from $x_{\ell}^{\delta}$; let $J_2(K)$ denote the corresponding expected cumulative cost under $u_t = K_t x_t$ for $t \ge \ell$.

    Let $(P_{t}^{\eK})_{t=0}^{T}$ denote the value matrices in system $((\sA_t, \sB_t, \sR_t)_{t=0}^{T-1}, (\sQ_t)_{t=0}^{T})$ under control $u_t = \eK_t x_t$ for $t \ge 0$, given by recursion~\eqref{eqn:p-def}.
    By definition, we have
    \begin{align}  \label{eqn:j12}
        J^{\delta}(\eK)
        = J_{1}^{\delta}(\eK, P_{\ell}^{\eK}) - \E\big[\|x_{\ell}^{\delta}\|_{P_{\ell}^{\eK}}^2\big] + J_2^{\delta}(\eK).
    \end{align}
    By the property of the value function given in~\eqref{eqn:vtk},
    \begin{align*}
        J_2(\eK) = \E\big[\|x_{\ell}^{\delta}\|_{P_{\ell}^{\eK}}^2\big] + \nlsum_{t=\ell}^{T-1} \E[w_t^{\top} P_{t+1}^{\eK} w_t].
    \end{align*}
    Then, by substituting $\E\big[\|x_{\ell}^{\delta}\|_{P_{\ell}^{\eK}}^2\big]$ from the above equation into~\eqref{eqn:j12}, we have
    \begin{align*}
        J^{\delta}(\eK)
        = J_{1}^{\delta}(\eK, P_{\ell}^{\eK}) + \nlsum_{t=\ell}^{T-1} \E[w_t^{\top} P_{t+1}^{\eK} w_t]
        + J_2^{\delta}(\eK) - J_2(\eK).
    \end{align*}
    On the other hand, let $(\sP_t)_{t=0}^{T}$ and $(\eP_t)_{t=0}^{T}$ denote the optimal value matrices in system $((\sA_t, \sB_t, \sR_t)_{t=0}^{T-1}$, $(\sQ_t)_{t=0}^{T})$ and system $((\eA_t, \eB_t, \sR_t)_{t=0}^{T-1}, (\eQ_t)_{t=0}^{T})$, respectively, given by RDE~\eqref{eqn:p-riccati}.
    Similarly, we have
    \begin{align*}
        J(\sK) = J_{1}(\sK, \sP_{\ell}) + \nlsum_{t=\ell}^{T-1} \E[w_t^{\top} \sP_{t+1} w_t].
    \end{align*}
    Therefore,
    \begin{align*}
        & J^{\delta}(\eK) - J(\sK) \\
        =\;& J_{1}^{\delta}(\eK, P_{\ell}^{\eK}) + \nlsum_{t=\ell}^{T-1} \E[w_t^{\top} P_{t+1}^{\eK} w_t] + J_2^{\delta}(\eK) - J_2(\eK)
        - J_{1}(\sK, \sP_{\ell}) - \nlsum_{t=\ell}^{T-1} \E[w_t^{\top} \sP_{t+1} w_t] \\
        =\;& J_{1}^{\delta}(\eK, P_{\ell}^{\eK}) - J_1(\sK, \sP_{\ell}) + J_2^{\delta}(\eK) - J_2(\eK)
        + \nlsum_{t=\ell}^{T-1} \ipc{\Cov(w_t)}{P_{t+1}^{\eK} - \sP_{t+1}}_F \\
        =\;& \underbrace{J_1^{\delta}(\eK, \eP_{\ell}) - J_1(\sK, \sP_{\ell})}_{(a)} + \underbrace{J_{1}^{\delta}(\eK, P_{\ell}^{\eK}) - J_1^{\delta}(\eK, \eP_{\ell})}_{(b)}.
    \end{align*}
    Let $\eps_0$ be the one required in Lemma~\ref{lem:lq}. Then, applying Lemma~\ref{lem:lq} to the last $T - \ell$ steps, we have $\|\eP_{\ell} - \sP_{\ell}\|_2 = \bigo(\eps_2) = \bigo(\eps_1)$. 
    Hence, for the first $\ell$ steps, by Lemma~\ref{lem:rank-deficient}, there exists a dimension-free constant $a_4 > 1$ that depends polynomially on problem parameters, such that  
    \begin{align*}
        (a) = J_1^{\delta}(\eK, \eP_{\ell}) - J_1(\sK, \sP_{\ell}) = \bigo((\eps_1 + \varphi_1) d_x a_4^{\ell} ).
    \end{align*}
    By definition, 
    \begin{align*}
        (b) = J_{1}^{\delta}(\eK, P_{\ell}^{\eK}) - J_1^{\delta}(\eK, \eP_{\ell}) 
        = \E\Big[ \|x_{\ell}^{\delta}\|_{P_{\ell}^{\eK}}^2 - \|x_{\ell}^{\delta}\|_{\eP_{\ell}}^2 \Big] 
        = \E\Big[ \|x_{\ell}^{\delta}\|_{P_{\ell}^{\eK}}^2 - \|x_{\ell}^{\delta}\|_{\sP_{\ell}}^2\Big] + \E\Big[ \|x_{\ell}^{\delta}\|_{\sP_{\ell}}^2 - \|x_{\ell}^{\delta}\|_{\eP_{\ell}}^2\Big].
    \end{align*}
    Given feedback gains $(K_t)_{t=0}^{T-1}$, for $t > \tau$, define $\sPhi_{t, \tau}(K) := (\sA_{t-1} + \sB_{t-1} K_{t-1}) \cdots (\sA_{\tau} + \sB_{\tau} K_{\tau})$ and define $\sPhi_{\tau, \tau}(K) := I$. By the arguments for deriving~\eqref{eqn:phi-o1} in~\S\ref{sec:pf-xi-exi} for the case of $K = \sK$, we have that $\|\sPhi_{t, \ell}(\sK)\|_2 = \bigo(1)$ for all $t \ge \ell$. 
    Applying Lemma~\ref{lem:lq} to the last $T - \ell$ steps, we have $\|\eK_{t} - \sK_{t}\|_2 = \bigo(\eps_2)$ for $t \ge \ell$.
    Hence, using $\eps_2 = \bigo(T^{-1})$ and the expression that 
    \begin{align*}
        \|\sPhi_{t, \ell}(\eK)\|_2
        \le \big\|\sPhi_{t, \ell}(\sK) + \nlsum_{\tau=\ell}^{t-1} \sPhi_{t, \tau + 1}(\sK) \sB_{\tau} (\eK_{\tau} - \sK_{\tau}) \sPhi_{\tau, \ell}(\eK)\big\|_2,
    \end{align*}
    we can show by induction that $\|\sPhi_{t, \ell}(\eK)\|_2 = \bigo(1)$.
    By the performance difference lemma (Lemma~\ref{lem:pdl}), for any $x \in \R^{d_x}$, 
    \begin{align}
        \|x\|_{P_{\ell}^{\eK}}^2 - \|x\|_{\sP_{\ell}}^2 
        =\;& \nlsum_{t=\ell}^{T-1} x^{\top} (\sPhi_{t, \ell}(\eK))^{\top} (\eK_t - \sK_t)^{\top}
        (\sR_t + (\sB_t)^{\top} \sP_{t+1} \sB_t) (\eK_t - \sK_t) \sPhi_{t, \ell}(\eK) x, \label{eqn:pek-sp}
    \end{align}
    where the term $E_t^K = 0$ in Lemma~\ref{lem:pdl} for $K = \sK$.
    Hence,
    \begin{align*}
        &\|x_{\ell}^{\delta}\|_{P_{\ell}^{\eK}}^2 - \|x_{\ell}^{\delta}\|_{\sP_{\ell}}^2 \\
        \le\;& d_x \|\Cov(x_{\ell}^{\delta})\|_2 \cdot \Big\|\nlsum_{t=\ell}^{T-1}  (\sPhi_{t, \ell}(\eK))^{\top}
        (\eK_t - \sK_t)^{\top} (\sR_t + (\sB_t)^{\top} \sP_{t+1} \sB_t) (\eK_t - \sK_t) \sPhi_{t, \ell}(\eK) \Big\|_2 \\
        =\;& \bigo(\eps_2^2 d_x a_2^{\ell} (T - \ell)),
    \end{align*}
    where we note that $(\sR_t, \sB_t, \sP_{t+1})_{t\ge \ell}$ have $\bigo(1)$ operator norms.
    On the other hand, $\|\sP_{\ell} - \eP_{\ell}\|_2 = \bigo(\eps_2)$, following which we have 
    \begin{align*}
        \E\Big[ \|x_{\ell}^{\delta}\|_{\sP_{\ell}}^2 - \|x_{\ell}^{\delta}\|_{\eP_{\ell}}^2\Big] = \bigo(\eps_2 d_x a_2^{\ell}).
    \end{align*}
    Since $\eps_2 = \bigo(T^{-1})$, combining the above two bounds yields 
    \begin{align*}
        (b) = \bigo(\eps_2^2 d_x a_2^{\ell} (T - \ell) + \eps_2 d_x a_2^{\ell})
        = \bigo(\eps_2 d_x a_2^{\ell}).
    \end{align*}
    Note that we can also bound $(b)$ using the simulation lemma~\citep{kearns2002near}, which yields a worse bound.

    Next, since $J_2(\eK) \ge J_2(\sK)$ due to the optimality of $\sK$, we have 
    \begin{align*}
        (c) = J_2^{\delta}(\eK) - J_2(\eK) \le J_2^{\delta}(\eK) - J_2(\sK).
    \end{align*}
    By Lemma~\ref{lem:pdl},
    \begin{align*}
        J_2^{\delta}(\eK) - J_2(\sK)
        \overset{(i)}{=}\;& \nlsum_{t=\ell}^{T-1} \ipc{\sR_t + (\sB_t)^{\top} \sP_{t+1} \sB_t}{\Cov((\eK_t - \sK_t) x_t^{\delta} + \eK_t \delta_t)}_F \\
        \overset{(ii)}{=}\;& \bigo\Big(\nlsum_{t=\ell}^{T-1} d_x (\eps_2^2 \|\Cov(x_t^{\delta})\|_2 + \|\Cov(\delta_t)\|_2)\Big) \\
        \overset{(ii)}{=}\;& \bigo((\eps_2^2 + \varphi_2^2) d_x a_2^{\ell}),
    \end{align*}
    where in $(i)$ the term $E_t^K = 0$ in Lemma~\ref{lem:pdl} for $K = \sK$, $(ii)$ follows from that for $t \ge \ell$, $\|\eK_t\|_2 = \bigo(\|\sK_t\|_2)$ due to $\|\eK_t - \sK_t\|_2 = \bigo(\eps_2)$ and that $\|\sK_t\|_2 = \bigo(1)$ as argued in the proof of Lemma~\ref{lem:rank-deficient}, and in $(iii)$ we use $\|\Cov(x_{t}^{\delta})\|_2 = \bigo(a_2^{\ell})$ for $t \le \ell$, $\|\Cov(\delta_t)\|_2 \le \varphi_2^2 \max_{\tau \le t}\|\Cov(x_{\tau}^{\delta})\|_2$ for $t \ge \ell$, and that due to Lemma~\ref{lem:cov-sum} and $\varphi_2 = \bigo(T^{-1})$, for $t \ge \ell$, 
    \begin{align*}
        \|\Cov(x_{t}^{\delta})\|_2^{1/2}
        =\;& \Big\|\Cov\Big(\sPhi_{t, \ell}(\eK) x_{\ell}^{\delta}   + \nlsum_{\tau = \ell}^{t-1} \sPhi_{t, \tau + 1}(\eK) (w_{\tau} + \sB_{\tau} \eK_{\tau} \delta_{\tau})\Big) \Big\|_2^{1/2} \\
        =\;& \bigo(a_2^{\ell/2} + (t - \ell) \varphi_2 a_2^{\ell/2}) = \bigo(a_2^{\ell/2}).
    \end{align*}
    Hence,
    \begin{align*}
        (c) = \bigo\big((\eps_2^2 + \varphi_2^2) d_x a_2^{\ell} (T-\ell)\big).
    \end{align*}

    For $(d)$, by~\eqref{eqn:pek-sp}, we have $\|P_t^{\eK} - \sP_t\|_2 = \bigo(\eps_2^2 (T - \ell))$ for $t \ge \ell$. Hence,
    \begin{align*}
        (d) \le d_x \nlsum_{t=\ell}^{T-1} \|\Cov(w_t)\|_2 \cdot \|P_{t+1}^{\eK} - \sP_{t+1}\|_2
        = \bigo(\eps_2^2 d_x (T - \ell)^2).
    \end{align*}

    Finally, combining the above bounds on $(a), (b), (c), (d)$, we have  
    \begin{align*}
        J^{\delta}(\eK) - J(\sK)
        =\;& \bigo( (\eps_1 + \varphi_1) d_x a_4^{\ell}) + \bigo(\eps_2 d_x a_2^{\ell})
        + \bigo((\eps_2^2 + \varphi_2^2) d_x a_2^{\ell} (T-\ell)) + \bigo(\eps_2^2 d_x (T - \ell)^2) \\
        =\;& \bigo((\eps_1 + \varphi_1) d_x a_4^{\ell} + \eps_2^2 d_x (T - \ell)^2),
    \end{align*}
    which completes the proof. 
\end{proof}

\vspace*{-10pt}
\subsection{Proof of Theorem~\ref{thm:main}}
\label{sec:main-proof}

Now we are ready to prove Theorem~\ref{thm:main}.
Algorithm~\ref{alg:corel} has three main steps: state representation function learning (Algorithm~\ref{alg:repl}), latent model identification (Algorithm~\ref{alg:sys-id}), and planning by RDE~\eqref{eqn:p-riccati}. Correspondingly, the analysis below is organized around these three steps.

\vspace{3pt}
\noindent\textbf{Recovery of the state representation function.}
By Proposition~\ref{prp:multi-step-cost}, with $u_t = \gauss(0, \sigma_u^2 I)$ for all $0\le t\le T-1$ to system~\eqref{eqn:po-ltv}, the $k$-step cumulative cost starting from step $t$, where $k = 1$ for $0\le t\le \ell-1$ and $k = m\wedge (T - t + 1)$ for $\ell \le t\le T$, is given under the normalized parameterization by
\begin{align*}
    \oc_t := c_t + c_{t+1} + \ldots + c_{t+k-1}
    = \|\spz_t\|^2 + \nlsum_{\tau = t}^{t+k-1} \|u_{\tau}\|_{\sR_{\tau}}^2 + \ob_t + \ooe_t,
\end{align*}
where $\ob_t = \bigo(k)$, and $\ooe_t$ is a zero-mean subexponential random variable with $\|\ooe_t\|_{\psi_1} = \bigo(k d_x^{1/2})$.

Then, it is clear that Algorithm~\ref{alg:repl} recovers latent states $\spz_t = \spM_t h_t$, where $0\le t\le T$, by a combination of quadratic regression and low-rank approximate factorization. Below we drop the superscript prime for notational simplicity, but keep in mind that the optimal state representation function $(\sM_t)_{t=0}^{T}$, the corresponding latent states $(\sz_t)_{t=0}^{T}$, and the true latent system parameters $((\sA_t, \sB_t)_{t=0}^{T-1}, (\sQ_t)_{t=0}^{T})$ are all with respect to the \mbox{normalized parameterization}.

For all $0\le t \le T$, let $\sN_t := (\sM_t)^{\top} \sM_t$. 
By Assumption~\ref{asp:bounded}, $\|\sN_t\|_2 = \bigo(1)$. Since $h_t = [y_{0:t}; u_{0:(t-1)}]$ and $(\Cov(y_t))_{t=0}^{T}, (\Cov(u_t))_{t=0}^{T-1}$ have $\bigo(1)$ operator norms due to Assumptions~\ref{asp:stability} and~\ref{asp:bounded}, by Lemma~\ref{lem:cov-cat}, $\Cov(h_t) = \E[h_t h_t^{\top}] = \bigo(t)$.
Hence, for quadratic regression, Lemma~\ref{lem:qr} (detailed later) and the union bound over time steps guarantee that as long as $n \ge a T^4 (d_y + d_u)^4 \log(a T^3 (d_y + d_u)^2 / p) \log (T/p)$ for an absolute constant $a > 0$, with probability at least $1 - p$, for all $0\le t\le \ell - 1$,
\begin{align*}
    \| \eN_t - \sN_t \|_F
    = \bigo(k (t \vee 1)^3 (d_y + d_u)^2 d_x^{1/2} n^{-1/2} \log^{1/2}(\ell/p));
\end{align*}
and for all $\ell\le t\le T$,
\begin{align*}
    \| \eN_t - \sN_t \|_F
    = \bigo(k (t \vee 1)^3 (d_y + d_u)^2 d_x^{1/2} n^{-1/2} \log^{1/2}(T/p)).
\end{align*}

Now let us bound the distance between $\eM_t$ and $\sM_t$.
Recall that we use $d_h = (t+1) d_y + t d_u$ as a shorthand.
The estimate $\eN_t$ may not be positive semidefinite. Let $\eN_t = U \Lambda U^{\top}$ be its eigenvalue decomposition, with the $d_h \times d_h$ matrix $\Lambda$ having descending diagonal elements.
Let $\Sigma:= \max(\Lambda, 0)$. Then $\tN_t =: U \Sigma U^{\top}$ is the projection of $\eN_t$ onto the positive semidefinite cone~\citep[Section 8.1.1]{boyd2004convex} with respect to the Frobenius norm. Since $\sN_t \succcurlyeq 0$,  
\begin{align*}
    \|\tN_t - \sN_t\|_F \le \|\eN_t - \sN_t\|_F.
\end{align*}
The low-rank factorization is essentially a combination of low-rank approximation and matrix factorization. 
For $d_h < d_x$, $\tM_t =: [\Sigma^{1/2} U^{\top}; 0_{(d_x - d_h) \times d_h}]$ constructed by padding zeros satisfies $\tM_t^{\top} \tM_t = \tN_t$.
For $d_h \ge d_x$, construct $\tM_t =: \Sigma_{d_x}^{1/2} U_{d_x}^{\top}$, where $\Sigma_{d_x}$ is the top-left $d_x \times d_x$ block in $\Sigma$ and $U_{d_x}$ consists of $d_x$ columns of $U$ from the left. By the Eckart-Young-Mirsky theorem, $\tM_t^{\top} \tM_t = U_{d_x} \Sigma_{d_x} U_{d_x}^{\top}$ satisfies
\begin{align*}
    \|\tM_t^{\top} \tM_t - \tN_t\|_F = \min_{N \in \R^{d_h \times d_h}, \rank(N) \le d_x} \|N - \tN_t\|_F.
\end{align*}
Hence, 
\begin{align*}
    \|\tM_t^{\top} \tM_t - \sN_t\|_F
    \le \|\tM_t^{\top} \tM_t - \tN_t\|_F + \|\tN_t - \sN_t\|_F
    \le 2 \|\tN_t - \sN_t\|_F \le 2 \|\eN_t - \sN_t\|_F.
\end{align*}
From now on, we consider $0\le t \le \ell-1$ and $\ell\le t \le T$ separately, since, as we will show, in the latter case we have the additional condition that $\rank(\sM_t) = d_x$. 

For $0 \le t \le \ell-1$, $k = 1$. By Lemma~\ref{lem:mat-fac} (detailed later), there exists a $d_x \times d_x$ orthogonal matrix $S_t$, such that $\|\tM_t - S_t \sM_t \|_2 \le \|\tM_t - S_t \sM_t \|_F = \bigo((t\vee 1)^{3/2} (d_y + d_u) d_x^{3/4} n^{-1/4} \log^{1/4}(\ell/p))$. 
Recall that $\eM_t = \svt{}(\tM_t, \theta) = \big(\I_{[\theta, +\infty)}(\Sigma_{d_x}^{1/2}) \odot \Sigma_{d_x}^{1/2}\big) U_{d_x}^{\top}$. Then, 
\begin{align*}
    \|\eM_t - \tM_t\|_2 = \|(\I_{[\theta, +\infty)}(\Sigma_{d_x}^{1/2}) \odot \Sigma_{d_x}^{1/2} - \Sigma_{d_x}^{1/2}) U_{d_x}^{\top} \|_2 
    \le \|\I_{[\theta, +\infty)}(\Sigma_{d_x}^{1/2}) \odot \Sigma_{d_x}^{1/2} - \Sigma_{d_x}^{1/2}\|_2 \le \theta.
\end{align*}
Hence, the distance between $\eM_t$ and $\sM_t$ satisfies 
\begin{align}
    \|\eM_t - S_t \sM_t\|_2
    =\;& \|\eM_t - \tM_t + \tM_t - S_t \sM_t\|_2  \nonumber \\ 
    \le\;& \| \eM_t - \tM_t\|_2 + \|\tM_t - S_t \sM_t\|_2  \nonumber \\
    \le\;& \theta + \bigo((t\vee 1)^{3/2} (d_y + d_u) d_x^{3/4} n^{-1/4} \log^{1/4}(\ell/p))  \nonumber \\
    =\;& \bigo(\ell^{3/2} (d_y + d_u) d_x^{3/4} n^{-1/4} \log^{1/4}(\ell/p)),  \label{eqn:m-err-init}
\end{align}
by the choice of $\theta = \Theta(\ell^{3/2} (d_y + d_u) d_x^{3/4} n^{-1/4} \log^{1/4}(\ell/p))$.
As a result, since $\ez_t = \eM_t h_t$ and $\sz_t = \sM_t h_t$,
\begin{align*}
    \big\|\nlsum_{i=1}^{n} (\ez_t^{(i)} - S_t (\sz_t)^{(i)}) (\ez_t^{(i)} - S_t (\sz_t)^{(i)})^{\top}\big\|_2
    = \|\eM_t - S_t \sM_t\|_2^2 \cdot \big\|\nlsum_{i=1}^{n} h_t^{(i)} (h_t^{(i)})^{\top} \big\|_2.
\end{align*}

By~\citep[Theorem 6.1]{wainwright2019high}, with probability at least $1 - p$, as long as $n \ge (t\vee 1)(d_y + d_u) + \log(1/p)$,
\begin{align*}
    \big\|\nlsum_{i=1}^{n} h_t^{(i)} (h_t^{(i)})^{\top} \big\|_2 = \bigo(\|\Cov(h_t)\|_2 n) = \bigo(tn).
\end{align*}
Hence, define $\epsilon_z := \ell^{2} (d_y + d_u) d_x^{3/4} n^{-1/4} \log^{1/4}(1/p)$. Then, with probability at least $1 - p$, 
\begin{align*}
    \big\|\nlsum_{i=1}^{n} (\ez_t^{(i)} - S_t (\sz_t)^{(i)}) (\ez_t^{(i)} - S_t (\sz_t)^{(i)})^{\top}\big\|_2
    \le \bigo(\epsilon_z^2 n).
\end{align*}

On the other hand, threshold $\theta$ ensures a lower bound on $\sigma_{\min}^{+}(\nlsum_{i=1}^{n} \ez_t^{(i)} (\ez_t^{(i)})^{\top})$. As shown in the proof of Lemma~\ref{lem:lr-rd}, this property is important for ensuring the system identification outputs $\eA_t$ and $\eB_t$ have bounded norms.
Specifically,
\begin{align*}
    \sigma_{\min}^{+}(\nlsum_{i=1}^{n} \ez_t^{(i)} (\ez_t^{(i)})^{\top})
    = \sigma_{\min}^{+}(\nlsum_{i=1}^{n} \eM_t h_t^{(i)} (h_t^{(i)})^{\top} \eM_t^{\top})
    \overset{(i)}{\ge} (\sigma_{\min}^{+}(\eM_t))^2 \sigma_{\min}\big(\nlsum_{i=1}^{n} h_t^{(i)} (h_t^{(i)})^{\top}\big),
\end{align*}
where $(i)$ is due to~\citep[H.1.i, Chapter 9]{marshall1979inequalities} and that the minimum positive singular values of $\nlsum_{i=1}^{n} \eM_t h_t^{(i)} (h_t^{(i)})^{\top} \eM_t^{\top}$ and $\eM_t$ are both their $k$th singular value, where $k$ is the rank of $\eM_t$, by considering the singular value decomposition of $\eM_t$. By standard concentration in analyzing linear regression~\citep{wainwright2019high}, with probability at least $1-p$, as long as $n \ge a \log(1/p)$ for some absolute constant $a > 0$, $\sigma_{\min}\big(\nlsum_{i=1}^{n} h_t^{(i)} (h_t^{(i)})^{\top}\big) = \Omega(n)$. Hence, 
\begin{align*}
    \sigma_{\min}^{+}(\nlsum_{i=1}^{n} \ez_t^{(i)} (\ez_t^{(i)})^{\top}) \ge \theta^2 \cdot \Omega(n) = \Omega(\theta^2 n).
\end{align*}

For $\ell\le t\le T$, $k \le m$. By Proposition~\ref{prp:full-rank-cov}, $\Cov(\sz_t)$ has full rank, $\sigma_{\min}(\Cov(\sz_t)) = \Omega(\nu^2)$ and $\sigma_{\min}(\sM_t) = \Omega(\nu t^{-1/2})$. 
Recall that for $\ell \le t \le T$, we simply set $\eM_t = \tM_t$.
Then, by Lemma~\ref{lem:mat-fac-fr}, there exists a $d_x \times d_x$ orthogonal matrix $S_t$, \mbox{such that}
\begin{align}
    \|\eM_t - S_t \sM_t\|_F 
    =\;& \bigo(\sigma_{\min}^{-1}(\sM_t)) \|\tM_t^{\top} \tM_t - \sN_t\|_F  \nonumber \\
    =\;& \bigo(\nu^{-1} m t^{7/2} (d_y + d_u)^2 d_x^{1/2} n^{-1/2} \log^{1/2}(T/p)),  \label{eqn:m-err}
\end{align}
which is also an upper bound on $\|\eM_t -  S_t \sM_t\|_2$.
As a result,
\begin{align*}
    \big\|\nlsum_{i=1}^{n} (\ez_t^{(i)} - S_t (\sz_t)^{(i)}) (\ez_t^{(i)} - S_t (\sz_t)^{(i)})^{\top}\big\|_2
    =\;& \|\eM_t - S_t \sM_t\|_2^2 \cdot \big\|\nlsum_{i=1}^{n} h_t^{(i)} (h_t^{(i)})^{\top} \big\|_2 \\
    =\;& \bigo(\|\eM_t - S_t \sM_t\|_2^2 t n).
\end{align*}

Consider 
\begin{align*}
    \big\|\nlsum_{i=1}^{n} \ez_t^{(i)} (\ez_t^{(i)})^{\top} - S_t (\sz_t)^{(i)} ((\sz_t)^{(i)})^{\top} S_t^{\top}\big\|_2
    =\;& \nlsum_{i=1}^{n} \big(\big\|\ez_t^{(i)}\big\| + \big\|(\sz_t)^{(i)}\big\|\big) \cdot \big\|\ez_t^{(i)} - S_t (\sz_t)^{(i)}\big\| \\
    \overset{(i)}{=}\;& n d_x^{1/2} \log^{1/2}(n/p) \cdot \big\|\ez_t^{(i)} - S_t (\sz_t)^{(i)}\big\|,
\end{align*}
where $(i)$ holds with probability $1 - p$.
Hence, there exists an absolute constant $c > 0$, such that if $n \ge c \nu^{-6} m^2 T^6 (d_y + d_u)^5 d_x^{2} \log^2(n/p)$,
\begin{align*}
    &\sigma_{\min}\big(\nlsum_{i=1}^{n} \ez_t^{(i)} (\ez_t^{(i)})^{\top}\big) \\
    \ge\;& \sigma_{\min}\big(\nlsum_{i=1}^{n}(\sz_t)^{(i)} ((\sz_t)^{(i)})^{\top}\big)
    - \big\|\nlsum_{i=1}^{n} \ez_t^{(i)} (\ez_t^{(i)})^{\top} - S_t (\sz_t)^{(i)} ((\sz_t)^{(i)})^{\top} S_t^{\top}\big\| \\
    \ge\;& \sigma_{\min}\big(\nlsum_{i=1}^{n}(\sz_t)^{(i)} ((\sz_t)^{(i)})^{\top}\big) / 2 = \Omega(\nu^2 n).
\end{align*}
This is needed for the analysis of estimating $\eA_t, \eB_t$ in the next step.
For the analysis of estimating $\sQ_t$ by quadratic regression in Lemma~\ref{lem:qr}, we need the sub-Gaussianity of $\|\ez_t - S_t \sz_t\|_2$. 
Notice that 
\begin{align*}
    \|\ez_t - S_t \sz_t\|_2 \le\;& \|\eM_t - S_t \sM_t\|_2 \|h_t\|.
\end{align*}
Since the $\ell_2$-norm of $h_t = [y_{0:t}; u_{0:(t-1)}]$ is sub-Gaussian with its mean and sub-Gaussian norm bounded by $\bigo((t (d_y + d_u))^{1/2})$, we have that $\|\ez_t - S_t \sz_t\|_2$ is sub-Gaussian with its mean and sub-Gaussian norm bounded by 
\begin{align*}
    \bigo(\nu^{-1} m t^{4} (d_y + d_u)^{5/2} d_x^{1/2} n^{-1/2} \log^{1/2}(T/p)).
\end{align*}

\vspace{3pt}
\noindent\textbf{Identification of the latent model.}
The latent dynamics $(\sA_t, \sB_t)_{t=0}^{T-1}$ is identified in Algorithm~\ref{alg:sys-id}, using $(\ez_t^{(i)})_{i=1, t=0}^{N, T}$ produced by Algorithm~\ref{alg:repl}, by ordinary least squares.
Recall from Proposition~\ref{prp:state-est-lds} that $\sz_{t+1} = \sA_t \sz_t + \sB_t u_t + L_{t+1} i_{t+1}$. With the transforms on $\sz_t$ and $\sz_{t+1}$, we have 
\begin{align*}
    S_{t+1} \sz_{t+1} = (S_{t+1} \sA_{t} S_{t}^{\top}) S_t \sz_t + S_{t+1} \sB_t u_t + S_{t+1} L_{t+1} i_{t+1},
\end{align*}
and $(\sz_{t})^{\top} \sQ_{t} \sz_t = (S_t \sz_t)^{\top} S_t \sQ_t S_t^{\top} S_t \sz_t$.
Under control $u_t \sim \gauss(0, \sigma_u^2 I_{d_u})$ for $0\le t\le T-1$, we know that $\sz_t$ is a zero-mean Gaussian random vector; so is $S_t \sz_t$. Let $\sSigma_t = \E[S_t \sz_t (\sz_t)^{\top} S_t^{\top}]$ be its covariance. By Assumptions~\ref{asp:stability} and ~\ref{asp:bounded}, $(\sSigma_t)_{t=0}^{T}$ and $(L_{t+1} i_{t+1})_{t=0}^{T-1}$ have $\bigo(1)$ operator norms.

For $0\le t\le \ell-1$, we need a bound for the estimation error of rank-deficient and perturbed linear regression. 
By Lemma~\ref{lem:lr-rd}, there exists an absolute constant $c > 0$, such that as long as $n \ge c(d_x + d_u + \log(1/p))$, with probability at least $1 - p$, 
\begin{align*}
    &\|([\eA_t, \eB_t ] - [S_{t+1} \sA_t S_t^{\top}, S_{t+1} \sB_t]) \diag((\sSigma_t)^{1/2}, \sigma_u I_{d_u}) \|_2 \\
    =\;& \bigo\big( \beta^{-1} (1 + \theta^{-1} \epsilon_z) \epsilon_z + \epsilon_z  + n^{-1/2} (d_x + d_u + \log(1/p))^{1/2}\big).
\end{align*}
By substituting the expressions for $\epsilon_z$ and $\theta$, we have
\begin{align*}
    \| (\eA_t - S_{t+1} \sA_t S_t^{\top}) (\sSigma_t)^{1/2} \|_2
    = \bigo((1 + \beta^{-1})\ell^{5/2} (d_y + d_u) d_x^{3/4} n^{-1/4} \log^{1/4}(\ell/p)),
\end{align*}
which is also a bound on $\| \eB_t - S_{t+1} \sB_t \|_2$. Meanwhile, by Claim~\ref{clm:a-norm}, $\|\eA_t\|_2 = \bigo(\ell)$.

For $\ell\le t \le T-1$, by Lemma~\ref{lem:lr} and the union bound over time steps, with probability at least $1 - p$, 
\begin{align*}
    &\|[\eA_t, \eB_t ] - [S_{t+1} \sA_t S_t^{\top}, S_{t+1} \sB_t] \|_2 \\
    =\;& \bigo\big( (\nu^{-1} + \sigma_u^{-1}) \big(t^{1/2} \|\eM_t - S_t \sM_t\|_2 + n^{-1/2} (d_x + d_u + \log(T/p))^{1/2}\big)\big).
\end{align*}
By substituting the expression for $\|\eM_t - S_t \sM_t\|_2$ in~\eqref{eqn:m-err}, we have
\begin{align*}
    \| \eA_t - S_{t+1} \sA_t S_t^{\top} \|_2
    = \bigo\Big((1 + \nu^{-2}) m t^4 (d_y + d_u)^{2} d_x^{1/2} n^{-1/2} \log^{1/2}(T/p) \Big),
\end{align*}
which is also a bound on $\| \eB_t - S_{t+1} \sB_t \|_2$.

By Assumption~\ref{asp:cost-obs}, $(\sQ_t)_{t=0}^{\ell-1}$ and $\sQ_T$ are positive definite; they are identity matrices under the normalized parameterization. For $(\sQ_t)_{t=\ell}^{T-1}$, which may not be positive definite, we identify them in Algorithm~\ref{alg:sys-id} by~\eqref{eqn:q-id}. 
By Lemma~\ref{lem:qr}, 
\begin{align*}
    & \|\tQ_t - S_t \sQ_t S_t^{\top}\|_F \\
    =\;& \bigo(d_x^2 \log^2(n/p) \cdot m t^4 (d_y + d_u)^{5/2} d_x^{1/2} n^{-1/2} \log^{1/2}(T/p)
    + d_x^2 d_x^{1/2} n^{-1/2} \log^{1/2}(T/p)) \\
    =\;& \bigo( m t^4 (d_y + d_u)^{5/2} d_x^{5/2} n^{-1/2} \log^{5/2}(nT/p)),
\end{align*}
where we have used $\nu = \Omega(1)$.
By construction, $\eQ_t$ is the projection of $\tQ_t$ onto the positive semidefinite cone with respect to the Frobenius norm~\citep[Section 8.1.1]{boyd2004convex}. Since $S_t \sQ_t S_t^{\top} \succcurlyeq 0$,  we have 
\begin{align*}
    \|\eQ_t - S_t \sQ_t S_t^{\top}\|_F \le \|\tQ_t - S_t \sQ_t S_t^{\top}\|_F.
\end{align*}

\vspace{3pt}
\noindent\textbf{Certainty equivalent linear quadratic control.}
The last step of Algorithm~\ref{alg:corel} is to compute the optimal controller in the estimated system $((\eA_t, \eB_t, \sR_t)_{t=0}^{T-1}, (\eQ_t)_{t=0}^{T})$ by RDE~\eqref{eqn:p-riccati}.

Since $\sz_t = \E[x_t \given h_t]$, the residual $\sz_t - x_t$ is independent of $h_t$ and $\sz_t$. Hence, we have
\begin{align*}
    \E[x_t x_t^{\top} ] 
    = \E[\sz_t (\sz_t)^{\top}] + \E[(x_t - \sz_t)(x_t - \sz_t)^{\top}],
\end{align*}
where $\E[(x_t - \sz_t)(x_t - \sz_t)^{\top}]$ is a constant matrix regardless of actions $(u_\tau)_{\tau \le t}$.
Hence, 
\begin{align*}
    \E[x_t^{\top} \sQ_t x_t] - \E[(\sz_t)^{\top} \sQ_t \sz_t] = \ipc{\sQ_t}{\E[(x_t - \sz_t)(x_t - \sz_t)^{\top}]}_{F}
\end{align*}
is a constant, and it suffices to consider the latent state space for studying the policy suboptimality gap. 

In the latent state space, action $u_t = \eK_t \eM_t h_t = \eK_t (S_t \sz_t + \delta_t)$, where $\delta_t := (\eM_t - S_t \sM_t) h_t$ is a Gaussian noise vector correlated with $\sz_t$. 
Since $h_0 = y_0 = \sC_0 \sz_0 + \sC_0 (x_0 - \sz_0) + v_0$, we have $\|\Cov(y_{0})\|_2 = \bigo(\|\Cov(\sz_0)\|_2)$ and $\|\Cov(\delta_0)\|_2 = \bigo(\|\eM_0 - S_0 \sM_0\|_2^2 \|\Cov(\sz_0)\|_2)$.
Define $\iota_0 := \|\eM_0 - S_0 \sM_0\|_2$ and $\iota_t := t^{1/2} (1 + \max_{\tau \le t - 1} \|\eK_{\tau}\|_2) \|\eM_t - S_t \sM_t\|_2$ for $1\le t\le T-1$. 
Below we use induction to show that as long as $(\|\eM_t - S_t \sM_t\|_2)_{0\le t\le T-1}$ are small enough, $\|\Cov(\delta_t)\|_2 \le \iota_t^2 \max_{\tau \le t} \|\Cov(\sz_{\tau})\|_2$ for all $0\le t\le T-1$.

Suppose that $\|\Cov(\delta_{t})\|_2 \le \iota_t^2 \max_{\tau \le t} \|\Cov(\sz_{\tau})\|_2$ for all $t \le t_1$.
For any $t \ge 1$, by the definition of the innovation term $i_{t}$ and the dynamics of $\sz_t$ in Proposition~\ref{prp:state-est-lds},  we have 
\begin{align*}
    y_{t} = \sC_{t} (\sz_{t} - \sL_{t} i_{t}) + i_{t} = \sC_{t} \sz_{t} + (I - \sC_{t} \sL_{t}) i_{t}.
\end{align*}
Hence, for any $t \ge 0$, $\|\Cov(y_{t})\|_2 = \bigo(\|\Cov(\sz_{t})\|_2)$ as $\|\Cov(i_t)\|_2 = \bigo(1)$.
For any $t \ge 0$, we also have
\begin{align*}
    \|\Cov(u_t)\|_2 =\ \|\Cov(\eK_t (S_t \sz_t + \delta_t))\|_2
    = \bigo(\|\eK_t\|_2^2 (\|\Cov(\sz_t)\|_2 + \|\Cov(\delta_t)\|_2)).
\end{align*}

By Lemma~\ref{lem:cov-cat}, we further have
\begin{align*}
    \|\Cov(h_{t_1 + 1})\|_2
    \le\;& \nlsum_{t=0}^{t_1 + 1} \|\Cov(y_t)\|_2 + \nlsum_{t=0}^{t_1} \|\Cov(u_t)\|_2 \\
    =\;& \nlsum_{t=0}^{t_1 + 1} \bigo(\|\Cov(\sz_t)\|_2)
    + \nlsum_{t=0}^{t_1} \bigo(\|\eK_t\|_2^2 (\|\Cov(\sz_t)\|_2 + \|\Cov(\delta_t)\|_2)).
\end{align*}
By the induction hypothesis on $\|\Cov(\delta_t)\|_2$, we have
\begin{align*}
    \nlsum_{t=0}^{t_1} \|\eK_t\|_2^2  (\|\Cov(\sz_t)\|_2 + \|\Cov(\delta_t)\|_2))
    \le (1 + \max_{t\le t_1} \iota_t^2) (t_1 + 1) \max_{t \le t_1} \|\eK_{t}\|_2^2  \max_{t \le t_1} \|\Cov(\sz_{t})\|_2.
\end{align*}
Hence, as long as $\|\eM_t - S_t \sM_t\|_2 = \bigo(t^{-1/2} (1 + \max_{\tau \le t - 1} \|\eK_{\tau}\|_2)^{-1})$ for all $t \le t_1$ such that $\max_{t\le t_1} \iota_t^2 = \bigo(1)$, we complete the induction by
\begin{align*}
    &\|\Cov(\delta_{t_1 + 1})\|_2 \\
    =\;& \bigo(\|\eM_{t_1 + 1} - S_{t_1 + 1} \sM_{t_1 + 1}\|_2^2) \|\Cov(h_{t_1 + 1})\|_2 \\
    =\;& \bigo\Big(\|\eM_{t_1 + 1} - S_{t_1 + 1} \sM_{t_1 + 1}\|_2^2 \cdot \big(\nlsum_{t=0}^{t_1 + 1}\|\Cov(\sz_{t})\|_2
    + (t_1 + 1) (1 + \max_{t \le t_1} \|\eK_{t}\|_2^2) \max_{t \le t_1} \|\Cov(\sz_{t})\|_2 \big)\Big) \\
    =\;& \bigo( \iota_{t_1 + 1}^2 \max_{t\le t_1 + 1} \|\Cov(\sz_t)\|_2 ),
\end{align*}
where we use the definition of $\iota_{t_1 + 1}$.

Now let us bound the operator norms of $(K_t)_{t=0}^{T-1}$. For $0\le t \le \ell - 1$, from the backward recursion in~\eqref{eqn:p-riccati} and~\eqref{eqn:K-riccati}, there exists a dimension-free constant $\kappa > 1$ that depends polynomially on $\ell$ and other problem parameters, such that $\|\eK_t\|_2 = \bigo(\kappa^{\ell - t})$. For $\ell \le t \le T - 1$, let $\eps_2$ be the maximum of $\|\eA_t - S_{t+1}\sA_t S_t^{\top}\|_2, \|\eB_t - S_{t+1}\sB_t\|_2, \|\eQ_t - S_t\sQ_t S_t^{\top}\|_2$ for all $t \ge \ell$. Then,
\begin{equation}  \label{eqn:eps_2}
    \begin{aligned}
        \eps_2 = \bigo((1 + \nu^{-2}) m T^4
        (d_y + d_u)^{5/2} d_x^{5/2} n^{-1/2} \log^{5/2}(nT/p)).    
    \end{aligned}
\end{equation}
Applying Lemma~\ref{lem:lq}
~to the last $T - \ell$ steps, we have $\|\eK_{t} - \sK_{t} S_t^{\top}\|_2 = \bigo(\eps_2)$. Hence, we have \mbox{$\|\eK_t\|_2 = \bigo(\|\sK_t\|_2) = \bigo(1)$}.

We have shown in~\eqref{eqn:m-err-init} that for $0\le t\le \ell - 1$,
\begin{align*}
    \|\eM_t - S_t \sM_t\|_2 
    = \bigo(\ell^{3/2} (d_y + d_u) d_x^{3/4} n^{-1/4} \log^{1/4}(\ell/p)),
\end{align*}
and in~\eqref{eqn:m-err} that for $t\ge \ell$,
\begin{align*}
    \|\eM_t - S_t \sM_t\|_2
    = \bigo(\nu^{-1} m t^{7/2} (d_y + d_u)^2 d_x^{1/2} n^{-1/2} \log^{1/2}(T/p)).
\end{align*}
Let $n$ be large enough such that $\|\eM_t - S_t \sM_t\|_2 = \bigo(t^{-1/2} (1 + \max_{\tau \le t - 1} \|\eK_t\|_2)^{-1})$ is satisfied for all $t \ge 1$, and the conditions in Lemma~\ref{lem:e2e} are satisfied, with $\eps_2$ satisfying~\eqref{eqn:eps_2} and $\eps_1$, $\varphi_1$, $\varphi_2$ satisfying
\begin{align*}
    &\eps_1 =  \bigo((1 + \beta^{-1})\ell^{5/2}(d_y + d_u) d_x^{3/4} n^{-1/4} \log^{1/4}(\ell/p)), \\
    &\varphi_1 = \bigo(\ell^{2} \kappa^{\ell} (d_y + d_u) d_x^{3/4} n^{-1/4} \log^{1/4}(\ell/p)), \\
    &\varphi_2 = \bigo(\nu^{-1} \kappa^{\ell} m T^4 (d_y + d_u)^2 d_x^{1/2} n^{-1/2} \log^{1/2}(T/p)).
\end{align*}
Hence, by Lemma~\ref{lem:e2e}, there exists dimension-free constant $a_4 > 1$ depending polynomially on $\ell$ and other problem parameters, such that with probability at least $1 - p$, 
\begin{align*}
    J(\epi) - J(\spi)
    =\;& \bigo\Big((1 + \beta^{-1})\ell^{5/2} (d_y + d_u) d_x^{7/4} n^{-1/4} \log^{1/4}(\ell/p) (\kappa a_4)^{\ell} \\
    &\quad + (1 + \nu^{-4}) m^2 T^{10} (d_y + d_u)^{5} d_x^{6} n^{-1} \log^{5}(nT/p) \Big),
\end{align*}
where $J(\epi)$, $J(\spi)$ correspond to $J^{\delta}(\eK)$, $J(\sK)$ in Lemma~\ref{lem:e2e}, respectively, in the latent state space.
\vspace*{-3pt}
\endproof

\vspace*{-4pt}
\section{Concluding remarks}

We examined the cost-driven state representation learning methods in time-varying LQG control. With a finite-sample analysis, we showed that a direct, cost-driven state representation learning algorithm effectively solves LQG. In the analysis, we revealed the importance of using multi-step cumulative costs as the supervision signal, and the dependence on $\ell$, the controllability index, due to early-stage insufficient excitement of the system. 
For the same reason, our policy suboptimality gap has a significantly worsened 
dependence on $\ell$, as the latent model can only be partially identified, and the learned latent model may not be stable. Hence, an immediate question is how we can learn a stable latent model to improve the dependence on $\ell$.
A major limitation of our method is the use of history-based state representation functions; recovering the recursive Kalman filter would be ideal. 

In Part II of this work, we will explore how the cost-driven state representation learning approach performs in the infinite-horizon LTI setting, as well as discuss more opportunities that this work has opened up for future research.

\vspace*{-4pt}
\section*{Acknowledgment}
YT, SS acknowledge partial support from NSF CCF-2112665 (TILOS AI Research Institute). KZ acknowledges the support from the Simons-Berkeley Research Fellowship, the U.S. Army Research Office grant W911NF-24-1-0085, and the NSF CAREER Award 2443704. 
The authors also thank Xiang Fu, Horia Mania, and Alexandre Megretski for helpful discussions.

\bibliographystyle{plainnat}
\bibliography{latent}

\appendix
\section*{Appendix}

\begin{lemma}  \label{lem:subexp}
    Let $x\sim \gauss(0, \Sigma_x)$ and $y\sim \gauss(0, \Sigma_y)$ be $d$-dimensional Gaussian random vectors. Let $Q$ be a $d\times d$ positive semidefinite matrix. Then, there exists an absolute constant $c > 0$ such that 
    \vspace*{-2pt}
    \begin{align*}
        \|\ipc{x}{y}_{Q}\|_{\psi_1} \le c \sqrt{d} \|Q\|_2 \sqrt{\|\Sigma_x\|_2 \|\Sigma_y\|_2}.
    \end{align*}
\end{lemma}

\begin{proof} 
    Since $|\ipc{x}{y}_{Q}| = |x^{\top} Q y| \le \|x\| \|Q\|_2 \|y\|$, 
    \begin{align*}
        \|\ipc{x}{y}_{Q}\|_{\psi_1} =\;& \||\ipc{x}{y}_{Q}|\|_{\psi_1} \\
        \le\;& \|\|x\| \|Q\|_2 \|y\|\|_{\psi_1} = \|Q\|_2 \cdot \|\|x\| \|y\|\|_{\psi_1}.
    \end{align*}
    For $x\sim \gauss(0, \Sigma_x)$, we know that $\|x\|$ is sub-Gaussian. Actually, by writing $x = \Sigma_x^{1/2} g$ for $g \sim \gauss(0, I)$, we have 
    \begin{align*}
        \|\|x\|\|_{\psi_2} =\;& \|\|\Sigma_x^{1/2} g\|\|_{\psi_2} \\ 
        \le\;& \|\|\Sigma_x^{1/2}\|_2 \|g\|\|_{\psi_2} = \|\Sigma_x^{1/2}\|_2 \|\|g\|\|_{\psi_2}.
    \end{align*}
    The distribution of $\|g\|_2$ is known as $\chi$ distribution, and we know that $\|\|g\|\|_{\psi_2} = c' d^{1/4}$ for an absolute constant $c' > 0$ (see, e.g.,~\citep{wainwright2019high}).
    Hence, $\|\|x\|\|_{\psi_2} \le c' d^{1/4} \|\Sigma_x\|_2^{1/2}$. 

    Similarly, $\|\|y\|\|_{\psi_2} \le c' d^{1/4} \|\Sigma_y\|_2^{1/2}$. Since $\|\|x\| \|y\|\|_{\psi_1} \le \|\|x\|\|_{\psi_2} \|\|y\|\|_{\psi_2}$ (see, e.g.,~\citep[Lemma 2.7.7]{vershynin2018high}),  we have
    \begin{align*}
        \|\ipc{x}{y}_{Q}\|_{\psi_1} \le (c')^2 \sqrt{d} \|Q\|_2 \sqrt{|\Sigma_x\|_2 \|\Sigma_y\|_2}.
    \end{align*}
    Taking $c = (c')^2$ concludes the proof.
\end{proof}

\begin{lemma}  \label{lem:cov-cat}
    Let $x, y$ be random vectors of dimensions $d_x, d_y$, respectively, defined on the same probability space. Then, $\|\E[[x; y][x; y]^{\top}]\|_2 \le \|\E[xx^{\top}]\|_2 + \|\E[yy^{\top}\|_2$.
\end{lemma}

\vspace*{2pt}
\begin{proof}
    Let $\E[[x; y][x; y]^{\top}] = D D^{\top}$ be a factorization of the positive semidefinite matrix $\E[[x; y][x; y]^{\top}]$, where $D \in \R^{(d_x + d_y) \times (d_x + d_y)}$. Let $D_x$ and $D_y$ be the matrices consisting of the first $d_x$ rows and the last $d_y$ rows of $D$, respectively. Then, 
    \begin{align*}
        \E[[x; y][x; y]^{\top}] = D D^{\top} = [D_x; D_y] [D_x^{\top}, D_y^{\top}]
        = \begin{bmatrix}
            D_x D_x^{\top} & D_x D_y^{\top} \\ 
            D_y D_x^{\top} & D_y D_y^{\top}
        \end{bmatrix}.
    \end{align*}
    Hence, $\E[xx^{\top}] = D_x D_x^{\top}$ and $\E[yy^{\top}] = D_y D_y^{\top}$. The proof is completed by noticing that 
    \begin{align*}
        \|\E[[x; y][x; y]^{\top}]\|_2 = \|D^{\top} D\|_2 =\;& \|[D_x^{\top}, D_y^{\top}] [D_x; D_y]\|_2 \\
        =\;& \|D_x^{\top} D_x + D_y^{\top} D_y\|_2 \\
        \le\;& \|D_x^{\top} D_x\|_2 + \|D_y^{\top} D_y\|_2
        = \|\E[xx^{\top}]\|_2 + \|\E[yy^{\top}]\|_2.
    \end{align*}
\end{proof}

\begin{lemma}  \label{lem:cs-rvec}
    Let $x$ and $y$ be random vectors defined on the same probability space. Then, $\|\E[xy^{\top}]\|_2^2 \le \|\E[xx^{\top}]\|_2 \cdot \|\E[yy^{\top}]\|_2$.
\end{lemma}

\begin{proof}
    Let $d_x, d_y$ be the dimensions of the values of $x, y$, respectively.
    For any vectors $v\in \R^{d_x}$, $w \in \R^{d_y}$, by the Cauchy-Schwarz inequality,  
    \begin{align*}
        (v^{\top} \E[x y^{\top}] w)^2
        =\;& (\E[v^{\top} x y^{\top} w])^2 \\
        \le\;& \E[(v^{\top} x)^2] \cdot \E[(w^{\top} y)^2] \\
        =\;& \E[v^{\top} x x^{\top} v] \cdot \E[w^{\top} y y^{\top} w] \\
        =\;& (v^{\top} \E[x x^{\top}] v) \cdot (w^{\top} \E[y y^{\top}] w).
    \end{align*}

    Taking the maximum over $v$, $w$ on both sides subject to $\|v\|, \|w\| \le 1$ gives 
    \begin{align*}
        \|\E[x y^{\top}]\|_2^2 \le \| \E[xx^{\top}] \|_2 \cdot \|\E[yy^{\top}] \|_2,
    \end{align*}
    which completes the proof.
\end{proof}

\begin{lemma}  \label{lem:cov-diff}
    Let $\sx$ and $\delta$ be $d$-dimensional random vectors defined on the same probability space. Define $x := \sx + \delta$. Then, 
    \begin{align*}
        \|\E[x x^{\top}] - \E[\sx (\sx)^{\top}]\|_2
        \le \|\E[\delta \delta^{\top}]\|_2 + 2 \|\E[\sx (\sx)^{\top}]\|^{1/2}_2 \|\E[\delta \delta^{\top}]\|_2^{1/2}.
    \end{align*}
\end{lemma}
\vspace*{2pt}
\begin{proof}
    Since $x = [I_d, I_d] [\sx; \delta]$, we have 
    \begin{align*}
        \E[x x^{\top}] =\;& [I_d, I_d] \begin{bmatrix}
            \E[\sx (\sx)^{\top}] & \E[\sx \delta^{\top}] \\
            \E[\delta (\sx)^{\top}] & \E[\delta \delta^{\top}]
            \end{bmatrix} \begin{bmatrix}
                I_d \\
                I_d
            \end{bmatrix} 
            =\;& \E[\sx (\sx)^{\top}] + \E[\delta \delta^{\top}] + \E[\sx \delta^{\top}] + \E[\delta (\sx)^{\top}].
    \end{align*}
    Hence, by Lemma~\ref{lem:cs-rvec}, we have
    \begin{align*}
        \|\E[x x^{\top}] - \E[\sx (\sx)^{\top}]\|_2
        \le \|\E[\delta \delta^{\top}]\|_2 + 2 \|\E[\sx (\sx)^{\top}]\|^{1/2}_2 \|\E[\delta \delta^{\top}]\|^{1/2}_2,
    \end{align*}
    which completes the proof.
\end{proof}

\begin{lemma}  \label{lem:cov-sum}
    Let $(x_i)_{i=1}^{n}$ be $d$-dimensional random vectors defined on the same probability space. Then, 
    \begin{align*}
        \Big\|\E\Big[\Big(\nlsum_{i=1}^{n} x_i\Big) \Big(\nlsum_{i=1}^{n} x_i\Big)^{\top}\Big]\Big\|_2^{1/2} \le \nlsum_{i=1}^{n} \|\E[x_i x_i^{\top}]\|_2^{1/2}.
    \end{align*}
\end{lemma}

\vspace*{2pt}
\begin{proof}
    By the triangle inequality, 
    \begin{align*}
        \Big\|\E\Big[\Big(\nlsum_{i=1}^{n} x_i\Big) \Big(\nlsum_{i=1}^{n} x_i\Big)^{\top}\Big]\Big\|_2
        = \Big\|\E\Big[\nlsum_{1\le i, j\le n} x_i x_j^{\top}\Big]\Big\|_2 
        \le \nlsum_{1\le i, j\le n} \|\E[x_i x_j^{\top}]\|_2.
    \end{align*}
    By Lemma~\ref{lem:cs-rvec}, for any $1\le i, j \le n$, 
    \begin{align*}
        \|\E[x_i x_j^{\top}]\|_2 \le \|\E[x_i x_i^{\top}]\|_2^{1/2} \cdot \|\E[x_j x_j^{\top}]\|_2^{1/2}.
    \end{align*}
    Hence, we further have 
    \begin{align*}
        \Big\|\E\Big[\Big(\nlsum_{i=1}^{n} x_i\Big) \Big(\nlsum_{i=1}^{n} x_i\Big)^{\top}\Big]\Big\|_2
        \le\;& \nlsum_{1\le i, j \le n} \|\E[x_i x_i^{\top}]\|_2^{1/2} \cdot \|\E[x_j x_j^{\top}]\|_2^{1/2} \\
        =\;& \Big( \nlsum_{i=1}^{n} \|\E[x_i x_i^{\top}]\|_2^{1/2} \Big)^2,
    \end{align*}
    which completes the proof.
\end{proof}

\end{document}